\setlist[itemize]{leftmargin=*,noitemsep, topsep=-1pt}
\setlist[enumerate]{leftmargin=*,noitemsep, topsep=-1pt}
\let\footnote=\endnote
\newcommand{\rad}{\mathsf{rad}}
\newcommand{\AAA}{\mathcal{A}}
\newcommand{\SSS}{\mathcal{S}}
\renewcommand{\Re}{\mathbb{R}}
\newcommand{\dev}{\mathsf{var}}
\newcommand{\reward}{\textsf{R}}
\newcommand{\ie}{\textit{i.e.}}
\newcommand{\sw}{\texttt{SWUCRL2-CW} algorithm}
\newcommand{\borl}{\texttt{BORL} algorithm}
\newcommand{\E}{\mathbb{E}}
\renewcommand{\H}{\mathcal{H}}
\DeclareSymbolFont{extraup}{U}{zavm}{m}{n}
\DeclareMathSymbol{\varheart}{\mathalpha}{extraup}{86}
\DeclareMathSymbol{\vardiamond}{\mathalpha}{extraup}{87}
\newtheorem{property}{Property}
\newtheorem{theorem}{Theorem}
\newtheorem{lemma}{Lemma}
\newtheorem{definition}{Definition}
\newtheorem{remark}{Remark}
\newtheorem{proposition}{Proposition}
\newtheorem{assumption}{Assumption}
\icmltitlerunning{Non-Stationary Reinforcement Learning}
\begin{document}

\twocolumn[
\icmltitle{Reinforcement Learning for Non-Stationary Markov Decision Processes:\\ The Blessing of (More) Optimism}

\begin{icmlauthorlist}
	\icmlauthor{Wang Chi Cheung}{to}
	\icmlauthor{David Simchi-Levi}{goo}
	\icmlauthor{Ruihao Zhu}{goo}
\end{icmlauthorlist}

\icmlaffiliation{to}{Department of Industrial Systems Engineering and Management, National University of Singapore}
\icmlaffiliation{goo}{MIT Institute for Data, Systems, and Society}

\icmlcorrespondingauthor{Wang Chi Cheung}{isecwc@nus.edu.sg}
\icmlcorrespondingauthor{David Simchi-Levi}{dslevi@mit.edu}
\icmlcorrespondingauthor{Ruihao Zhu}{rzhu@mit.edu}

\icmlkeywords{Machine Learning, ICML}


\vskip 0.3in
]



\printAffiliationsAndNotice{}  

\begin{abstract}
	We consider un-discounted reinforcement learning (RL) in  Markov decision processes (MDPs) under drifting non-stationarity, \ie, both the reward and state transition distributions are allowed to evolve over time, as long as their respective total variations, quantified by suitable metrics, do not exceed certain \emph{variation budgets}. We first develop the Sliding Window Upper-Confidence bound for Reinforcement Learning with Confidence Widening (\texttt{SWUCRL2-CW}) algorithm, and establish its dynamic regret bound when the variation budgets are known. In addition, we propose the Bandit-over-Reinforcement Learning (\texttt{BORL}) algorithm to adaptively tune the \sw~to achieve the same dynamic regret bound, but  in a \emph{parameter-free} manner, \ie, without knowing the variation budgets. Notably, learning non-stationary MDPs via the conventional optimistic exploration technique presents a unique challenge absent in existing (non-stationary) bandit learning settings. We overcome the challenge by a novel confidence widening technique that incorporates additional optimism. 
\end{abstract}

\section{Introduction}
\label{sec:intro}
Consider a general sequential decision-making framework, where a decision-maker (DM) interacts with an initially unknown environment iteratively. At each time step, the DM first observes the current state of the environment, and then chooses an available action. After that, she receives an instantaneous random reward, and the environment transitions to the next state. The DM aims to design a policy that maximizes its cumulative rewards, while facing the following challenges:
\begin{itemize}
	\item \textbf{Endogeneity:} At each time step, the reward follows a reward distribution, and the subsequent state follows a state transition distribution. Both distributions depend (solely) on the current state and action, which are influenced by the policy. Hence, the environment can be fully characterized by a discrete time Markov decision process (MDP).
	\item \textbf{Exogeneity:}  The reward and state transition distributions vary (independently of the policy) across time steps, but the total variations are bounded by the respective variation budgets.
	\item \textbf{Uncertainty:} Both the reward and state transition distributions are initially unknown to the DM. 
	\item \textbf{Bandit/Partial Feedback:} The DM can only observe the reward and state transition resulted by the current state and action in each time step.
\end{itemize}

\begin{table*}
	\renewcommand{\arraystretch}{2}
	\begin{center}
		\begin{tabular}{|c|c|c|} 
			\hline
			&Stationary&Non-stationary\\
			\hline
			MAB&OFU \citep{ABF02}&OFU + Forgetting \citep{BGZ14,CSLZ19a}\\
			\hline			
			RL&OFU \citep{JakschOA10}&\textbf{Extra optimism + Forgetting (This paper)}\\
			\hline 
		\end{tabular}
		\caption{Summary of algorithmic frameworks of stationary and non-stationary online learning settings.}\label{table:summary}
	\end{center}
\end{table*}
It turns out that many applications, such as real-time bidding in advertisement (ad) auctions, can be captured by this framework \citep{CaiRZ17,FlajoletJ17,BalseiroG19,GuoHXZ19,HanZW20}. Besides, this framework can be used to model sequential decision-making problems in transportation \citep{ZhangW18,QinTY19}, wireless network \citep{ZhouB15,ZhouGB16}, consumer choice modeling \citep{XuY20}, ride-sharing \citep{Taylor18,GurvichLM18,BimpikisCS19,KanoriaQ19}, healthcare operations \citep{ShortreedLLSPM10}, epidemic control \citep{NowzariPP16,KissMS17},  and inventory control \citep{HuhR09,B17,ZhangCS18,AgrawalJ19,ChenSD19}.

There exists numerous works in sequential decision-making that considered part of the four challenges.  The traditional stream of research \citep{ABF02,BC12,LS18} on stochastic \emph{multi-armed bandits} (MAB) focused on the interplay between uncertainty and bandit feedback (\ie, challenges 3 and 4), and \citep{ABF02} proposed the classical \emph{Upper Confidence Bound} (UCB) algorithm. Starting from \citep{BurnetasK97,TewariB08,JakschOA10}, a volume of works (see Section \ref{sec:related}) have been devoted to \emph{reinforcement learning} (RL) in MDPs \citep{SuttonB18}, which further involves endogeneity. RL in MDPs incorporate challenges 1,3,4, and stochastic MAB is a special case of MDPs when there is only one state. In the absence of exogeneity, the reward and state transition distributions are invariant across time, and these three challenges can be jointly solved by the \emph{Upper Confidence bound for Reinforcement Learning} (UCRL2) algorithm \citep{JakschOA10}. 

The UCB and UCRL2 algorithms leverage the \emph{optimism in face of uncertainty} (OFU) principle to select actions iteratively based on the entire collections of historical data. However, both algorithms quickly deteriorate when exogeneity emerge since the environment can change over time, and the historical data becomes obsolete. To address the challenge of exogeneity,  \citep{GarivierM11} considered the \emph{piecewise-stationary} MAB environment where the reward distributions remain unaltered over certain time periods and change at unknown time steps. Later on, there is a line of research initiated by \citep{BGZ14} that studied the general \emph{non-stationary} MAB environment \citep{BGZ14,CSLZ19,CSLZ19a}, in which the reward distributions can change arbitrarily over time, but the total changes (quantified by a suitable metric) is upper bounded by a \emph{variation budget} \citep{BGZ14}. The aim is to minimize the \emph{dynamic regret}, the optimality gap compared to the cumulative rewards of the sequence of optimal actions. Both the (relatively restrictive) piecewise-stationary MAB and the general non-stationary MAB settings consider the challenges of exogeneity, uncertainty, and partial feedback (\ie, challenges 2, 3, 4), but endogeneity (challenge 1) are not present. 

In this paper, to address all four above-mentioned challenges, we consider RL in \emph{non-stationary} MDPs where bot the reward and state transition distributions can change over time, but the total changes (quantified by suitable metrics) are upper bounded by the respective variation budgets. We note that in \citep{JakschOA10}, the authors also consider the intermediate RL in \emph{piecewise-stationary} MDPs. Nevertheless, we first demonstrate in Section \ref{sec:challenge}, and then rigorously show in Section \ref{sec:cw_disc} that simply adopting the techniques for non-stationary MAB \citep{BGZ14,CSLZ19,CSLZ19a} or RL in piecewise-stationary MDPs \citep{JakschOA10} to RL in non-stationary MDPs may result in poor dynamic regret bounds.
\subsection{Summary of Main Contributions}
Assuming that, during the $T$ time steps, the total variations of the reward and state transition distributions are bounded (under suitable metrics) by the variation budgets $B_r~(>0)$ and $B_p~(>0),$ respectively, we design and analyze novel algorithms for RL in non-stationary MDPs. Let $D_{\max},$ $S,$ and $A$ be respectively the maximum diameter (a complexity measure to be defined in Section \ref{sec:model}), number of states, and number of actions in the MDP. Our main contributions are:
\begin{itemize}
	\item We develop the Sliding Window UCRL2 with Confidence Widening (\texttt{SWUCRL2-CW}) algorithm. When the variation budgets are known, we prove it attains a $\tilde{O}\left( D_{\max} (B_r + B_p)^{1/4} S^{2/3}A^{1/2}T^{3/4}\right)$ dynamic regret bound via a budget-aware analysis.
	\item We propose the Bandit-over-Reinforcement Learning (\texttt{BORL}) algorithm that tunes the \sw~adaptively, and retains the same $\tilde{O}\left(D_{\max}(B_r+B_p)^{{1}/{4}}S^{{2}/{3}}A^{{1}/{2}}T^{{3}/{4}}\right)$ dynamic regret bound without knowing the variation budgets.
	\item We identify an unprecedented challenge for RL in non-stationary MDPs with conventional optimistic exploration techniques: existing algorithmic frameworks for non-stationary online learning (including non-stationary bandit and RL in piecewise-stationary MDPs) \citep{JakschOA10,GarivierM11,CSLZ19} typically estimate unknown parameters by averaging historical data in a ``forgetting" fashion, and construct the \emph{tightest} possible confidence regions/intervals accordingly. They then optimistically search for the most favorable model within the confidence regions, and execute the corresponding optimal policy. However, we first demonstrate in Section \ref{sec:challenge}, and then rigorously show in Section \ref{sec:cw_disc} that in the context of RL in non-stationary MDPs, the diameters induced by the MDPs in the confidence regions constructed in this manner can grow wildly, and may result in unfavorable dynamic regret bound. We overcome this with our novel proposal of extra optimism via the confidence widening technique (alternatively, in \citep{CheungSLZ20arxiv}, an extended version of the current paper, the authors demonstrate that one can leverage special structures on the state transition distributions in the context of single item inventory control with fixed cost to bypass this difficulty of exploring time-varying environments). A summary of the algorithmic frameworks for stationary and non-stationary online learning settings are provided in Table \ref{table:summary}.
\end{itemize} 
\section{Problem Formulation}\label{sec:model}
In this section, we introduce the notations to be used throughout paper, and introduce the learning protocol for our problem of RL in non-stationary MDPs.
\subsection{Notations}
Throughout the paper, all vectors are column vectors, unless specified otherwise. We define $[n]$ to be the set $\{1,2,\ldots,n\}$ for any positive integer $n.$ We denote $\mathbf{1}[\cdot]$ as the indicator function. For $p\in [1, \infty]$, we use $\|x\|_p$ to denote the $p$-norm of a vector $ x\in\Re^d.$ We denote $x\vee y$ and $x\wedge y$ as the maximum and minimum between $x,y\in\Re,$ respectively. We adopt the asymptotic notations $O(\cdot),\Omega(\cdot),$ and $\Theta(\cdot)$ \citep{CLRS09}. When logarithmic factors are omitted, we use $\tilde{O}(\cdot),\tilde{\Omega}(\cdot),$ $\tilde{\Theta}(\cdot),$ respectively. With some abuse, these notations are used when we try to avoid the clutter of writing out constants explicitly.
\subsection{Learning Protocol}
\textbf{Model Primitives:} An instance of non-stationary MDP is specified by the tuple $(\SSS, \AAA, T, r, p)$. The set $\SSS$ is a finite set of states. 
The collection $\AAA = \{\AAA_s\}_{s\in \SSS}$ contains a finite action set $\AAA_s$ for each state $s\in \SSS$.  We say that $(s, a)$ is a state-action pair if $s\in \SSS, a\in \AAA_s$. We denote $S = |\SSS |$, $A =(\sum_{s\in \SSS} |\AAA_s|)/S.$ We denote $T$ as the total number of time steps, and denote $r = \{r_t\}^T_{t=1}$ as the sequence of mean rewards. For each $t$, we have $r_t = \{r_t(s, a)\}_{s\in \SSS, a\in \AAA_s}$, and $r_t(s, a)\in [0, 1]$ for each state-action pair $(s, a)$.
In addition, we denote $p = \{p_t\}^T_{t=1}$ as the sequence of state transition distributions. For each $t$, we have $p_t = \{p_t(\cdot | s, a)\}_{s\in \SSS, a\in \AAA_s}$, where $p_t(\cdot | s, a)$ is a probability distribution over $\SSS$ for each state-action pair $(s, a)$. 

\noindent\textbf{Exogeneity:} The quantities $r_t$'s and $p_t$'s vary across different $t$'s in general. Following \citep{BGZ14}, we quantify the variations on $r_t$'s and $p_t$'s in terms of their respective \emph{variation budgets} $B_r, B_p~(>0)$:
\begin{align}
B_r &= \sum^{T - 1}_{t=1} B_{r, t}, &B_p= \sum^{T - 1}_{t=1} B_{p, t},\label{eq:variation_budgets}
\end{align}
where $B_{r, t} = \max_{s\in \SSS, a\in \AAA_s}\left| r_{t + 1}(s, a) - r_t(s, a)  \right|$ and $B_{p, t} = \max_{s\in \SSS, a\in \AAA_s }\left\| p_{t + 1}(\cdot | s, a) - p_t ( \cdot  |s, a)  \right\|_1.$
We emphasize although $B_r$ and $B_p$ might be used as inputs by the DM, individual $B_{r,t}$'s and $B_{p,t}$'s are unknown to the DM throughout the current paper. 

\noindent\textbf{Endogeneity:} The DM faces a non-stationary MDP instance $(\SSS,\AAA, T, r, p)$. She knows $\SSS, \AAA, T$, but not $r, p$. The DM starts at an arbitrary state $s_1\in \SSS$. At time $t$, three events happen. First, the DM observes its current state $s_t$. Second, she takes an action $a_t\in \AAA_{s_t}$. Third, given $s_t, a_t$, she stochastically transits to another state $s_{t+1}$ which is distributed as $p_t(\cdot | s_t, a_t)$, and receives a stochastic reward $R_t(s_t,a_t)$, which is 1-sub-Gaussian with mean $r_t(s_t, a_t)$. In the second event, the choice of $a_t$ is based on a \emph{non-anticipatory} policy $\Pi$. That is, the choice only depends on the current state $s_t$ and the previous observations $\H_{t-1} := \{s_q, a_q, R_q(s_q, a_q)\}^{t-1}_{q =1}$.

\noindent\textbf{Dynamic Regret:} The DM aims to maximize the cumulative expected reward $\E [\sum^T_{t=1} r_t(s_t, a_t) ] $, despite the model uncertainty on $r, p$ and the dynamics of the learning environment. To measure the convergence to optimality, we consider an equivalent objective of minimizing the \emph{dynamic regret} \citep{BGZ14,JakschOA10}
\begin{equation}\label{eq:dyn_reg}
\text{Dyn-Reg}_T(\Pi)= \sum^T_{t = 1}\left\{\rho^*_t - \mathbb{E}[r_t(s_t, a_t) ] \right\}.
\end{equation}
In the oracle $\sum^T_{t = 1}\rho^*_t $, the summand $\rho^*_t$ is the optimal long-term average reward of the stationary MDP with state transition distribution $p_t$ and mean reward $r_t.$  The optimum $\rho^*_t$ can be computed by solving linear program \eqref{eq:primal} provided in Section \ref{sec:mdp_lp}. We note that the same oracle is used for RL in piecewise-stationary MDPs \citep{JakschOA10}.
\begin{remark}
	When $S=1$, (\ref{eq:dyn_reg}) reduces to the definition \citep{BGZ14} of dynamic regret for non-stationary $K$-armed bandits. Nevertheless, different from the bandit case, the offline benchmark $\sum^T_{t = 1}\rho^*_t $ does not equal to the expected optimum for the non-stationary MDP problem in general. We justify our choice in Proposition \ref{prop:benchmark}.
\end{remark}
Next, we review concepts of \emph{communicating MDPs} and \emph{diameters}, in order to stipulate an assumption that ensures learnability and justifies our offline bnechmark. 
\begin{definition}[\citep{JakschOA10} \textbf{Communicating MDPs and Diameters}]\label{def:diameter}
	Consider a set of states $\SSS$, a collection $\AAA= \{\AAA_s\}_{s\in \SSS}$ of action sets, and a transition kernel $\bar{p} = \{\bar{p}(\cdot | s, a)\}_{s\in \SSS, a\in \AAA_s}$. For any $s, s'\in \SSS$ and stationary policy $\pi$, the hitting time from $s$ to $s'$ under $\pi$ is the random variable $\Lambda(s' | \pi, s) := \min\left\{ t : s_{t+1} = s' , s_1 = s, s_{\tau + 1} \sim \bar{p}(\cdot | s_\tau, \pi(s_\tau)) \text{ $\forall\tau$}\right\},$ which can be infinite. We say that $(\SSS, \AAA, \bar{p})$ is a \emph{communicating MDP} iff  $$D := \max_{s, s'\in \SSS}\min_{\text{stationary }\pi} \mathbb{E}\left[\Lambda(s' | \pi, s) \right]$$ is finite. The quantity $D$ is the \emph{diameter} associated with $(\SSS, \AAA, \bar{p})$.
\end{definition}
We make the following assumption throughout.
\begin{assumption}\label{ass:communicating}
	For each $ t \in \{1, \ldots, T\}$, the tuple $(\SSS, \AAA, p_t) $ constitutes a communicating MDP with diameter at most $D_t$. We denote the \emph{maximum diameter} as $D_\text{max} = \max_{t\in \{1, \ldots, T\}} D_t.$
\end{assumption}
The following proposition justifies our choice of offline benchmark $\sum^T_{t=1}\rho^*_t$.
\begin{proposition}\label{prop:benchmark}
	Consider an instance $(\SSS, \AAA, T, p, r)$ that satisfies Assumption \ref{ass:communicating} with maximum diameter $D_\text{max}$, and has variation budgets $B_r, B_p$ for the rewards and transition kernels respectively. In addition, suppose that $T \geq B_r + 2 D_\text{max} B_p > 0$. It holds that
	$
	\sum^T_{t=1}\rho^*_t \geq \max_{\Pi}\left\{\mathbb{E}\left[\sum^T_{t=1} r_t(s^\Pi_t, a^\Pi_t) \right]\right\} - 4(D_\text{max} + 1)\sqrt{(B_r + 2 D_\text{max}B_p)T}.
	$
	The maximum is taken over all non-anticipatory policies $\Pi$'s. We denote $\{(s^\Pi_t, a^\Pi_t)\}^T_{t=1}$ as the trajectory under policy $\Pi$, where $a^\Pi_t\in \AAA_{s^\Pi_t}$ is determined based on $\Pi$ and $\H_{t-1} \cup \{s^\Pi_t\}$, and $s^\Pi_{t+1}\sim p_t (\cdot | s^\Pi_t, a^\Pi_t)$ for each $t$.
\end{proposition}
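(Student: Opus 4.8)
The plan is to fix an arbitrary non-anticipatory policy $\Pi$ and prove the one-sided estimate $\mathbb{E}\big[\sum_{t=1}^T r_t(s^\Pi_t,a^\Pi_t)\big]\le \sum_{t=1}^T\rho^*_t + 4(D_{\max}+1)\sqrt{(B_r+2D_{\max}B_p)T}$; maximizing over $\Pi$ then gives the proposition. First I would partition $\{1,\dots,T\}$ into consecutive blocks, each of a common length $W$ (to be tuned at the end; the last block may be shorter), and on each block $\mathcal{B}$, with first index $t_0=t_0(\mathcal B)$, compare against the \emph{stationary} MDP $(\SSS,\AAA,p_{t_0},r_{t_0})$. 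Writing $B_r^{\mathcal B}=\sum_{\tau\in\mathcal B,\,\tau\neq\max\mathcal B}B_{r,\tau}$ and $B_p^{\mathcal B}=\sum_{\tau\in\mathcal B,\,\tau\neq\max\mathcal B}B_{p,\tau}$, the definition of the variation budgets gives $|r_t(s,a)-r_{t_0}(s,a)|\le B_r^{\mathcal B}$ and $\|p_t(\cdot|s,a)-p_{t_0}(\cdot|s,a)\|_1\le B_p^{\mathcal B}$ for all $t\in\mathcal B$ and all $(s,a)$, while $\sum_{\mathcal B}B_r^{\mathcal B}\le B_r$ and $\sum_{\mathcal B}B_p^{\mathcal B}\le B_p$ because the blocks partition the horizon.

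The core step is the per-block inequality $\mathbb{E}\big[\sum_{t\in\mathcal B}r_t(s^\Pi_t,a^\Pi_t)\big]\le \sum_{t\in\mathcal B}\rho^*_t + D_{\max}+O\big(W(B_r^{\mathcal B}+D_{\max}B_p^{\mathcal B})\big)$, which I would establish by a potential/telescoping argument in the spirit of \citep{JakschOA10}. Since $(\SSS,\AAA,p_{t_0})$ is communicating with diameter $D_{t_0}\le D_{\max}$, the average-reward optimality equation for the reward $r_{t_0}$ admits a solution $(\rho^*_{t_0},h_{t_0})$ with $\mathrm{sp}(h_{t_0})\le D_{t_0}\le D_{\max}$ (the classical span-versus-diameter bound for communicating MDPs), so in particular $r_{t_0}(s,a)\le \rho^*_{t_0}+h_{t_0}(s)-\sum_{s'}p_{t_0}(s'|s,a)h_{t_0}(s')$ for every $(s,a)$. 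Evaluating this along the trajectory of $\Pi$ at each $t\in\mathcal B$, trading $r_{t_0}$ for $r_t$ at cost $B_r^{\mathcal B}$, and writing $\sum_{s'}p_{t_0}(s'|s^\Pi_t,a^\Pi_t)h_{t_0}(s')=\mathbb{E}\big[h_{t_0}(s^\Pi_{t+1})\mid \H_{t-1},s^\Pi_t,a^\Pi_t\big]+\big((p_{t_0}-p_t)(\cdot|s^\Pi_t,a^\Pi_t)\big)^\top h_{t_0}$ — where the last term has absolute value at most $\tfrac12\mathrm{sp}(h_{t_0})\|p_{t_0}-p_t\|_1\le\tfrac12 D_{\max}B_p^{\mathcal B}$ — and then summing over the (at most $W$) steps of the block, the bias terms telescope into $\mathbb{E}[h_{t_0}(s^\Pi_{t_0})-h_{t_0}(s^\Pi_{t_0+|\mathcal B|})]\le \mathrm{sp}(h_{t_0})\le D_{\max}$. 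A short perturbation argument of the same flavour — comparing the optimal stationary policy of one of the two stationary sub-MDPs against the bias (of span $\le D_{\max}$) of the other — gives $|\rho^*_{t_0}-\rho^*_t|\le B_r^{\mathcal B}+D_{\max}B_p^{\mathcal B}$ for $t\in\mathcal B$, hence $|\mathcal B|\,\rho^*_{t_0}\le \sum_{t\in\mathcal B}\rho^*_t+W(B_r^{\mathcal B}+D_{\max}B_p^{\mathcal B})$; combining the displays yields the per-block inequality.

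Summing the per-block inequality over the $\lceil T/W\rceil$ blocks and using $\sum_{\mathcal B}B_r^{\mathcal B}\le B_r$, $\sum_{\mathcal B}B_p^{\mathcal B}\le B_p$ gives $\mathbb{E}\big[\sum_{t=1}^T r_t(s^\Pi_t,a^\Pi_t)\big]\le \sum_{t=1}^T\rho^*_t+\lceil T/W\rceil D_{\max}+O\big(W(B_r+D_{\max}B_p)\big)$. Balancing the two error terms by taking $W$ of order $\sqrt{TD_{\max}/(B_r+D_{\max}B_p)}$ yields an error of order $\sqrt{TD_{\max}(B_r+D_{\max}B_p)}\le (D_{\max}+1)\sqrt{(B_r+2D_{\max}B_p)T}$, the last step using $D_{\max}\ge 1$; the hypothesis $T\ge B_r+2D_{\max}B_p>0$ is what guarantees that this choice of $W$ rounds to a valid positive integer not exceeding $T$, and the constant $4$ in the statement leaves room for the rounding. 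I expect the per-block estimate to be the crux, and within it the delicate point is that the trajectory is generated by the \emph{true} kernels $p_t$ rather than the reference kernel $p_{t_0}$ used to define $h_{t_0}$: a direct coupling of the two trajectory laws would pay an extra factor of $W$ on the transition drift (an $O(W^2 B_p^{\mathcal B})$ term that would ruin the bound), whereas the telescoping argument lets this mismatch enter only through $(p_{t_0}-p_t)^\top h_{t_0}$, at cost $\tfrac12 D_{\max}B_p^{\mathcal B}$ per step. A secondary technical point is invoking, for each stationary sub-MDP, the existence of a bias vector whose span is at most the diameter together with the matching perturbation bound for $\rho^*_t$, both standard for finite communicating MDPs.
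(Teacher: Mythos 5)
Your proposal is correct and follows essentially the same route as the paper's proof: the paper likewise fixes a reference time at the start of each length-$W$ block, takes the dual solution $(\rho^*_t,\gamma^*_t)$ of the average-reward LP (the bias vector, with span bounded via the diameter by Lemma \ref{lemma:mc}), evaluates the optimality inequality along the trajectory of $\Pi$ paying $B_{r,q}$ and $D_\text{max}B_{p,q}$ drift per step, telescopes the bias in expectation to a single $O(D_\text{max})$ boundary term per block, and balances $W\approx\sqrt{T/(B_r+2D_\text{max}B_p)}$. The only differences are cosmetic (the paper's span bound is $2D_\text{max}$ rather than $D_\text{max}$, and its choice of $W$ omits the $D_\text{max}$ factor you include in the balancing), so the argument matches.
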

The Proposition is proved in Appendix \ref{app:pfprofbenchmark}. In fact, our dynamic regret bounds are larger than the error term $4(D_{\max} + 1)\sqrt{(B_r + 2 D_{\max}B_p)T}$, thus justifying the choice of $\sum^T_{t=1}\rho^*_t$ as the offline benchmark. The offline benchmark $\sum^T_{t=1}\rho^*_t$ is more convenient for analysis than the expected optimum, since the former can be decomposed to summations across different intervals, unlike the latter where the summands are intertwined (since $s^\Pi_{t+1} \sim p_t(\cdot | s^\Pi_t, a^\Pi_t)$). 
\section{Related Works}\label{sec:related}
\subsection{RL in Stationary MDPs}
RL in stationary (discounted and un-discounted reward) MDPs has been widely studied in 
\citep{BurnetasK97,BartlettT09,JakschOA10,AgrawalJ17,FruitPL18,FruitPLO18,SidfordWWY18,SidfordWWYY18,Wang19,ZhangJ19,FruitPL19,WeiJLSJ19}. For the discounted reward setting, the authors of \citep{SidfordWWY18,Wang19,SidfordWWYY18} proposed (nearly) optimal algorithms in terms of sample complexity. For the un-discounted reward setting, the authors of \citep{JakschOA10} established a minimax lower bound $\Omega(\sqrt{D_{\max}SAT})$ on the regret when both the reward and state transition distributions are time-invariant. They also designed the UCRL2 algorithm and showed that it attains a regret bound $\tilde{O}(D_{\max}S\sqrt{AT}).$ The authors of \citep{FruitPL19} proposed the UCRL2B algorithm, which is an improved version of the UCRL2 algorithm. The regret bound of the UCRL2B algorithm is $\tilde{O}(S\sqrt{D_{\max}AT}+D_{\max}^2S^2A).$ The minimax optimal algorithm is provided in \citep{ZhangJ19} although it is not computationally efficient.
\subsection{RL in Non-Stationary MDPs}
In a parallel work \citep{GajaneOA19v3}, the authors considered a similar setting to ours by applying the ``forgetting principle" from non-stationary bandit settings \citep{GarivierM11,CSLZ19a} to design a learning algorithm. To achieve its dynamic regret bound, the algorithm by \citep{GajaneOA19v3} partitions the entire time horizon $[T]$ into time intervals ${\cal I}=\{I_k\}^K_{k=1},$ and crucially requires the access to $\sum_{t=\min I_k}^{\max I_k-1}B_{r, t}$ and $\sum_{t=\min I_k}^{\max I_k-1}B_{p, t},$ \ie, the variations in both reward and state transition distributions of each interval $I_k\in {\cal I}$ (see Theorem 3 in \citep{GajaneOA19v3}). In contrast, the \sw~and the \borl~require significantly less information on the variations. Specifically, the \sw~does not need any additional knowledge on the variations except for $B_r$ and $B_p,$ \ie, the variation budgets over the entire time horizon as defined in eqn. \eqref{eq:variation_budgets}, to achieve its dynamic regret bound (see Theorem \ref{thm:main1}). This is similar to algorithms for the non-stationary bandit settings, which only require the access to $B_r$ \citep{BGZ14}. More importantly, the \borl~(built upon the \sw) enjoys the same dynamic regret bound even without knowing either of $B_r$ or $B_p$ (see Theorem \ref{thm:borl}). 

There also exists some settings that are closely related to, but different than our setting (in terms of exogeneity and feedback). \citep{JakschOA10,GajaneOA18} proposed solutions for the RL in piecewise-stationary MDPs setting. But as discussed before, simply applying their techniques to the general RL in non-stationary MDPs may result in undesirable dynamic regret bounds (see Section \ref{sec:cw_disc} for more details). In \citep{YuMS09,NueAAS10,AroraDT12,DickGC14,JinJLSY19,CardosoWX19}, the authors considered RL in MDPs with changing reward distributions but fixed transition distributions. The authors of \citep{Even-DarKM05,YuM09,NeuGS12,Abbasi-YadkoriBKSS13,RosenbergM19a,LiZQL19} considered RL in non-stationary MDPs with full information feedback.
\subsection{Non-Stationary MAB}
For online learning and bandit problems where there is only one state, the works by \citep{ABFS02,GarivierM11,BGZ14,KZ16} proposed several ``forgetting" strategies for different non-stationary MAB settings. More recently, the works by \citep{KA16,LWAL18,CSLZ19,CSLZ19a,CLLW19} designed parameter-free algorithms for non-stationary MAB problems. Another related but different setting is the Markovian bandit \citep{KimL16,Ma18}, in which the state of the chosen action evolve according to an independent time-invariant Markov chain while the states of the remaining actions stay unchanged. In \citep{ZhouCGX20}, the authors also considered the case when the states of all the actions are governed by the same (uncontrollable) Markov chain.

\section{Sliding Window UCRL2 with Confidence Widening}\label{sec:algo}
In this section, we present the \sw, which incorporates sliding window estimates \citep{GM11} and a novel confidence widening technique into UCRL2 \citep{JakschOA10}. 
\subsection{Design Challenge: Failure of Naive Sliding Window UCRL2 Algorithm}\label{sec:challenge}
For stationary MAB problems, the UCB algorithm \citep{ABF02} suggests the DM should iteratively execute the following two steps in each time step: 
\begin{enumerate}
	\item Estimate the mean reward of each action by taking the time average of \emph{all} observed samples.
	\item Pick the action with the highest estimated mean reward plus the confidence radius, where the radius scales inversely proportional with the number of observations \citep{ABF02}. 
\end{enumerate}
The UCB algorithm has been proved to attain optimal regret bounds for various stationary MAB settings \citep{ABF02,KWAS15}. For non-stationary problems, \citep{GarivierM11,KZ16,CSLZ19a} shown that the DM could further leverage the forgetting principle by incorporating the sliding-window estimator \citep{GarivierM11} into the UCB algorithms \citep{ABF02,KWAS15} to achieve optimal dynamic regret bounds for a wide variety of non-stationary MAB settings. The sliding window UCB algorithm with a window size $W\in\Re_{+}$ is similar to the UCB algorithm except that the estimated mean rewards are computed by taking the time average of the $W$ \emph{most recent} observed samples.

As noted in Section \ref{sec:intro}, \citep{JakschOA10} proposed the UCRL2 algorithm, which is a UCB-alike algorithm with nearly optimal regret for RL in stationary MDPs. It is thus tempting to think that one could also integrate the forgetting principle into the UCRL2 algorithm to attain low dynamic regret bound for RL in non-stationary MDPs. In particular, one could easily design a naive sliding-window UCRL2 algorithm that follows exactly the same steps as the UCRL2 algorithm with the exception that it uses only the $W$ most recent observed samples instead of all observed samples to estimate the mean rewards and the state transition distributions, and to compute the respective confidence radius.

Under non-stationarity and bandit feedback, however, we show in Proposition \ref{lemma:peril1} of the forthcoming Section \ref{sec:cw_disc} that the diameter of the estimated MDP produced by the naive sliding-window UCRL2 algorithm with window size $W$ can be as large as $\Theta(W)$, which is orders of magnitude larger than $D_{\max}$, the maximum diameter of each individual MDP encountered by the DM. Consequently, the naive sliding-window UCRL2 algorithm may result in undesirable dynamic regret bound. In what follows, we discuss in more details how our novel confidence widening technique can mitigate this issue. 
\subsection{Design Overview}
The \sw~first specifies a sliding window parameter $W\in \mathbb{N}$ and a confidence widening parameter $\eta\geq 0$. Parameter $W$ specifies the number of previous time steps to look at. Parameter $\eta$ quantifies the amount of additional optimistic exploration, on top of the conventional optimistic exploration using upper confidence bounds. The former turns out to be necessary for handling the drifting non-stationarity of the transition kernel. 

The algorithm runs in a sequence of episodes that partitions the $T$ time steps. Episode $m$ starts at time $\tau(m)$ (in particular $\tau(1) = 1$), and ends at the end of time $\tau(m + 1) - 1$. Throughout an episode $m,$ the DM follows a certain stationary policy $\tilde{\pi}_m.$ The DM ceases the $m^{\text{th}}$ episode if at least one of the following two criteria is met: 
\begin{itemize}
	\item The time index $t$ is a multiple of $W.$ Consequently, each episode last for at most $W$ time steps. The criterion ensures that the DM switches the stationary policy $\tilde{\pi}_m$ frequently enough, in order to adapt to the non-stationarity of $r_t$'s and $p_t$'s. 
	\item There exists some state-action pair $(s, a)$ such that $\nu_m(s,a),$ the number of time step $t$'s with $(s_t, a_t) = (s, a)$ within episode $m,$ is at least as many as the total number of counts for it within the $W$ time steps prior to $\tau(m),$ \ie, from $(\tau(m)-W)\vee1$ to $(\tau(m)-1).$ This is similar to the doubling criterion in \citep{JakschOA10}, which ensures that each episode is sufficiently long so that the DM can focus on learning. 
\end{itemize} 
The combined effect of these two criteria allows the DM to learn a low dynamic regret policy with historical data from an appropriately sized time window. One important piece of ingredient is the construction of the policy $\tilde{\pi}_m,$ for each episode $m$. To allow learning under non-stationarity, the \sw~computes the policy $\tilde{\pi}_m$ 
based on the history in the $W$ time steps previous to the current episode $m,$ \ie, from round $(\tau(m) - W) \vee 1$ to round $\tau(m) - 1$. The construction of $\tilde{\pi}_m$ involves the Extended Value Iteration (EVI) \citep{JakschOA10}, which requires the confidence regions $H_{r, \tau(m)}, H_{p, \tau(m)}(\eta) $ for rewards and transition kernels as the inputs, in addition to an precision parameter $\epsilon$. The confidence widening parameter $\eta\geq 0$ 
is capable of ensuring the MDP output by the EVI has a bounded diameter most of the time.
\subsection{Policy Construction}
To describe \sw, we define for each state action pair $(s, a)$ and each round $t$ in episode $m,$
\begin{align}\label{eq:N_sa}
	\nonumber&N_{t} (s, a) = \sum^{t-1}_{q = (\tau(m) - W) \vee 1} \mathbf{1}((s_q, a_q) = (s, a)),\\
	&N^+_t  (s, a) = \max\{1, N_{t} (s,a) \}.
\end{align}
\subsubsection{Confidence Region for Rewards.} 
For each state action pair $(s,a)$ and each time step $t$ in episode $m$,  we consider the empirical mean estimator
$$\hat{r}_t (s, a) = \frac{\sum^{t-1}_{q = (\tau(m) - W) \vee 1 }  R_{q}\left(s , a \right)\mathbf{1}(s_q = s, a_q = a)}{N^+_t (s, a)},$$
which serves to estimate the average reward $$\bar{r}_t (s, a)  = \sum^{t-1}_{q = (\tau(m) - W) \vee 1 } \frac{r_q(s, a)\mathbf{1}(s_q = s, a_q = a)}{N^+_t(s, a) }.$$ The confidence region $H_{r, t} = \{H_{r, t}(s, a )\}_{s\in \SSS, a\in \AAA_s}$ is defined as
\begin{equation}\label{eq:Hr}
H_{r, t} (s, a) = \left\{ \dot{r} \in [0, 1] : \left| \dot{r} - \hat{r}_t (s, a) \right| \leq \rad_{r, t}(s, a) \right\},
\end{equation} 
with confidence radius 
\begin{center}
	$\rad_{r, t}(s, a) = 2\sqrt{{2\log (SA T / \delta ) }/{N^+_t (s, a)}}.$
\end{center}

\subsubsection{Confidence Widening for Transition Kernels.} For each state action pair $s,a$ and each time step $t$ in episode $m$, consider the empirical estimator
$$\hat{p}_t(s'|s, a) =\frac{\sum^{t -1}_{q =  (\tau(m) - W) \vee 1 } \mathbf{1}(s_q = s, a_q = a, s_{q + 1} = s')}{N^+_t (s, a)},$$ 
which serves to estimate the average transition probability
$$\bar{p}_t(s'|s, a) =\sum^{t-1}_{q =  (\tau(m) - W) \vee 1 } \frac{p_q(s' | s, a) \mathbf{1}(s_q = s, a_q = a)}{N^+_t(s, a)}.$$ Different from the case of estimating reward, the confidence region $H_{p, t}(\eta) = \{H_{p, t} (s, a ; \eta)\}_{s\in \SSS, a\in \AAA_s}$ for the transition probability involves a widening parameter $\eta \geq 0$:
\begin{align} \label{eq:Hp}
	&H_{p, t}(s, a; \eta) \\
	\nonumber = &\{ \dot{p} \in \Delta^\SSS :\left\| \dot{p}(\cdot|s,a) - \hat{p}_t (\cdot | s, a) \right\|_1\leq \rad_{p,t}(s, a) + \eta\},
\end{align}
with confidence radius 
\begin{center}
	$\rad_{p,t}(s, a) = 2\sqrt{{2S\log\left(SAT/\delta\right)}/{N^+_t(s,a)}}.$
\end{center}
In a nutshell, the incorporation of $\eta>0$ provides an additional source of optimism, and the DM can explore transition kernels that further deviate from the sample average. This turns out to be crucial for learning MDPs under drifting non-stationarity. We treat $\eta$ as a hyper-parameter at the moment, and provide a suitable choice of $\eta$ when we discuss our main results.

\subsubsection{Extended Value Iteration (EVI) \citep{JakschOA10}.} The \sw~relies on the EVI, which solves MDPs with optimistic exploration to near-optimality. We extract and rephrase a description of EVI in Appendix \ref{app:evi}. EVI inputs the confidence regions $H_r, H_p$ for the rewards and the transition kernels. The algorithm outputs an ``optimistic MDP model'', which consists of reward vector $\tilde{r}$ and transition kernel $\tilde{p}$ under which the optimal average gain $\tilde{\rho}$ is the largest among all $\dot{r}\in H_r, \dot{p}\in H_p$:
\begin{itemize}
	\item  \textbf{Input:} Confidence regions $H_r$ for $r$, $H_p$ for $p,$ and an error parameter $\epsilon >0.$ 
	\item \textbf{Output:} The returned policy $\tilde{\pi}$ and the auxiliary output $(\tilde{r}, \tilde{p}, \tilde{\rho}, \tilde{\gamma}).$ In the latter, $\tilde{r},$ $\tilde{p},$ and $\tilde{\rho}$ are the selected ``optimistic" reward vector, transition kernel, and the corresponding long term average reward. The output $\tilde{\gamma}\in \mathbf{R}_{\geq 0}^{\SSS}$ is a \emph{bias vector} \citep{JakschOA10}. For each $s\in \SSS$, the quantity $\tilde{\gamma}(s)$ is indicative of the short term reward when the DM starts at state $s$ and follows the optimal policy. By the design of EVI, for the output $\tilde{\gamma}$, there exists $s\in \SSS$ such that $\tilde{\gamma}(s) = 0$. Altogether, we express $$\text{EVI}(H_r, H_p ; \epsilon) \rightarrow (\tilde{\pi}, \tilde{r}, \tilde{p}, \tilde{\rho}, \tilde{\gamma}).$$ 
\end{itemize}
Combining the three components, a formal description of the \sw~is shown in Algorithm \ref{alg:swr-ucrl2}.

\begin{algorithm}[!ht]
	\caption{\sw}\label{alg:swr-ucrl2}
	\begin{algorithmic}[1]
	\STATE \textbf{Input:} Time horizon $T$, state space $\SSS,$ action space $\AAA,$ window size $W$, widening parameter $\eta.$
	\STATE \textbf{Initialize} $t \leftarrow1$, initial state $s_1$. 
	\FOR{episode $m = 1, 2, \ldots$}
	\STATE Set $\tau(m)\leftarrow t$, $\nu_m(s, a) \leftarrow 0,$ and $N_{\tau(m)}(s, a)$ according to Eqn (\ref{eq:N_sa}), for all $s, a$. \; \label{alg:swr-ucrl2-tau}
	\STATE Compute the confidence regions $H_{r, \tau(m)}$, $H_{p, \tau (m)}(\eta)$ according to Eqns (\ref{eq:Hr}, \ref{eq:Hp}).\;\\
	\STATE Compute a ${1}/{\sqrt{\tau(m)}}$-optimal optimistic policy $\tilde{\pi}_m:$ $\textsf{EVI}(H_{r, \tau(m)}, H_{p,\tau( m) }(\eta); {1}/{\sqrt{\tau(m)}})\rightarrow (\tilde{\pi}_m,\tilde{r}_m, \tilde{p}_m, \tilde{\rho}_m, \tilde{\gamma}_m).$\label{alg:swr-ucrl2-while}
	\WHILE{$t$ is not a multiple of $W$ and $\nu_m(s_t, \tilde{\pi}_m(s_t)) < N^+_{\tau(m)}(s_t, \tilde{\pi}_m(s_t))$}
	\STATE	Choose action $a_t = \tilde{\pi}_m(s_t),$ observe reward $R_t(s_t, a_t)$ and the next state $s_{t+1}.$ \;\label{alg:swr-ucrl2-action}\\
	\STATE	Update $\nu_m(s_t, a_t) \leftarrow \nu_m(s_t, a_t) + 1,t \leftarrow t+1$.\;\\ \label{alg:swr-ucrl2-receive}
	\IF{$t> T$}
	\STATE The algorithm is terminated. \label{alg:swr-ucrl2-break}
	\ENDIF
	\ENDWHILE
	\ENDFOR
	\end{algorithmic}
\end{algorithm}

\subsection{Performance Analysis: The Blessing of More Optimism}
We now analyze the performance of the \sw. First, we introduce two events ${\cal E}_r, {\cal E}_p,$ which state that the estimated reward and transition kernels lie in the respective confidence regions.
\begin{align*}
&{\cal E}_r = \{ \bar{r}_t(s, a) \in H_{r, t}(s, a) \ \forall s, a, t  \},\\
&{\cal E}_p = \{ \bar{p}_t(\cdot | s, a) \in H_{p, t}(s, a;0) \ \forall s, a, t  \}.
\end{align*} 
We prove that ${\cal E}_r, {\cal E}_p$ hold with high probability.
\begin{lemma}
	\label{lemma:estimation}
	We have $\Pr[{\cal E}_r] \geq 1 - \delta / 2$, $\Pr[{\cal E}_p ] \geq 1 - \delta / 2$.
\end{lemma}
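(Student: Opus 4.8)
\emph{Proof proposal.} The plan is to establish the two bounds separately via the same recipe: reduce to a single state--action pair, write the deviation of the empirical estimate from its windowed target as a normalized martingale sum, and control that sum uniformly over every data-dependent window the algorithm can ever use, by a peeling/union-bound argument together with a standard martingale concentration inequality.

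Fix a state--action pair $(s,a)$, and let $q_1<q_2<\dots$ be the (random) time steps at which $(s,a)$ is played by Algorithm~\ref{alg:swr-ucrl2} over the whole horizon. For $\mathcal{E}_r$, set $Z_i = R_{q_i}(s,a)-r_{q_i}(s,a)$; conditionally on the history up to and including the choice $(s_{q_i},a_{q_i})=(s,a)$, the variable $Z_i$ has zero mean and is $1$-sub-Gaussian, so $\{Z_i\}_i$ is a martingale difference sequence and $M_\ell := \sum_{i\le\ell}Z_i$ is a martingale with $1$-sub-Gaussian increments. The key observation is that for any episode $m$ and any time $t$ in it, the effective sample for $(s,a)$ at time $t$ is exactly the contiguous block of indices $i$ with $q_i\in[(\tau(m)-W)\vee 1,\,t-1]$: if this block is $\{j+1,\dots,j+k\}$ then $k=N_t(s,a)$ and $\hat r_t(s,a)-\bar r_t(s,a)=(M_{j+k}-M_j)/N^+_t(s,a)$ (when $N_t(s,a)=0$ both $\hat r_t(s,a)$ and $\bar r_t(s,a)$ equal $0\in[0,1]$, so the containment is trivial). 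Since the window starts $(\tau(m)-W)\vee 1$ range over at most $T$ values and the times $t$ over at most $T$ values, it suffices to show that with probability at least $1-\delta/2$ one has $|M_{\ell_2}-M_{\ell_1}|\le 2\sqrt{2(\ell_2-\ell_1)\log(SAT/\delta)}$ simultaneously for all $(s,a)$ and all $0\le\ell_1<\ell_2\le T$. For fixed $\ell_1,\ell_2,(s,a)$ this is a one-dimensional sub-Gaussian tail bound on $M_{\ell_2}-M_{\ell_1}$; a union bound over the $\le SA$ state--action pairs and the $\le T^2$ index pairs, using that $\rad_{r,t}$ carries the logarithmic factor $\log(SAT/\delta)$, keeps the total failure probability below $\delta/2$, giving $\Pr[\mathcal{E}_r]\ge 1-\delta/2$.

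For $\mathcal{E}_p$ the argument is identical in structure, with the scalar noise replaced by vector noise and the sub-Gaussian tail by an $\ell_1$-deviation bound. With $(s,a)$ and $q_1<q_2<\dots$ as above, fix a subset $\Omega\subseteq\SSS$; the increments $\mathbf{1}(s_{q_i+1}\in\Omega)-p_{q_i}(\Omega\,|\,s,a)$ form a bounded martingale difference sequence, so the same peeling over block endpoints controls $\big|\sum_{i=j+1}^{j+k}(\mathbf{1}(s_{q_i+1}\in\Omega)-p_{q_i}(\Omega\,|\,s,a))\big|$. Combining over the $2^S$ subsets $\Omega$ (equivalently, invoking the standard $\ell_1$-concentration inequality for an empirical distribution over a support of size $S$) gives $\|\hat p_t(\cdot|s,a)-\bar p_t(\cdot|s,a)\|_1\le\rad_{p,t}(s,a)$ on all relevant blocks, the factor $S$ inside $\rad_{p,t}$ being exactly what absorbs the $2^S$ union bound. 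A further union bound over $(s,a)$ and over the $\le T^2$ window endpoints then yields $\Pr[\mathcal{E}_p]\ge 1-\delta/2$; the case $N_t(s,a)=0$ is again trivial since $\hat p_t(\cdot|s,a)$ and $\bar p_t(\cdot|s,a)$ are both the zero vector.

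I expect the main difficulty to be not any single concentration step, each of which is textbook, but making the uniformity legitimate: both the estimator $\hat r_t$ (resp.\ $\hat p_t$) and the quantity $\bar r_t$ (resp.\ $\bar p_t$) it is compared against are averages over a \emph{data-dependent} window, and the effective sample count $N_t(s,a)$ is itself random and sits inside the confidence radius, so no fixed-sample-size bound applies directly. The peeling over block endpoints $(\ell_1,\ell_2)$ is precisely what legitimizes a bound that holds uniformly over every window $[(\tau(m)-W)\vee 1,\,t-1]$ and every sample count that can arise, at the cost of only the extra logarithmic factor that $\rad_{r,t}$ and $\rad_{p,t}$ are built to absorb.
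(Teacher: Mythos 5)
Your proposal is correct and follows essentially the same route as the paper: per state--action pair martingale concentration for the reward and, for the transition kernel, a reduction of the $\ell_1$-deviation to a union bound over $2^S$ sign vectors (your subsets $\Omega$ are equivalent to the paper's $\theta\in\{-1,1\}^{\SSS}$), with the $S$ inside $\rad_{p,t}$ absorbing that union bound, followed by a union bound over $(s,a)$ and time. The only difference is technical: where you handle the random window and random visit count $N_t(s,a)$ by an explicit union bound over all $O(T^2)$ block endpoints of the visit-time--reindexed martingale, the paper instead invokes the self-normalized concentration inequality of Abbasi-Yadkori et al.\ with indicator weights encoding the window --- both are valid and the logarithmic slack in the radii covers either count.
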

The proof is provided in Section \ref{sec:lemma:estimation} of the appendix. In defining ${\cal E}_p$, the widening parameter $\eta$ is set to be 0, since we are only concerned with the estimation error on $p$. Next, we bound the dynamic regret of each time step, under certain assumptions on $H_{p,t}(\eta)$. To facilitate our discussion, we define the following variation measure for each $t$ in an episode $m$: 
$$\dev_{r, t} = \sum^{t-1}_{q = \tau(m) - W}B_{r, q},\quad \dev_{p, t} =  \sum^{t-1}_{q = \tau(m) - W} B_{p, q}.$$
\begin{proposition}\label{prop:error}
	Consider an episode $m$. Condition on events ${\cal E}_r, {\cal E}_p,$ and suppose that there exists a transition kernel $p$ satisfying two properties: (1) $\forall s\in\SSS~\forall a\in\AAA_s,$ we have $p(\cdot|s,a)\in H_{p,\tau(m)}(s,a ; \eta)$, and (2) the diameter of $(\SSS,\AAA,p)$ at most $D$. Then, for every $t\in \{\tau(m), \ldots, \tau(m+1) - 1\}$ in episode $m$, we have
	\begin{align}
	&\rho^*_t - r_t(s_t, a_t) \leq \left[ \sum_{s' \in \SSS} p_t(s' | s_t, a_t) \tilde{\gamma}_{\tau(m)}(s') \right] - \tilde{\gamma}_{\tau(m)}(s_t)  \label{eq:prop_error_1}\\
	\nonumber&\quad+ \frac{1}{\sqrt{\tau (m)}} +  \left[2 \dev_{r, t} + 4 D( \dev_{p, t} + \eta)\right] \\
	 &\quad+\left[ 2\rad_{r,\tau(m)}(s_t, a_t)  + 4 D \cdot \rad_{p,\tau (m)}(s, a) \right] \label{eq:prop_error_2}   .
	\end{align}
\end{proposition}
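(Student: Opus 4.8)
Abbreviate $\tilde\gamma:=\tilde\gamma_{\tau(m)}$, $\tilde r:=\tilde r_{\tau(m)}$, $\tilde p:=\tilde p_{\tau(m)}$, $\tilde\rho:=\tilde\rho_{\tau(m)}$, and $a_t:=\tilde\pi_{\tau(m)}(s_t)$. The plan is to adapt the UCRL2 per-step regret decomposition of \citep{JakschOA10} to sliding windows, using the hypothesis to control a bias span. First I extract two facts about the EVI output. Since, by hypothesis, $H_{p,\tau(m)}(\eta)$ contains a kernel $p$ whose MDP has diameter at most $D$, the optimistic MDP produced by EVI can always fall back to $p$ at every state–action pair, so it too has diameter at most $D$; by the span bound for communicating MDPs \citep{JakschOA10} together with the normalization $\min_s\tilde\gamma(s)=0$ built into EVI, this yields $0\le\tilde\gamma(s)\le D$ for all $s$. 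Moreover, since EVI is run to precision $1/\sqrt{\tau(m)}$, its output obeys the approximate optimality inequality
\begin{equation*}
\tilde\rho \;\le\; \tilde r(s_t,a_t) + \sum_{s'\in\SSS}\tilde p(s'\,|\,s_t,a_t)\,\tilde\gamma(s') - \tilde\gamma(s_t) + \frac{1}{\sqrt{\tau(m)}}.
\end{equation*}

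Second, I quantify how far the time‑$t$ model $(r_t,p_t)$ sits from the windowed confidence regions. For any $(s,a)$ and any $q$ in the estimation window $\{(\tau(m)-W)\vee 1,\dots,\tau(m)-1\}$ we have $q<\tau(m)\le t$, so the triangle inequality along time and the definition of the variation budgets give $|r_q(s,a)-r_t(s,a)|\le\sum_{j=\tau(m)-W}^{t-1}B_{r,j}=\dev_{r,t}$ and $\|p_q(\cdot|s,a)-p_t(\cdot|s,a)\|_1\le\dev_{p,t}$ (monotonicity of $\dev$). Averaging over the window and combining with the events ${\cal E}_r,{\cal E}_p$ — which place $\bar r_{\tau(m)},\bar p_{\tau(m)}$ within $\rad_{r,\tau(m)},\rad_{p,\tau(m)}$ of $\hat r_{\tau(m)},\hat p_{\tau(m)}$ — yields $|\hat r_{\tau(m)}(s,a)-r_t(s,a)|\le\rad_{r,\tau(m)}(s,a)+\dev_{r,t}$ and $\|\hat p_{\tau(m)}(\cdot|s,a)-p_t(\cdot|s,a)\|_1\le\rad_{p,\tau(m)}(s,a)+\dev_{p,t}$; state–action pairs with no in‑window visit are handled separately, since there the confidence radii already exceed $1$ (resp.\ $2$) and these inequalities hold trivially. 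Consequently the clipping $\dot r$ of $r_t$ into $H_{r,\tau(m)}$ satisfies $\|\dot r-r_t\|_\infty\le\dev_{r,t}$ and the $\ell_1$‑projection $\dot p$ of $p_t$ onto $H_{p,\tau(m)}(\eta)$ satisfies $\max_{s,a}\|\dot p(\cdot|s,a)-p_t(\cdot|s,a)\|_1\le\dev_{p,t}$; moreover, since $\tilde r,\tilde p$ themselves lie in $H_{r,\tau(m)},H_{p,\tau(m)}(\eta)$, one also gets $|\tilde r(s_t,a_t)-r_t(s_t,a_t)|\le 2\rad_{r,\tau(m)}(s_t,a_t)+\dev_{r,t}$ and $\|\tilde p(\cdot|s_t,a_t)-p_t(\cdot|s_t,a_t)\|_1\le 2\rad_{p,\tau(m)}(s_t,a_t)+\eta+\dev_{p,t}$.

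Third, I assemble. Because $(\dot r,\dot p)$ lies in the confidence regions used by EVI, optimism gives $\rho^*(\dot r,\dot p)\le\tilde\rho$, where $\rho^*(\cdot,\cdot)$ denotes the optimal long‑term average reward. A diameter‑based sensitivity bound — obtained by perturbing the average‑reward optimality equation of the gain‑optimal policy of the communicating MDP $(r_t,p_t)$ (Assumption \ref{ass:communicating}) and averaging against the stationary distribution it induces — then yields $\rho^*_t=\rho^*(r_t,p_t)\le\rho^*(\dot r,\dot p)+\|\dot r-r_t\|_\infty+D\max_{s,a}\|\dot p(\cdot|s,a)-p_t(\cdot|s,a)\|_1\le\tilde\rho+\dev_{r,t}+D\,\dev_{p,t}$, where the bias span is bounded by $D$ (by Assumption \ref{ass:communicating}, this is $\le D_\text{max}$, which the coefficients absorb in the regime where widening is effective). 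Now insert the EVI inequality from the first step, split $\sum_{s'}\tilde p(s'|s_t,a_t)\tilde\gamma(s')=\sum_{s'}p_t(s'|s_t,a_t)\tilde\gamma(s')+\sum_{s'}(\tilde p-p_t)(s'|s_t,a_t)\tilde\gamma(s')$, bound the correction term by $\|\tilde p(\cdot|s_t,a_t)-p_t(\cdot|s_t,a_t)\|_1\,\|\tilde\gamma\|_\infty\le D\bigl(2\rad_{p,\tau(m)}(s_t,a_t)+\eta+\dev_{p,t}\bigr)$ using $0\le\tilde\gamma\le D$ and the previous step, and bound $\tilde r(s_t,a_t)-r_t(s_t,a_t)$ as above. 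Summing all contributions and folding the slack into the coefficients $2$ and $4D$ gives the claimed inequality.

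\textbf{Main obstacle.} The crux is the span bound $\mathrm{sp}(\tilde\gamma_{\tau(m)})\le D$: this is precisely where the widening hypothesis is indispensable, since without an anchoring bounded‑diameter kernel inside $H_{p,\tau(m)}(\eta)$ the optimistic MDP — and hence $\tilde\gamma_{\tau(m)}$, which multiplies the transition‑estimation error throughout the decomposition — could blow up, the failure mode made precise by Proposition \ref{lemma:peril1}. A secondary nuisance is the window‑level bookkeeping: handling state–action pairs with no in‑window visit via the vacuously large confidence radii, and keeping straight that the empirical estimators concentrate around the window averages $\bar r_{\tau(m)},\bar p_{\tau(m)}$ rather than around $r_t,p_t$, which is what introduces the $\dev_{r,t},\dev_{p,t}$ terms.
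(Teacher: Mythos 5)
Your architecture is sound and closely parallels the paper's: EVI's near-optimality (Property 2) plus optimism, a span bound on $\tilde{\gamma}_{\tau(m)}$ anchored by the bounded-diameter kernel $p\in H_{p,\tau(m)}(\eta)$, and the windowed drift bounds producing the $\dev_{r,t},\dev_{p,t}$ terms (your Step 2 is exactly the paper's Lemma \ref{lemma:dev_r_p}, and your splitting of $\sum_{s'}\tilde{p}\tilde{\gamma}$ is its Lemma \ref{lemma:cross_p_error}). The one step that does not deliver the statement as written is the gain comparison. You route $\rho^*_t\le\rho^*(\dot r,\dot p)+\|\dot r-r_t\|_\infty+(\cdot)\max_{s,a}\|\dot p-p_t\|_1\le\tilde\rho+\cdots$ by perturbing the optimality equation of the \emph{true} MDP $(r_t,p_t)$; the bias span entering that perturbation is that of $(\SSS,\AAA,p_t)$, which Assumption \ref{ass:communicating} bounds by $D_t\le D_\text{max}$ --- not by $D$. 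The Proposition's coefficient on $\dev_{p,t}$ is $4D$, where $D$ is whatever diameter bound the widened region happens to certify, and there is no a priori inequality between $D$ and $D_\text{max}$; your parenthetical that ``the coefficients absorb'' the discrepancy only holds in the downstream application (where $D=D_{\tau(m)}\le D_\text{max}$), not for the statement itself. You cannot repair this by instead using the bias of $(\dot r,\dot p)$, since membership in $H_{p,\tau(m)}(\eta)$ does not bound that model's diameter --- that is precisely the pathology of Proposition \ref{lemma:peril1}.

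The paper's fix (its Lemma \ref{lemma:dev_rho}) avoids the intermediate quantity $\rho^*(\dot r,\dot p)$ altogether: it shows directly that $\bigl(\tilde{\rho}_{\tau(m)}+\dev_{r,t}+2D\,\dev_{p,t},\,\tilde{\gamma}_{\tau(m)}\bigr)$ is feasible for the dual program $\textsf{D}(r_t,p_t)$, so that $\tilde{\rho}_{\tau(m)}\ge\rho^*_t-\dev_{r,t}-2D\,\dev_{p,t}$ by weak duality. The perturbation there is paid against $\|\tilde{\gamma}_{\tau(m)}\|_\infty$, which \emph{is} controlled by $D$ (via Property 1 of EVI plus Lemma \ref{lemma:mc} applied to the anchoring kernel $p$) --- exactly the span bound you already established in your first step. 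Substituting this for your sensitivity argument, the rest of your assembly goes through verbatim and yields the claimed constants. Everything else in your write-up --- the projection/clipping bookkeeping, the zero-visit case, the $\mathrm{sp}(\tilde\gamma)\le D$ argument via the extended MDP (which is in fact tighter than the paper's $2D$) --- is correct.
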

The complete proof is in Section \ref{app:aux_prove_prop_error} of the appendix. Unlike Lemma \ref{lemma:estimation}, the parameter $\eta$ plays an important role in the Proposition. As $\eta$ increases, the confidence region $H_{p, \tau(m)}(s, a ;\eta)$ becomes larger for each $s, a$, and the assumed diameter $D$ is expected to decrease. Our subsequent analysis shows that $\eta$ can be suitably calibrated so that $D = O(D_\text{max})$. Next, we state our first main result, which provides a dynamic regret bound assuming the knowledge of $B_r, B_p$ to set $W, \eta$:
\begin{theorem}\label{thm:main1}
	Assuming $S>1,$ the \sw~with window size $W$ and confidence widening parameter $\eta >0$ satisfies the dynamic regret bound
	\begin{align*}
	&\quad \tilde{O}\left({B_pW}/{\eta}+B_r W +{\sqrt{SA}T}/{\sqrt{W}}\right. \\
	&\left.+D_\text{max}\left[ B_p W + {S\sqrt{A}T}/{\sqrt{W}} + T\eta + {SAT}/{W}  + \sqrt{T}\right]\right),	\end{align*}
	with probability $1-O(\delta)$. Putting $W=W^*:={3S^{\frac{2}{3}}A^{\frac{1}{2}} T^{\frac{1}{2}}}/{(B_r + B_p+1)^{\frac{1}{2}}}$ and $\eta=\eta^*: = \sqrt{{(B_p+1)W^*}/{T}},$ the bound specializes to 
\begin{equation}\label{eq:opt_bound}	
	\tilde{O}\left( D_\text{max} (B_r + B_p+1)^{\frac{1}{4}} S^{\frac{2}{3}}A^{\frac{1}{2}}T^{\frac{3}{4}}\right).
\end{equation}	
\end{theorem}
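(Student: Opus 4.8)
The plan is to sum the per-step regret bound of Proposition~\ref{prop:error} over the horizon, once we have arranged that its hypothesis---a transition kernel of diameter $O(D_{\max})$ lying in the \emph{widened} confidence region---holds; this is where the widening parameter $\eta$ does its work. Throughout we condition on ${\cal E}_r\cap{\cal E}_p$ (probability $\ge 1-\delta$ by Lemma~\ref{lemma:estimation}) together with one Azuma--Hoeffding concentration event introduced below, so that all assertions hold with probability $1-O(\delta)$. Call an episode $m$ \emph{good} if $\dev_{p,\tau(m)}=\sum_{q=\tau(m)-W}^{\tau(m)-1}B_{p,q}\le\eta$ and \emph{bad} otherwise. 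For bad episodes we do not invoke Proposition~\ref{prop:error}: we charge each of their steps the trivial regret $\rho^*_t-r_t(s_t,a_t)\le 1$, and it suffices to show the bad episodes occupy $O(B_pW/\eta)$ time steps in total. To see this, greedily cut $\{1,\dots,T-1\}$ into $O(B_p/\eta)$ consecutive blocks each of transition variation at most $\eta$; a window contained in a single block is good, so every bad episode's window straddles a block boundary, and for a fixed boundary the episodes whose window contains it start within a span of $W$ steps and each last at most $W$ steps, contributing $O(W)$ steps; summing over boundaries gives the claim.

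The heart of the argument is that, on good episodes, the widened region contains the genuine, small-diameter kernel. Fix a good episode $m$ and any $(s,a)$. On ${\cal E}_p$ we have $\|\bar p_{\tau(m)}(\cdot|s,a)-\hat p_{\tau(m)}(\cdot|s,a)\|_1\le \rad_{p,\tau(m)}(s,a)$, and $\bar p_{\tau(m)}(\cdot|s,a)$ is a convex combination of the true kernels $p_q(\cdot|s,a)$ over $q$ in the window, each obeying $\|p_q(\cdot|s,a)-p_{\tau(m)}(\cdot|s,a)\|_1\le\sum_{j=q}^{\tau(m)-1}B_{p,j}\le\dev_{p,\tau(m)}\le\eta$; hence $\|p_{\tau(m)}(\cdot|s,a)-\hat p_{\tau(m)}(\cdot|s,a)\|_1\le \rad_{p,\tau(m)}(s,a)+\eta$, i.e.\ $p_{\tau(m)}(\cdot|s,a)\in H_{p,\tau(m)}(s,a;\eta)$ (the case $N_{\tau(m)}(s,a)=0$ is trivial since then $\rad_{p,\tau(m)}(s,a)\ge 2$). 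By Assumption~\ref{ass:communicating}, $(\SSS,\AAA,p_{\tau(m)})$ has diameter $\le D_{\max}$, so Proposition~\ref{prop:error} applies with $D=D_{\max}$ on every step of every good episode. It is precisely the $+\eta$ in \eqref{eq:Hp} that lets this low-diameter kernel enter the confidence region despite the within-window drift $\bar p_{\tau(m)}\ne p_{\tau(m)}$; without it the region may, as Proposition~\ref{lemma:peril1} shows, contain only kernels of diameter $\Theta(W)$.

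Now sum the bound of Proposition~\ref{prop:error} over all good-episode steps and add the $O(B_pW/\eta)$ bad-episode charge. \emph{(i) Bias terms:} within a good episode write $\sum_{s'}p_t(s'|s_t,a_t)\tilde\gamma_{\tau(m)}(s')-\tilde\gamma_{\tau(m)}(s_t)=\bigl(\tilde\gamma_{\tau(m)}(s_{t+1})-\tilde\gamma_{\tau(m)}(s_t)\bigr)+\xi_t$ with $\xi_t$ a martingale difference bounded by $\mathrm{span}(\tilde\gamma_{\tau(m)})\le D_{\max}$ (the EVI span bound, valid here because the region contains a diameter-$D_{\max}$ kernel); the telescoping part costs $\le D_{\max}$ per episode, i.e.\ $O(MD_{\max})$ where $M=\tilde O(SAT/W)$ bounds the episode count (the ``multiple of $W$'' rule gives $T/W$ episodes, and an $O(SA\log W)$-per-block doubling count handles the other stopping rule, since within a length-$W$ block the window covers the block-so-far), and $\sum_t\xi_t$ is $\tilde O(D_{\max}\sqrt T)$ by Azuma--Hoeffding. \emph{(ii)} The $1/\sqrt{\tau(m)}$ terms sum to $O(\sqrt T)$, absorbed into $D_{\max}\sqrt T$. \emph{(iii) Variation terms:} each $B_{r,q}$ (resp.\ $B_{p,q}$) enters $O(W)$ of the $\dev_{r,t}$ (resp.\ $\dev_{p,t}$), so $\sum_t 2\dev_{r,t}=O(WB_r)$ and $\sum_t 4D_{\max}(\dev_{p,t}+\eta)=O(D_{\max}WB_p+D_{\max}\eta T)$. \emph{(iv) Confidence-radius terms:} using $\nu_m(s,a)\le N^+_{\tau(m)}(s,a)$ and the UCRL2 pigeonhole applied within each length-$W$ block, each block contributes $\tilde O(\sqrt{SAW}+D_{\max}S\sqrt{AW})$, so over the $\Theta(T/W)$ blocks we get $\tilde O(\sqrt{SA}T/\sqrt W+D_{\max}S\sqrt A T/\sqrt W)$. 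Collecting (i)--(iv) with the bad-episode term yields the stated general bound.

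Finally substitute $W=W^*$ and $\eta=\eta^*$. A short calculation, using $B_r+B_p=O(T)$ (legitimate since $B_{r,t}\le 1$ and $B_{p,t}\le 2$), shows each summand---the binding ones being $D_{\max}B_pW^*$, $D_{\max}T\eta^*$, $B_pW^*/\eta^*$ and the $\sqrt{SA}T/\sqrt{W^*}$-type terms, which $W^*,\eta^*$ are chosen to balance---is $\tilde O\!\bigl(D_{\max}(B_r+B_p+1)^{1/4}S^{2/3}A^{1/2}T^{3/4}\bigr)$, giving \eqref{eq:opt_bound}. The main obstacle is the second paragraph: spotting that, thanks to widening, the confidence region on a good episode contains the true low-diameter kernel $p_{\tau(m)}$, and cheaply quarantining the bad episodes; after that the proof is essentially the UCRL2 accounting, though the episode count and the bias-span bound need the sliding-window care noted above.
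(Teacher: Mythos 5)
Your proposal is correct and follows essentially the same route as the paper's proof: condition on ${\cal E}_r,{\cal E}_p$, show that the widened region contains the true low-diameter kernel $p_{\tau(m)}$ whenever the within-window variation is small, charge trivial regret on the remaining episodes and bound their total duration by $O(B_pW/\eta)$ via a variation-budget packing argument, and then run the sliding-window UCRL2 accounting (episode count, telescoping-plus-Azuma bias terms, drift terms, pigeonhole on the confidence radii) exactly as in Lemmas \ref{lemma:Q_error}--\ref{lemma:sto_error}. The only (cosmetic) difference is the bookkeeping of the bad episodes: you classify an episode as bad via the deterministic condition $\dev_{p,\tau(m)}>\eta$ and count by block boundaries of the cumulative variation, whereas the paper uses the data-dependent condition $p_{\tau(m)}\notin H_{p,\tau(m)}(\eta)$ together with the $Q_T/\tilde{Q}_T$ construction --- both reduce to the same ``$O(B_p/\eta)$ budget consumptions, each costing $O(W)$ steps'' argument.
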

\begin{proof}[\textbf{Proof Sketch}] The complete proof is presented in Section \ref{sec:thm:main1} of the appendix. Proposition \ref{prop:error} states that if the confidence region $H_{p,\tau(m)}(\eta)$ contains a transition kernel that induces a MDP with bounded diameter $D,$ the EVI supplied with $H_{p,\tau(m)}(\eta)$ can return a policy with controllable dynamic regret bound. However, as we show in Section \ref{sec:cw_disc}, one in general cannot expect this to happen. Nevertheless, we bypass this with our novel confidence widening technique and a budget-aware analysis. We consider the first time step $\tau(m)$ of each episode $m:$ if $p_{\tau(m)}(\cdot|s,a)\in H_{p,\tau(m)}(s,a;\eta)$ for all $(s,a),$ then Proposition \ref{prop:error} can be leveraged; otherwise, the widened confidence region enforces that a considerable amount of variation budget is consumed.
\end{proof}
\begin{remark}
	\label{remark:bandit}
	When $S = \{s\}$, our problem becomes the non-stationary bandit problem studied by \citep{BGZ14},  and we have $D_\text{max} = 0$ and $B_p=0.$ By choosing $W=W^*= A^{1/3}T^{2/3} /B_r^{2/3}$, our algorithm has dynamic regret $\tilde{O}(B^{1/3}_r A^{1/3} T^{2/3})$, matching the minimax optimal dynamic regret bound by \citep{BGZ14} when $B_r\in[A^{-1},A^{-1}T].$ 
\end{remark}
\begin{remark}
	\label{remark:obl}
	Similar to \citep{CSLZ19a}, if $B_p, B_r$ are not known, we can set $W$ and $\eta$ obliviously as $W=S^{\frac{2}{3}}A^{\frac{1}{2}} T^{\frac{1}{2}},\quad\eta= \sqrt{{W}/{T}}=S^{\frac{2}{3}}A^{\frac{1}{2}} T^{-\frac{1}{2}}$ to obtain a dynamic regret bound $\tilde{O}\left( D_\text{max} (B_r + B_p+1) S^{\frac{2}{3}}A^{\frac{1}{2}}T^{\frac{3}{4}}\right).$
\end{remark}

\section{Bandit-over-Reinforcement Learning: Towards Parameter-Free}\label{sec:borl}
As said in Remark \ref{remark:obl}, in the case of unknown $B_r$ and $B_{p},$ the dynamic regret of \sw~scales linearly in $B_r$ and $B_p$, which leads to a $\Omega(T)$ dynamic regret when $B_r$ or $B_p = \Omega(T^{1/4})$. In comparison, Theorem \ref{thm:main1} assures us that by using $(W^*,\eta^*)$, we can achieve a $o(T)$ dynamic regret when $B_r, B_p = o(T)$. For the bandit setting, \citep{CSLZ19a} proposes the bandit-over-bandit framework that uses a separate copy of EXP3 algorithm to tune the window length. Inspired by it, we develop a novel Bandit-over-Reinforcement Learning (\texttt{BORL}) algorithm, which is parameter free and has dynamic regret bound equal to (\ref{eq:opt_bound}). Following \citep{CSLZ19a}, we view the \sw~as a sub-routine, and ``hedge" \citep{BC12} against the (possibly adversarial) changes of $r_t$'s and $p_t$'s to identify a reasonable fixed window length and confidence widening parameter. 
\begin{figure}[!ht]
	\vspace{-0.25cm}
	\centering
	\includegraphics[width=8cm,height=2.5cm]{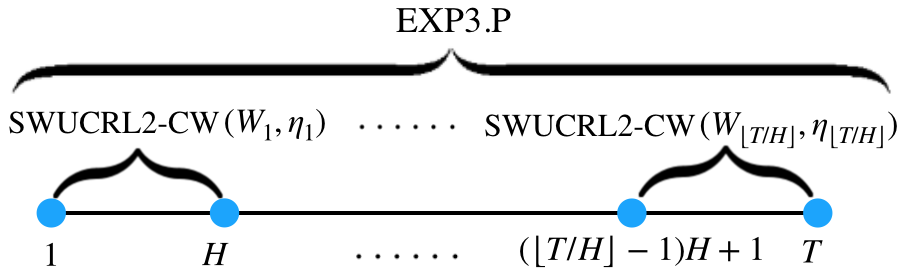}
	\caption{Structure of the \borl}
	\label{fig:borl_int}
\end{figure}
As illustrated in Fig. \ref{fig:borl_int}, the \borl~divides the whole time horizon into $\lceil T/H\rceil$ blocks of equal length $H$ rounds (the length of the last block can $\leq H$), and specifies a set $J$ from which each pair of (window length, confidence widening parameter) are drawn from. For each block $i\in\left[\lceil T/H\rceil\right]$, the \borl~first calls some master algorithm to select a pair of (window length, confidence widening parameter) $(W_i,\eta_i)\in J$, and restarts the \sw~with the selected parameters as a sub-routine to choose actions for this block. Afterwards, the total reward of block $i$ is fed back to the master, and the ``posterior" of these parameters are updated accordingly. 

One immediate challenge not presented in the bandit setting \citep{CSLZ19a} is that the starting state of each block is determined by previous moves of the DM. Hence, the master algorithm is not facing a simple oblivious environment as the case in \citep{CSLZ19a}, and we cannot use the EXP3 \citep{ABFS02} algorithm as the master. Nevertheless, the state is observed before the starting of a block. Thus, we use the EXP3.P algorithm for multi-armed bandit against an adaptive adversary \citep{ABFS02} as the master algorithm. Owing to its similarity to the \texttt{BOB} algorithm \citep{CSLZ19a}, we defer the design details of the \borl~to Section \ref{sec:borl_detail} of the appendix. The proof of the following is in Section \ref{sec:thm:borl} of the appendix.
\begin{theorem}
	\label{thm:borl}
	Assume $S > 1,$ with probability $1-O(\delta),$ the dynamic regret bound of the \borl~is $\tilde{O}(D_{\max}(B_r+B_p+1)^{\frac{1}{4}}S^{\frac{2}{3}}A^{\frac{1}{2}}T^{\frac{3}{4}}).$ 
\end{theorem}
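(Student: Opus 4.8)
The plan is to follow the bandit-over-bandit template from \citep{CSLZ19a}, but with two adaptations dictated by the RL setting: (i) the master must tolerate an adaptive adversary because the starting state of each block depends on past actions, and (ii) the reward fed to the master is a block-level cumulative reward whose scale and bias must be tracked carefully. First I would set up the discretization: fix a block length $H$ (to be optimized, I expect $H = \tilde{\Theta}(\sqrt{T})$ or a similar polynomial), partition $[T]$ into $\lceil T/H\rceil$ blocks, and let $J$ be a geometrically-spaced grid of candidate pairs $(W,\eta)$ whose cardinality is polylogarithmic in $T$ and which is guaranteed to contain a pair $(\hat W,\hat\eta)$ that is within a constant factor of the optimal $(W^*,\eta^*)$ from Theorem \ref{thm:main1} — this is possible because $W^*\in[1,T]$ and $\eta^*\in[T^{-1},1]$, so a grid with $O(\log^2 T)$ points suffices. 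The master is EXP3.P run over the arm set $J$; since the state is revealed at the start of each block, EXP3.P's guarantee against an adaptive adversary applies, giving a regret of $\tilde{O}(\sqrt{|J|\,\lceil T/H\rceil}\cdot H) = \tilde{O}(H\sqrt{T/H}) = \tilde{O}(\sqrt{HT})$ in cumulative reward against the best fixed arm in $J$, after rescaling block rewards to $[0,1]$ (block rewards lie in $[0,H]$, so this costs a factor $H$).

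The core of the argument is then a two-term decomposition of the dynamic regret of \borl. Write $\text{Dyn-Reg}_T(\borl) = \big(\sum_t \rho^*_t - \E[\text{reward of BORL}]\big)$. Add and subtract the cumulative reward that \sw~with the fixed near-optimal parameters $(\hat W,\hat\eta)$ would earn if it were restarted at the beginning of each block. The first term — the gap between $\sum_t\rho^*_t$ and the restarted-$(\hat W,\hat\eta)$ reward — is controlled block-by-block by Theorem \ref{thm:main1}: applying the theorem to each block of length $\le H$ with its own local variation budgets $B_r^{(i)}, B_p^{(i)}$, and using subadditivity $\sum_i (B_r^{(i)}+B_p^{(i)}+1) \le B_r + B_p + \lceil T/H\rceil$ together with concavity of $x\mapsto x^{1/4}$ (Jensen / Hölder over the blocks), yields a bound of the form $\tilde O\big(D_{\max}(B_r+B_p+1)^{1/4}S^{2/3}A^{1/2}T^{3/4}\big)$ plus lower-order terms coming from the $\lceil T/H\rceil$ restart overhead — here I need $H$ large enough that the per-block additive costs (the $\sqrt{T}$-type and $SAT/W$-type terms summed over blocks, and the cost of a cold restart of each episode) do not dominate. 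The second term is exactly the master's regret against the best arm in $J$, which is $\tilde O(\sqrt{HT})$ by the EXP3.P bound above, and is dominated by $T^{3/4}$ as long as $H = \tilde O(T^{1/2})$. Balancing the two contributions fixes $H$ (I anticipate $H = \Theta(S^{2/3}A^{1/2}T^{1/2})$ or the analogous choice making the restart overhead match) and gives the claimed $\tilde O(D_{\max}(B_r+B_p+1)^{1/4}S^{2/3}A^{1/2}T^{3/4})$ bound, all on the event of probability $1-O(\delta)$ where Lemma \ref{lemma:estimation} and the high-probability guarantee of Theorem \ref{thm:main1} hold simultaneously across all blocks (a union bound over $\lceil T/H\rceil$ blocks, absorbed into the $\tilde O$).

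The main obstacle I expect is the adaptivity/interface issue between the master and the sub-routine: unlike the oblivious bandit setting of \citep{CSLZ19a}, the loss the master sees for arm $(W_i,\eta_i)$ in block $i$ depends on the starting state $s_{\tau_i}$, which is a function of all previous arm pulls, so the feedback is genuinely adaptive. Handling this requires (a) invoking EXP3.P rather than EXP3, and (b) checking that Theorem \ref{thm:main1}'s per-block guarantee is robust to an \emph{arbitrary} (adversarially chosen) starting state — which it is, since the dynamic regret bound there already holds for any initial $s_1\in\SSS$ and the diameter assumption is on the MDPs themselves, not on the trajectory. A secondary technical point is making sure the ``reference'' policy used to lower-bound the best-arm-in-$J$ performance is itself state-agnostic enough that comparing \borl~to it is legitimate; this is where the communicating-MDP assumption and a short hitting-time argument (cost $O(D_{\max})$ per block to reach a good state) come in, contributing only an $\tilde O(D_{\max}T/H)$ term that is again lower order. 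Everything else is bookkeeping analogous to the \texttt{BOB}/\sw~analysis.
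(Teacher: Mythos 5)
Your overall architecture is the same as the paper's: partition $[T]$ into blocks of length $H=\Theta(S^{2/3}A^{1/2}T^{1/2})$, run \sw~restarted per block with parameters drawn from a geometric grid $J$ of size $O(\log^2 T)$, use EXP3.P (not EXP3, precisely because the block starting states make the feedback adaptive) as the master, and decompose the regret into (a) the gap between $\sum_t\rho^*_t$ and the restarted sub-routine with a fixed grid point $(W^\dag,\eta^\dag)$ near $(W^*,\eta^*)$, plus (b) the master's $\tilde{O}(\sqrt{HT})$ regret against that fixed arm. All of that matches the paper, including the observation that Theorem \ref{thm:main1} is starting-state-agnostic (your extra $O(D_{\max})$-per-block hitting-time argument is unnecessary, since the benchmark $\sum_t \rho^*_t$ already decomposes over blocks and the reference arm is evaluated on \borl's own trajectory of starting states).

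The one step that does not go through as you describe it is the aggregation of term (a): you propose applying the \emph{specialized} bound \eqref{eq:opt_bound} to each block with its local budgets and then combining via concavity of $x\mapsto x^{1/4}$. This fails for two reasons. First, the specialized bound is obtained by tuning $(W,\eta)$ to the block's own budgets, so using it per block means comparing against block-varying parameters, whereas the EXP3.P comparator must be a \emph{single} fixed arm in $J$. Second, even quantitatively, Jensen gives $\sum_i (B^{(i)}+1)^{1/4}H^{3/4}\leq (T/H)^{3/4}\bigl(B+T/H\bigr)^{1/4}H^{3/4}=T^{3/4}(B+T/H)^{1/4}$, and with $H=\tilde{\Theta}(T^{1/2})$ this is $\tilde{O}(T^{7/8})$ when $B=O(1)$ — strictly worse than the claimed $T^{3/4}$. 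The paper instead keeps the \emph{general} bound of Theorem \ref{thm:main1} as a function of a common $(W^\dag,\eta^\dag)$; the budget-dependent terms there ($B_p(i)W^\dag/\eta^\dag$, $B_r(i)W^\dag$, $D_{\max}B_p(i)W^\dag$) are \emph{linear} in the block budgets and hence sum exactly to the global-budget expression, while the remaining terms sum to $S\sqrt{A}T/\sqrt{W^\dag}$, $T\eta^\dag$, $SAT/W^\dag$, and $\sqrt{TH}$. Only after this summation does one substitute the grid point adjacent to $(W^*,\eta^*)$ to recover $\tilde{O}(D_{\max}(B_r+B_p+1)^{1/4}S^{2/3}A^{1/2}T^{3/4})$. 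Replacing your Jensen step with this ``sum first, specialize last'' order makes the argument close.
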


\section{The Perils of Drift in Learning Markov Decision Processes}\label{sec:cw_disc}
In stochastic online learning problems, one usually estimates a latent quantity by taking the time average of observed samples, even when the sample distribution varies across time. This has been proved to work well in stationary and non-stationary bandit settings \citep{ABF02,GarivierM11,CSLZ19a,CSLZ19}. To extend to RL, it is natural to consider the sample average transition distribution $\hat{p}_t$, which uses the data in the previous $W$ rounds to estimate the time average transition distribution $\bar{p}_t$ to within an additive error $\tilde{O}(1/  \sqrt{N^+_t(s, a}))$ (see Lemma \ref{lemma:estimation}). In the case of stationary MDPs, where $\forall~t\in[T]~p_t= p$, one has $\bar{p}_t = p.$ Thus, the un-widened confidence region $H_{p,t}(0)$ contains $p$ with high probability (see Lemma \ref{lemma:estimation}). Consequently, the UCRL2 algorithm by \citep{JakschOA10}, which optimistic explores $H_{p,t}(0)$, has a regret that scales linearly with the diameter of $p$.

The approach of optimistic exploring $H_{p, t}(0)$ is further extended to RL in \emph{piecewise-stationary} MDPs by \citep{JakschOA10,GajaneOA18}. The latter establishes a $O(\ell^{1/3} D^{2/3}_{\text{max}} S^{2/3} A^{1/3}T^{2/3})$ dynamic regret bounds, when there are at most $\ell$ changes. Their analyses involve partitioning the $T$-round horizon into $C \cdot T^{1/3}$ equal-length intervals, where $C$ is a constant dependent on $D_\text{max}, S, A, \ell$. At least $C T^{1/3} - \ell$ intervals enjoy stationary environments, and optimistic exploring $H_{p, t}(0)$ in these intervals yields a dynamic regret bound that scales linearly with $D_\text{max}$. Bounding the dynamic regret of the remaining intervals by their lengths and tuning $C$ yield the desired bound.   

In contrast to the stationary and piecewise-stationary settings, optimistic exploration on  $H_{p,t}(0)$ might lead to unfavorable dynamic regret bounds in non-stationary MDPs. In the non-stationary environment where $p_{t-W}, \ldots, p_{t-1}$ are generally distinct, we show that it is impossible to bound the diameter of $\bar{p}_t$ in terms of the maximum of the diameters of $p_{t-W}, \ldots, p_{t-1}$. More generally, we demonstrate the previous claim not only for $\bar{p}_t$, but also for every $\tilde{p}\in H_{p, t}(0)$ in the following Proposition. The Proposition showcases the unique challenge in exploring non-stationary MDPs that is absent in the piecewise-stationary MDPs, and motivates our notion of confidence widening with $\eta > 0$. To ease the notation, we put $t=W+1$ without loss of generality. 
\begin{proposition}\label{lemma:peril1}
	There exists a sequence of non-stationary MDP transition distributions $p_1, \ldots, p_W$ such that 
	\begin{itemize}
		\item The diameter of $(\SSS, \AAA, p_n)$ is $1$ for each $n\in[W].$
		\item The total variations in state transition distributions is $O(1).$
	\end{itemize}
	Nevertheless, under some deterministic policy,
	\begin{enumerate}
		\item The empirical MDP $(\SSS, \AAA, \hat{p}_{W+1})$ has diameter $\Theta(W)$
		\item Further,  for every $\tilde{p}\in H_{p, W+1}(0),$ the MDP $(\SSS, \AAA, \tilde{p})$ has diameter $\Omega(\sqrt{W/\log W})$
	\end{enumerate}
\end{proposition}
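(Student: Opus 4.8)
The plan is to construct a hard instance built on a cycle-like chain of $W+1$ states (plus possibly one auxiliary ``hub'' state) in which each individual transition kernel $p_n$ is highly connected (so its diameter is $1$), but the time-average $\bar p_{W+1}$, and indeed every member of the un-widened confidence region, looks like a near-deterministic chain that takes $\Omega(\sqrt{W/\log W})$ steps to traverse. The key mechanism is the following: fix two states, say $0$ and $1$, and an action $a$. At each time $n\in[W]$, let $p_n(\cdot\mid 0,a)$ put almost all its mass on $1$ except for a tiny ``escape'' probability, but arrange that the specific escape target rotates with $n$, while $p_n(\cdot\mid s,a)$ for $s\neq 0$ is set up so that each $p_n$ on its own is a diameter-$1$ MDP (e.g.\ every state can reach every other in one step with probability bounded below by a constant under some action). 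Crucially, the \emph{deterministic policy} in the statement is chosen so that the realized trajectory visits the state-action pair $(0,a)$ only once (or $O(1)$ times) during the window; that way $N^+_{W+1}(0,a)=O(1)$, the confidence radius $\rad_{p,W+1}(0,a)=2\sqrt{2S\log(SAT/\delta)/N^+_{W+1}(0,a)}$ is $\Theta(\sqrt{\log W})$ in the relevant scaling, and — since the true transitions out of the rarely-visited pairs are essentially unconstrained by data — every $\tilde p\in H_{p,W+1}(0)$ must keep a $\tilde p(\cdot\mid s,a)$ that is within $\Theta(\sqrt{\log W})$ in $\ell_1$ of whatever single sample was observed. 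I would then design the chain so that this $\ell_1$-slack is \emph{insufficient} to create a shortcut of length $o(\sqrt{W/\log W})$: each ``rung'' of the chain can only be shortcut by borrowing $\Omega(1/\sqrt{\log W})$ units of probability mass, and there are $\Theta(W)$ rungs, so the total budget $\Theta(\sqrt{\log W})$ of mass can only shorten $\Theta(\sqrt{W}\cdot\sqrt{\log W}/\sqrt{\log W})$... — this counting is exactly where the $\sqrt{W/\log W}$ comes from, and getting the constants to line up is the delicate part.

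Concretely, here is the order of the steps I would carry out. \textbf{Step 1:} Define the state space $\SSS=\{0,1,\dots,W\}$ (so $S=W+1$), with one action per state except state $0$ which has the single action $a$; under the policy, $s_1=0$, $a_1=a$, and the realized next state is $1$ (the high-probability branch), and thereafter the trajectory walks deterministically $1\to 2\to\cdots$, returning to $0$ rarely. \textbf{Step 2:} Specify $p_n(\cdot\mid s,a')$ for every $(s,a')$ and $n\in[W]$, checking two things: (i) for each fixed $n$, the MDP $(\SSS,\AAA,p_n)$ is communicating with diameter exactly $1$ — this forces me to put, in each $p_n$, a uniform-ish ``reset'' edge from every state back toward $0$ (or toward a hub) with probability $\ge 1-\epsilon$, which is the main design tension since those reset edges also appear in $\bar p_{W+1}$; (ii) $\sum_{n=1}^{W-1} B_{p,n} = \sum_n \max_{s,a'}\|p_{n+1}(\cdot\mid s,a')-p_n(\cdot\mid s,a')\|_1 = O(1)$, which I achieve by letting the only $n$-dependence be a slowly rotating tiny perturbation of total size $O(1/W)$ per step. \textbf{Step 3:} Compute $\hat p_{W+1}$ along the realized trajectory: since each state-action pair on the chain is visited $O(1)$ times, $\hat p_{W+1}(\cdot\mid s,\pi(s))$ is just the single observed outcome, which I have rigged to be the ``chain successor'' $s\to s+1$; hence $(\SSS,\AAA,\hat p_{W+1})$ is a literal deterministic path of length $\Theta(W)$, giving diameter $\Theta(W)$ — this proves part (1). \textbf{Step 4:} For part (2), take any $\tilde p\in H_{p,W+1}(0)$; for each chain state $s$, $\|\tilde p(\cdot\mid s,\pi(s))-\hat p_{W+1}(\cdot\mid s,\pi(s))\|_1\le \rad_{p,W+1}(s,\pi(s)) = \Theta(\sqrt{S\log(SAT/\delta)/1})$. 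I would then invoke a potential/flow argument: assign state $s$ the potential $\phi(s)=W-s$ and show that under $\tilde p$, from any state $s$ the expected one-step decrease in $\phi$ is at most $1 + W\cdot\rad_{p,W+1}$, which is useless directly — so instead I rescale. The right move is to group the chain into $\Theta(\sqrt{W/\log W})$ consecutive blocks each of length $\ell=\Theta(\sqrt{W\log W})$, and observe that to leave a block ``prematurely'' (skip more than one rung) $\tilde p$ must divert $\ge$ (block length) $\times$ (escape-mass-per-rung lower bound) total $\ell_1$ mass, but the available slack per visited pair is only $\rad_{p,W+1}=\Theta(\sqrt{S\log})=\Theta(\sqrt{W\log W})$, so at most $O(1)$ rungs per block can be skipped; hence traversing all blocks costs $\Omega(\sqrt{W/\log W})$ expected steps, i.e.\ diameter $\Omega(\sqrt{W/\log W})$. \textbf{Step 5:} Double-check the $\tilde p\in\Delta^\SSS$ constraint is what caps the shortcutting (you cannot put negative mass anywhere), and that $S=W+1$ makes $\rad_{p,W+1}$ scale as claimed; conclude.

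The main obstacle I anticipate is \textbf{Step 2(i)} colliding with \textbf{Step 4}: I need each $p_n$ individually to have diameter $1$, which demands generous backward/reset edges, yet those same edges survive into $\bar p_{W+1}$ and into the feasible set $H_{p,W+1}(0)$ and would, if too generous, let $\tilde p$ teleport back and destroy the $\Omega(\sqrt{W/\log W})$ lower bound. The resolution I have in mind is asymmetry: make the diameter-$1$ property of each $p_n$ rely on reset edges that are only available under actions/states \emph{off} the realized trajectory (or available only from states with large index back to $0$ via a single ``hub'' state visited with probability $\Theta(\epsilon)$), so that (a) each $p_n$ is genuinely communicating with diameter $1$, but (b) the relevant data for the pairs on the path pins down a near-deterministic forward chain, and the reset-to-hub mass per rung is exactly the $\Theta(1/\ell)$-scale quantity whose total over a block exhausts the $\rad_{p,W+1}$ budget. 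Threading this needle — keeping the per-$n$ diameter at $1$, the variation at $O(1)$, and the post-averaging/post-widening diameter at $\Omega(\sqrt{W/\log W})$ simultaneously — is the crux; once the instance is fixed, Steps 3–5 are essentially a counting argument plus a standard hitting-time lower bound.
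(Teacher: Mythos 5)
Your construction has a genuine gap, and it stems from letting the state space grow with the window: you take $S=W+1$, whereas the paper's instance is a fixed two-state, two-action MDP. This matters in two fatal ways for part (2) of the proposition. First, the confidence radius is $\rad_{p,W+1}(s,a)=2\sqrt{2S\log(SAT/\delta)/N^+_{W+1}(s,a)}$; with $S=\Theta(W)$ and, by your own design, $N^+_{W+1}(s,a)=O(1)$ on the chain, this radius is $\Theta(\sqrt{W\log W})\gg 2$, i.e.\ larger than the maximum possible $\ell_1$ distance between two probability vectors. Hence $H_{p,W+1}(0)$ is the \emph{entire} simplex at every such pair, and it contains transition kernels of diameter $O(1)$ (for instance the true $p_1$ itself), so the universally quantified claim ``every $\tilde p\in H_{p,W+1}(0)$ has diameter $\Omega(\sqrt{W/\log W})$'' is false for your instance. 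Your Step 4 budget-counting treats $\Theta(\sqrt{W\log W})$ as a meaningful constraint, but a constraint of $\ell_1$-radius exceeding $2$ constrains nothing. Second, and independently of the $\sqrt{S}$ factor: your resolution of the Step 2(i)/Step 4 tension is to hide the diameter-$1$ reset edges on actions that the realized trajectory never takes. But unvisited state--action pairs have $N^+=1$ and $\hat p=\mathbf{0}^{\SSS}$, so their confidence sets are again the whole simplex; since the diameter is a \emph{minimum} over policies, a single unconstrained action per state lets some $\tilde p\in H_{p,W+1}(0)$ reinstate a teleport and collapse the diameter. Any construction for part (2) must therefore arrange that \emph{every} action at \emph{every} state is observed $\Theta(W)$ times with a misleading (self-looping) outcome, so that all confidence sets are simultaneously tight and wrong.

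That is exactly what the paper does, by a very different and much simpler device: $\SSS=\{1,2\}$, two actions per state, all kernels deterministic, and the environment alternates between two diameter-$1$ kernels $p^1,p^2$ only a constant number of times (so the total variation is $O(1)$ without any $O(1/W)$-perturbation bookkeeping). The deterministic policy is switched in lockstep so that the agent sits in a self-loop at state $1$ for $\Theta(W)$ steps under \emph{both} actions $a_1,a_2$ and at state $2$ for $\Theta(W)$ steps under both $b_1,b_2$; each of the four state--action pairs is then visited $\Theta(W)$ times with empirical transition ``stay put,'' the radii are $\Theta(\sqrt{\log W/W})$, and every $\tilde p$ in the un-widened region has crossing probabilities $O(\sqrt{\log W/W})$, giving diameter $\Omega(\sqrt{W/\log W})$. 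The lesson is that the pathology comes from the interaction of policy switching with kernel switching under heavy visitation, not from sparse data; your proposal would need to be rebuilt around constant $S$, $A$ and $\Theta(W)$ visit counts to go through.
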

\begin{proof}
	The sequence  $p_1, \ldots, p_W$ alternates between the following 2 instances $p^1, p^2$. Now, define the common state space $\SSS = \{1, 2\}$ and action collection $\AAA = \{\AAA_1, \AAA_2\}$, where $\AAA_1 = \{a_1, a_2\}$, $\{\AAA_2\} = \{b_1, b_2\}$. We assume all the state transitions are deterministic, and a graphical illustration is presented in Fig. \ref{fig:peril}. Clearly, we see that both instances have diameter $1$. 
	\begin{figure}[!ht]
		\centering	
		\includegraphics[width=8.5cm,height=2.35cm]{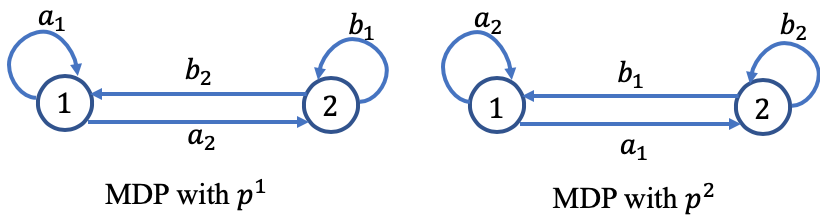}
		\caption{Example MDPs (with deterministic transitions).}
		\label{fig:peril}\vspace{-0.5cm}
	\end{figure}

	Now, consider the following two deterministic and stationary policies $\pi^1$ and $\pi^2:$
	$$\pi^1(1) = a_1,\quad \pi^1(2) = b_2,\quad\pi^2(1) = a_2,\quad \pi^2(2) = b_1.$$ Since the MDP is deterministic, we have $\hat{p}_{W+1} = \bar{p}_{W+1}$.
	
	In the following, we construct a trajectory where the DM alternates between policies $\pi^1, \pi^2$ during time  $\{1, \ldots, W\}$  while the underlying transition kernel alternates between $p^1, p^2$. In the construction, the DM is almost always at the self-loop at state 1 (or 2) throughout the horizon, no matter what action $a_1, a_2 $ (or $b_1, b_2$) she takes. Consequently, it will trick the DM into thinking that $\hat{p}_{W+1}(1|1, a_i) \approx 1$ for each $i\in \{1, 2\}$, and likewise  $\hat{p}_{W+1}(2|2, b_i) \approx 1$ for each $i\in \{1, 2\}$. Altogether, this will lead the DM to conclude that $(\SSS, \AAA, \hat{p}_{W+1})$ constitute a high diameter MDP, since the probability of transiting from state 1 to 2 (and 2 to 1) are close to 0.
	
	The construction is detailed as follows. Let $W = 4\tau$. In addition, let the state transition kernels be $p^1$ from time $1$ to $\tau$ and from time step $2\tau+1$ to $3\tau$ and be $p^2$ for the remaining time steps. The DM starts at state 1. She follows policy $\pi^1$ from time 1 to time $2\tau$, and policy $\pi^2$ from $2 \tau + 1$ to $4\tau$. Under the specified instance and policies, it can be readily verified that the DM takes
	\begin{itemize}
		\item action $a_1$ from time $1$ to $\tau + 1$,
		\item action $b_2$ from time $\tau + 2$ to $2\tau$, 
		\item action $b_1$ from time $2\tau + 1$ to $3\tau + 1$, 
		\item action $a_2$ from time $3\tau + 2$ to $4\tau$.
	\end{itemize}
	As a result, the DM is at state $1$ from time $1$ to $\tau + 1$, and time $3\tau + 2$ to $4\tau$; while she is at state $2$ from time $\tau + 2$ to $3\tau + 1$, as depicted in Fig. \ref{fig:peril1}. 
	We have:
	\begin{align}
	\hat{p}_{W+1}(1 | 1, a_1) =  \frac{\tau}{\tau + 1},&\quad \hat{p}_{W+1}(2 | 1, a_1) = \frac{1}{\tau  +1} \nonumber\\
	\hat{p}_{W+1}(1 | 1, a_2) =1, &\quad  \hat{p}_{W+1}(2 | 1, a_2) = 0 \nonumber\\
	\hat{p}_{W+1}(2 | 2, b_1)  = \frac{\tau}{\tau + 1}, &\quad \hat{p}_{W+1}(1 | 2, b_1) = \frac{1}{\tau + 1}\nonumber\\
	\hat{p}_{W+1}(2 | 2, b_2) =1, &\quad \hat{p}_{W+1}(1 | 2, b_2) = 0\nonumber. 
	\end{align}
	\begin{figure}[!ht]
		\centering
\includegraphics[width=8cm,height=2.49cm]{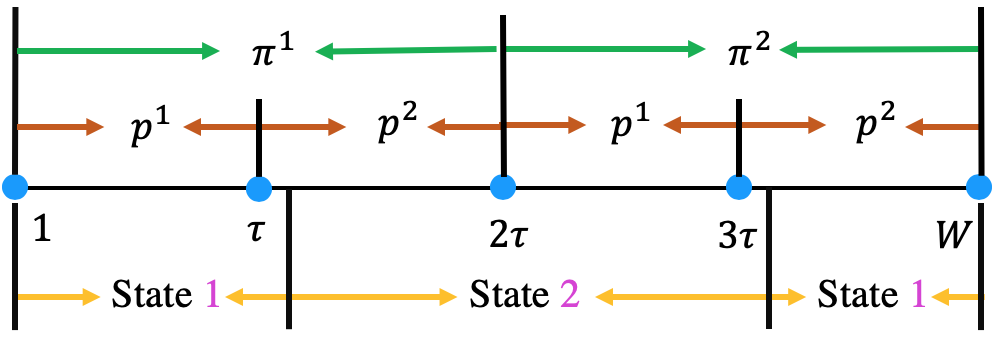}
		\caption{(From top to bottom) Underlying policies, transition kernels, time steps, and state visits.}
		\label{fig:peril1}\vspace{-0.5cm}
	\end{figure}

	Finally, for the confidence region $H_{p, W+1}(0) = \{H_{p, W+1}(s, a ; 0)\}_{s, a}$ constructed without confidence widening, for any $\tilde{p}\in H_{p, W+1}(0)$ we have 
	$\tilde{p}(2 | 1, a_1)=\tilde{p}(1 | 2, b_1)= O\left(\sqrt{\frac{\log W}{\tau+1}}\right)$
	and 
	$\tilde{p}(2 | 1, a_2)=\tilde{p}(1 | 2, b_2) = O\left(\sqrt{\frac{\log W}{\tau-1}}\right)$
	respectively, since the stochastic confidence radii $\Theta\left(\sqrt{\frac{\log W}{\tau+1}}\right)$ and $\Theta\left(\sqrt{\frac{\log W}{\tau-1}}\right)$ dominate the sample mean $\frac{1}{\tau + 1}$ and $0$. Therefore, for any $\tilde{p}\in H_{p, W+1}(0)$, the diameter of the MDP constructed by $(\SSS, \AAA, \tilde{p})$ is at least $\Omega\left(\sqrt{\frac{W}{\log W}}\right)$. 
\end{proof}
\begin{remark}
	Inspecting the prevalent OFU guided approach for stochastic MAB and RL in MDPs settings \citep{ABF02,AYPS11,JakschOA10,BC12,LS18}, one usually concludes that a tighter design of confidence region can result in a lower (dynamic) regret bound. In \citep{AbernethyAZ16}, this insights has been formalized in stochastic K-armed bandit settings via a potential function type argument. Nevertheless, Proposition \ref{lemma:peril1} (together with Theorem \ref{thm:main1}) demonstrates that using the tightest confidence region in learning algorithm design may not be enough to ensure low dynamic regret bound for RL in non-stationary MDPs.
\end{remark}
\section{Conclusion}\label{sec:conclusion}
In this paper, we studied the problem of non-stationary reinforcement learning where the unknown reward and state transition distributions can be different from time to time as long as the total changes are bounded by some variation budgets, respectively. We first incorporated the sliding window estimator and the novel confidence widening technique into the UCRL2 algorithm to propose the \sw~with low dynamic regret when the variation budgets are known. We then designed the parameter-free \borl~that allows us to enjoy this dynamic regret bound without knowing the variation budgets. The main ingredient of the proposed algorithms is the novel confidence widening technique, which injects extra optimism into the design of learning algorithms. This is in contrast to the widely held believe that optimistic exploration algorithms for (stationary and non-stationary) stochastic online learning settings should employ the lowest possible level of optimism. 
\section*{Acknowledgements}
The authors would like to express sincere gratitude to Dylan Foster, Negin Golrezaei, and Mengdi Wang, as well as various seminar attendees for helpful discussions and comments.
\bibliography{nonstatmdp_ref}
\bibliographystyle{icml2020}

\appendix
\onecolumn
	\section{Supplementary Details about MDPs}
\label{sec:mdp_sup}
\subsection{Linear Program Formulations}\label{sec:mdp_lp}
According to Pg. 392 of \citep{Puterman94}, the optimal long term reward $\rho^*_t$ is equal to the optimal value of the linear program $\textsf{P}(r_t, p_t)$. For a reward vector $r$ and a transition kernel $p$, we define 
\begin{align}
\label{eq:primal}
\textsf{P}(r, p): \max    & \sum_{s\in \SSS, a\in \AAA_s}r(s, a) x(s, a) \\
\text{s.t. }   &\sum_{a \in \AAA_s} x(s, a) = \sum_{s'\in\SSS, a'\in \AAA_{s'}} p(s | s',a')x(s', a')      &\quad &\forall s \in \SSS \nonumber\\
&\sum_{s\in \SSS, a\in \AAA_s}x(s, a) = 1       &\quad & \nonumber\\
&x(s, a)\geq 0      &\quad &\forall s\in \SSS, a\in \AAA_s \nonumber
\end{align}
Throughout our analysis, it is useful to consider the following dual formulation $\textsf{D}(r, p)$ of the optimization problem $\textsf{P}(r, p)$:
\begin{align}
\label{eq:dual}
\textsf{D}(r, p): \min   &\quad\rho  \\
\text{s.t.}   &\quad\rho + \gamma(s) \geq r (s, a) + \sum_{s'\in\SSS} p (s' | s,a) \gamma(s')      &\forall s \in \SSS, a\in \AAA_s \nonumber\\
&\quad\phi, \gamma(s)\text{ free}     & \forall s\in \SSS.\quad~\qquad \nonumber
\end{align}
The following Lemma shows that  any feasible solution to $\textsf{D}(r, p)$ is essentially bounded if the underlying MDP is communicating, which will be crucial in the subsequent analysis.
\begin{lemma}\label{lemma:mc}
	Let $(\rho,\gamma)$ be a feasible solution to the dual problem $\textsf{D}(r, p)$, where $(\SSS, \AAA, p)$ consititute a communicating MDP with diameter $D$. We have $$\max_{s, s'\in \mathcal{S}} \left\{ \gamma(s) - \gamma(s') \right\} \leq 2 D.$$ 
\end{lemma}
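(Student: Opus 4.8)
The plan is to reduce the claim to a bound on the \emph{span} $\mathrm{sp}(\gamma):=\max_s\gamma(s)-\min_s\gamma(s)$, since $\max_{s,s'}\{\gamma(s)-\gamma(s')\}$ is exactly this span. Pick a state $\bar s$ with $\gamma(\bar s)=\max_s\gamma(s)$ and a state $\underline s$ with $\gamma(\underline s)=\min_s\gamma(s)$. Because $(\SSS,\AAA,p)$ is communicating with diameter $D$, Definition~\ref{def:diameter} supplies a stationary policy $\pi$ under which $\bar s$ is reached from $\underline s$ in finite expected time; I would run the Markov chain $(s_t)_{t\ge 0}$ with $s_0=\underline s$ and $s_{t+1}\sim p(\cdot\mid s_t,\pi(s_t))$, and let $\tau$ be the first time it visits $\bar s$, so that $\tau<\infty$ almost surely and $\E[\tau]\le D$ (up to the standard off-by-one in the definition of the hitting time $\Lambda$).

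The engine of the argument is a single consequence of dual feasibility, applied at the pair $(s_t,\pi(s_t))$: since $r(s_t,\pi(s_t))\ge 0$,
\[
\rho+\gamma(s_t)\ \ge\ r(s_t,\pi(s_t))+\sum_{s'}p(s'\mid s_t,\pi(s_t))\,\gamma(s')\ \ge\ \E[\gamma(s_{t+1})\mid\mathcal F_t],
\]
so $\E[\gamma(s_{t+1})-\gamma(s_t)\mid\mathcal F_t]\le\rho$; equivalently $Z_t:=\gamma(s_t)-\rho\,t$ is a supermartingale for the natural filtration. Applying optional stopping at $\tau$ gives $\E[Z_\tau]\le Z_0=\gamma(\underline s)$, and since $s_\tau=\bar s$ this reads $\gamma(\bar s)-\rho\,\E[\tau]\le\gamma(\underline s)$, i.e.
\[
\gamma(\bar s)-\gamma(\underline s)\ \le\ \rho\,\E[\tau]\ \le\ \rho D .
\]
It remains to control $\rho$: the relevant $\rho$ is the dual optimum, equal to the optimal long-run average reward, which is at most $1$ because all rewards lie in $[0,1]$; hence $\mathrm{sp}(\gamma)\le D\le 2D$. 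The factor $2$ in the statement comfortably absorbs the off-by-one above as well as the fact that the pair $(\rho,\gamma)$ actually produced by EVI in the algorithm is only $\epsilon$-optimal.

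The step I expect to need the most care is the application of optional stopping with an \emph{unbounded} stopping time $\tau$. The clean route: $\gamma$ is bounded on the finite state space $\SSS$ and $\E[\tau]<\infty$ (this is precisely where the communicating assumption is used), so $|Z_{t\wedge\tau}|\le\max_s|\gamma(s)|+\rho\,\tau$ is dominated by an integrable random variable; dominated convergence then gives $\E[Z_{t\wedge\tau}]\to\E[Z_\tau]$ as $t\to\infty$, and combining with $\E[Z_{t\wedge\tau}]\le Z_0$ (stopped supermartingale) yields $\E[Z_\tau]\le Z_0$. The remaining ingredients — discarding the nonnegative reward term to pass from the dual constraint to the supermartingale inequality, and the bound $\rho\le 1$ — are routine.
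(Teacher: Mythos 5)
Your supermartingale derivation is sound up to and including the inequality $\gamma(\bar{s})-\gamma(\underline{s})\le \rho\,\E[\tau]\le \rho D$: dual feasibility applied along the fastest-travel policy from $\underline{s}$ to $\bar{s}$, discarding the nonnegative reward term, and the optional-stopping step (with the domination argument you give, which is valid because $\E[\tau]\le D<\infty$ under the communicating assumption) are all correct. Note that the paper offers no proof of its own here --- it only cites Section 4.3.1 of \citep{JakschOA10}, where the span bound is obtained by a different route, namely the interpretation of the extended-value-iteration iterates $u_i(s)$ as optimal $i$-step cumulative rewards together with a ``travel to the better state, then imitate'' argument --- so your LP/martingale route is a genuinely different and arguably cleaner alternative for dual solutions.

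The genuine gap is the final step, ``the relevant $\rho$ is the dual optimum, \ldots at most $1$.'' The lemma's hypothesis is that $(\rho,\gamma)$ is merely \emph{feasible} for $\textsf{D}(r,p)$ (the paper explicitly emphasizes that this is weaker than requiring optimality), and feasibility places no upper bound on $\rho$: for any $\gamma$ whatsoever one can make the pair feasible by taking $\rho$ large. Consequently your argument establishes $\max_{s,s'}\{\gamma(s)-\gamma(s')\}\le\rho D$ but not $\le 2D$, and in fact the lemma as literally stated is false. Concretely, take $\SSS=\{1,2\}$ with a single action per state and deterministic transitions $1\to 2\to 1$, so $D=1$; then $\rho=100$, $\gamma(1)=0$, $\gamma(2)=99$ is feasible for $\textsf{D}(r,p)$ for any $r\in[0,1]$, yet $\gamma(2)-\gamma(1)=99>2D$. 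To make your proof (and the statement) correct one must add the hypothesis $\rho\le 2$, which your bound then converts into the claimed conclusion. That hypothesis does hold at every point where the paper invokes the lemma --- $\rho=\rho^*_t\le 1$ for the optimal dual solutions in Proposition \ref{prop:benchmark}, and $\tilde{\rho}\le 1+\epsilon$ for the EVI output at its termination criterion --- but it needs to be stated and verified at each application site rather than absorbed into the phrase ``the relevant $\rho$.''
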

The Lemma is extracted from Section 4.3.1 of \citep{JakschOA10}, and it is more general than \citep{LS18}, which requires $(\rho, \gamma)$ to be optimal instead of just feasible.

\subsection{Proof of Proposition \ref{prop:benchmark}}\label{app:pfprofbenchmark}
We begin with invoking Lemma \ref{lemma:mc}, which guarantees that for each $t$ there is an optimal solution $(\rho^*_t, \gamma^*_t)$ of $\textsf{D}(r_t, p_t)$ that satisfies $0 \leq \gamma^*_t(s)\leq 2 D_\text{max}$ for all $s\in \SSS$. Recall for each $t$:
\begin{align}
B_{r, t} =\max_{s\in \SSS, a\in \AAA_s}\left| r_{t + 1}(s, a) - r_t(s, a)  \right|,\quad B_{p, t}=\max_{s\in \SSS, a\in \AAA_s }\left\| p_{t + 1}(\cdot | s, a) - p_t ( \cdot  |s, a)  \right\|_1.
\end{align}
Consider two time indexes $t \leq \tau$. We first claim the following two inequalities:
\begin{align}
\rho^*_\tau & \geq \rho^*_t - \sum^{\tau - 1}_{q = t}\left( B_{r, q} + 2 D_\text{max} B_{p, q}\right)\label{eq:benchmark_ineq_1}\\
\rho^*_t &\geq r_\tau (s_\tau, a_\tau)  + \left[ \sum_{s'\in\SSS} p_\tau (s' | s_\tau,a_\tau) \gamma^*_t(s') - \gamma^*_t(s_\tau) \right] - \sum^{\tau - 1}_{q = t}\left( B_{r, q} + 2 D_\text{max} B_{p, q}\right) \label{eq:benchmark_ineq_2}.
\end{align}
The proofs of inequalities (\ref{eq:benchmark_ineq_1}, \ref{eq:benchmark_ineq_2}) are deferred to the end. Now, combining (\ref{eq:benchmark_ineq_1}, \ref{eq:benchmark_ineq_2}) gives
\begin{equation}\label{eq:benchmark_key}
\rho^*_\tau \geq r_\tau (s_\tau, a_\tau)  + \left[ \sum_{s'\in\SSS} p_\tau (s' | s_\tau,a_\tau) \gamma^*_t(s') - \gamma^*_t(s_\tau) \right] - 2\sum^{\tau - 1}_{q = t}\left( B_{r, q} + 2 D_\text{max} B_{p, q}\right).
\end{equation}
Let positive integer $W\leq T$ be a window size, which is specified later. Summing (\ref{eq:benchmark_key}) over $\tau = t, \ldots, t + W - 1$ and taking expectation over $\{(s_\tau, a_\tau)\}^{t+W - 1}_{\tau = t}$ yield
\begin{align}
&\sum^{t-W + 1}_{\tau = t} \rho^*_\tau \geq \mathbb{E}\left[\sum^{t-W + 1}_{\tau = t}  r_\tau (s_\tau, a_\tau) \right] + \mathbb{E}\left[\sum^{t-W }_{\tau = t} p_\tau (s' | s_\tau,a_\tau) \gamma^*_t(s') - \gamma^*_t(s_{\tau+1}) \right] \label{eq:benchmark_step1}\\
+ & \mathbb{E}\left[\sum_{s'\in\SSS} p_{t - W + 1} (s' | s_{t-W + 1},a_{t- W+1}) \gamma^*_t(s')  - \gamma^*_t(s_t)\right] - 2\sum^{t-W + 1}_{\tau = t}\sum^{\tau - 1}_{q = t}\left( B_{r, q} + 2 D_\text{max} B_{p, q}\right)\label{eq:benchmark_step2}\\
&\geq \mathbb{E}\left[\sum^{t-W + 1}_{\tau = t}  r_\tau (s_\tau, a_\tau) \right] -  2 D_\text{max} - 2W\sum^{t + W - 1}_{q = t}\left( B_{r, q} + 2 D_\text{max} B_{p, q}\right)\label{eq:benchmark_key2}.
\end{align}
To arrive at (\ref{eq:benchmark_key2}), note that the second expectation in (\ref{eq:benchmark_step1}), which is a telescoping sum, is equal to 0, since $s_{\tau + 1}$ is distributed as $p( \cdot | s_\tau, a_\tau)$. In addition, we trivially lower bound the first expectation in (\ref{eq:benchmark_step2}) by $-2 D_\text{max}$ by applying Lemma \ref{lemma:mc}. Next, consider partitioning the horizon of $T$ steps into intervals of $W$ time steps, where last interval could have less than $W$ time steps. That is, the first interval is $\{1, \ldots, W\}$, the second is $\{W+1, \ldots, 2W\}$, and so on. Applying the bound (\ref{eq:benchmark_key2}) on each interval and summing the resulting bounds together give
\begin{align}
\sum^T_{t=1} \rho^*_t &\geq \mathbb{E}\left[\sum^T_{t=1} r_t(s_t, a_t)\right] - 2 \lceil \frac{T}{W}\rceil D_\text{max} - 2W\sum^T_{t=1}(B_{r, t} + 2 D_\text{max}B_{p, t})\nonumber\\
&\geq \mathbb{E}\left[\sum^T_{t=1} r_t(s_t, a_t)\right] - \frac{4T D_\text{max}}{W} - 2W (B_r + 2 D_\text{max} B_p).
\end{align}
Choosing $W$ to be any integer in $[\sqrt{T/ (B_r + 2 D_\text{max}B_p)}, 2\sqrt{T/ (B_r + 2 D_\text{max}B_p)}]$ yields the desired inequality in the Theorem. Finally, we go back to proving inequalities (\ref{eq:benchmark_ineq_1},\ref{eq:benchmark_ineq_2}). These inequalities are clearly true when $t = \tau$, so we focus on the case $t < \tau$.

\textbf{Proving inequality (\ref{eq:benchmark_ineq_1}). }It suffices to show that the solution $(\rho^*_\tau + \sum^{\tau - 1}_{q = t}(B_{r, q} + 2 D_\text{max} B_{p, q}), \gamma^*_\tau)$ is feasible to the linear program \textsf{D}$(r_t, p_t)$. To see the feasibility, it suffices to check the constraint of \textsf{D}$(r_t, p_t)$ for each state action pair $s, a$:
\begin{align*}
& \rho^*_\tau + \sum^{\tau - 1}_{q = t}(B_{r, q} + 2 D_\text{max} B_{p, q})\nonumber\\
\geq & \left[r_\tau(s, a) + \sum^{\tau - 1}_{q = t}B_{r, q}\right] + \left[ - \gamma^*_\tau(s)+ \sum_{s'\in \SSS} p_\tau(s' | s, a)\gamma^*_\tau(s') +\sum^{\tau - 1}_{q = t} 2 D_\text{max} B_{p, q}\right] \nonumber. 
\end{align*}
The feasibility is proved by noting that 
\begin{align}
\left|r_\tau(s, a) - r_t(s,a)\right|& \leq \sum^{\tau - 1}_{q= t} B_{r, q},\label{eq:benchmark_for_r}\\
\left|\sum_{s'\in \SSS} p_\tau(s' | s, a)\gamma^*_\tau(s') - \sum_{s'\in \SSS} p_t(s' | s, a)\gamma^*_\tau(s') \right| &\leq \left\| p_\tau(\cdot | s, a) - p_t(\cdot | s, a)\right\|_1 \left\| \gamma^*_\tau \right\|_\infty \nonumber\\
&\leq \sum^{\tau - 1}_{q = t} B_{p, q} (2 D_\text{max}).\label{eq:benchmark_for_p}
\end{align}

\textbf{Proving inequality (\ref{eq:benchmark_ineq_2}). }We have 
\begin{align}
\rho^*_t &\geq r_t (s_\tau, a_\tau) + \sum_{s'\in\SSS} p_t (s' | s_\tau,a_\tau) \gamma^*_t(s') - \gamma^*_t(s_\tau) \nonumber\\
&\geq r_\tau (s_\tau, a_\tau) + \sum_{s'\in\SSS} p_t (s' | s_\tau,a_\tau) \gamma^*_t(s') - \gamma^*_t(s_\tau)  - \sum^{\tau - 1}_{s = t} B_{r, s}  \label{eq:benchmark_by_r}\\
&\geq r_\tau (s_\tau, a_\tau)  + \sum_{s'\in\SSS} p_\tau (s' | s_\tau,a_\tau) \gamma^*_t(s') - \gamma^*_t(s_\tau) - \sum^{\tau - 1}_{s = t} B_{r, s} - 2D_{\text{max}} \sum^{\tau - 1}_{s = t} B_{p, s} \label{eq:benchmark_by_p},
\end{align} 
where steps (\ref{eq:benchmark_by_r}, \ref{eq:benchmark_by_p}) are by inequalities (\ref{eq:benchmark_for_r}, \ref{eq:benchmark_for_p}). Altogether, the Proposition is proved.

\subsection{Extended Value Iteration (EVI) by \citep{JakschOA10}}\label{app:evi}
\begin{algorithm}[!ht]
	\caption{{\sc EVI}$(H_r , H_p; \epsilon)$, mostly extracted from \citep{JakschOA10}}\label{alg:evi}
	\begin{algorithmic}[1]
		\STATE \textbf{Initialize} VI record $u_0\in \mathbb{R}^{\SSS}$ as $u_0(s) = 0$ for all $s\in \SSS.$
		\FOR{$i = 0, 1, \ldots$}
		\STATE For each $s\in \SSS$, compute VI record $u_{i+1}(s) = \max_{a\in \AAA_s}\tilde{\Upsilon}_i(s, a)$, where $$ \tilde{\Upsilon}_i(s, a) = \max_{\dot{r}(s, a)\in H_r(s, a)} \{\dot{r}(s, a)\} + \max_{\dot{p}\in H_p(s, a)}\left\{\sum_{s'\in \SSS} u_i(s')\dot{p}(s')\right\}.$$
		\STATE Define stationary policy $\tilde{\pi}:\SSS \rightarrow \AAA_s$ as $\tilde{\pi}(s) = \text{argmax}_{a\in \AAA_s}\tilde{\Upsilon}_i(s, a).$\;\\
		\STATE Define optimistic reward $\tilde{r} = \{\tilde{r}(s, a)\}_{s, a}$ with $\tilde{r}(s, a) \in \underset{\dot{r}(s, a)\in H_r(s, a)}{\text{argmax}} \{\dot{r}(s, a)\}$.\;\\
		\STATE Define optimistic kernel $\tilde{p} = \{\tilde{p}(\cdot | s, a)\}_{s, a}$ with $\tilde{p}(\cdot | s, a) \in \underset{\dot{p}\in H_p(s, a)}{\text{argmax}}\left\{\sum_{s'\in \SSS} u_i(s')\dot{p}(s')\right\}$.	\;\\
		\STATE Define optimistic dual variables $\tilde{\rho} = \max_{s\in \SSS}\left\{ u_{i+1}(s) - u_i(s) \right\}$, $\tilde{\gamma}(s) = u_i(s) - \min_{s\in \SSS}u_i(s)$.\;\\
		\label{alg:evi_termination}
		\IF{$\max_{s\in \SSS}\left\{ u_{i+1}(s) - u_i(s) \right\} - \min_{s\in \SSS}\left\{ u_{i+1}(s) - u_i(s) \right\} \leq \epsilon$}
		\STATE Break the \textbf{for} loop.
		\ENDIF
		\ENDFOR
		\STATE\textbf{Return} policy $\tilde{\pi}$.\;
		\STATE\textbf{Auxiliary output:} optimistic reward and kernel $(\tilde{r}, \tilde{p})$, optimistic dual variables $(\tilde{\rho}, \tilde{\gamma})$.	
	\end{algorithmic}
\end{algorithm}
We provide the pseudo-codes of EVI$(H_r, H_p; \epsilon)$ proposed by \citep{JakschOA10} in Algorithm \ref{alg:evi}. By \citep{JakschOA10}, the algorithm converges in finite time when the confidence region $H_p$ contains a transition kernel $p$ such that $(\SSS, \AAA, p)$ constitutes a communicating MDP. The output $(\tilde{\pi}, \tilde{r}, \tilde{p}, \tilde{\rho}, \tilde{\gamma})$ of the EVI$(H_r, H_p; \epsilon)$ satisfies the following two properties \citep{JakschOA10}.
\begin{property}
	\label{property:1}
	The dual variables $(\tilde{\rho}, \tilde{\gamma})$ are optimistic, \ie, $$\tilde{\rho} + \tilde{\gamma}(s) \geq \max_{\dot{r}(s, a) \in H_r(s, a)} \{\dot{r}(s, a)\} + \sum_{s'\in \SSS} \tilde{\gamma}(s') \max_{\dot{p} \in H_p(s, a)} \{\dot{p}(s' | s, a) \}.$$
\end{property}
\begin{property}
	\label{property:2}
	For each state $s\in \SSS$, we have
	$$\tilde{r}(s, \tilde{\pi}(s)) \geq \tilde{\rho} + \tilde{\gamma}(s) - \sum_{s' \in \SSS}\tilde{p}(s' | s, \tilde{\pi}(s)) \tilde{\gamma}(s') - \epsilon.$$	
\end{property}

\textbf{Property 1} ensures the feasibility of the output dual variables $(\tilde{\rho}, \tilde{\gamma})$, with respect to the dual program $\textsf{D}(\dot{r}, \dot{p})$ for any $\dot{r}, \dot{p}$ in the confidence regions $H_{r}, H_p$. The feasibility facilitates the bounding of $\max_{s\in \SSS}\tilde{\gamma}(s)$, which turns out to be useful for bounding the regret arise from switching among different stationary policies. To illustrate, suppose that $H_p$ is so large that it contains a transition kernel $\dot{p}$ under which $(\SSS, \AAA, \dot{p})$ has diameter $D$. By Lemma \ref{lemma:mc}, we have $0\leq \max_{s\in\SSS}\tilde{\gamma}(s) \leq 2D$. 

\textbf{Property 2} ensures the near-optimality of the dual variables $(\tilde{\rho}, \tilde{\gamma})$ to the $(\tilde{r}, \tilde{p})$ optimistically chosen from $H_{r}, H_p$. More precisely, the deterministic policy $\tilde{\pi}$ is nearly-optimal for the MDP with time homogeneous mean reward $\tilde{r}$ and time homogeneous transition kernel $\tilde{p}$, under which the policy $\tilde{\pi}$ achieves a long term average reward is at least $\tilde{\rho}^* - \epsilon$.

\section{Proof of Lemma \ref{lemma:estimation}}
\label{sec:lemma:estimation}
We employ the self-normalizing concentration inequallity \citep{AYPS11}. The following inequality is extracted from Theorem 1 in \citep{AYPS11}, restricted to the case when $d=1$.
\begin{proposition}[\citep{AYPS11}]\label{prop:self-norm}
	Let $\{{\cal F}_q\}^T_{q=1}$ be a filtration. Let $\{\xi_q\}^T_{q=1}$ be a real-valued stochastic process, such that $\xi_q$ is ${\cal F}_q$-measurable, and $\xi_q$ is conditionally $R$-sub-Gaussian, i.e. for all $\lambda \geq 0 $, it holds that $\mathbb{E}[\exp(\lambda \xi_q) | {\cal F}_{q-1}] \leq \exp(\lambda^2 R^2 / 2)$. Let $\{Y_q\}^T_{q=1}$ be a non-negative real-valued stochastic process such that $Y_q$ is ${\cal F}_{q-1}$-measurable. For any $\delta' \in (0, 1)$, it holds that
	\begin{equation*}
	\Pr\left(\frac{\sum^t_{q=1} \xi_q Y_q  }{\max\{1 ,\sum^t_{q=1} Y_q^2 \}} \leq 2 R \sqrt{\frac{\log (T/\delta')}{\max \{1,  \sum^t_{q=1}Y_q^2 \}}} \; \text{ for all $t\in [T]$}\right) \geq 1-\delta'.
	\end{equation*}
	In particular, if $\{Y_q\}^T_{q=1}$ be a $\{0, 1\}$-valued stochastic process, then for any $\delta' \in (0, 1)$, it holds that
	\begin{equation}\label{eq:norm_special}
	\Pr\left(\frac{\sum^t_{q=1} \xi_q Y_q  }{\max\{1 ,\sum^t_{q=1} Y_q \}} \leq 2 R \sqrt{\frac{\log (T/\delta')}{\max \{1,  \sum^t_{q=1}Y_q \}}} \; \text{ for all $t\in [T]$}\right) \geq 1-\delta'.
	\end{equation}
\end{proposition}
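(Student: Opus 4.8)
The plan is to prove the self-normalized bound by the \emph{method of mixtures} of \citep{AYPS11}, rather than by a union bound over time. Write $S_t = \sum_{q=1}^t \xi_q Y_q$ and $V_t = \sum_{q=1}^t Y_q^2$. Multiplying the claimed inequality through by $\max\{1,V_t\}>0$ shows it is equivalent to: with probability at least $1-\delta'$, $S_t \le 2R\sqrt{\log(T/\delta')}\,\sqrt{\max\{1,V_t\}}$ for all $t\in[T]$ simultaneously. Since the right-hand side is nonnegative, only the regime $S_t>0$ is nontrivial, which is consistent with the hypothesis being one-sided ($\lambda\ge 0$) and the conclusion one-sided. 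The engine is the exponential process $M_t^\lambda = \exp\!\big(\lambda S_t - \tfrac12\lambda^2 R^2 V_t\big)$ for $\lambda\ge 0$, which I will show is a nonnegative supermartingale with $M_0^\lambda = 1$.

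First I would verify the supermartingale property. Because $Y_t$ is $\mathcal{F}_{t-1}$-measurable and $\xi_t$ is conditionally $R$-sub-Gaussian, for $\lambda\ge0$ we have $\E[\exp(\lambda Y_t \xi_t)\mid\mathcal{F}_{t-1}] \le \exp(\tfrac12\lambda^2 R^2 Y_t^2)$, so that $\E[M_t^\lambda\mid\mathcal{F}_{t-1}] = M_{t-1}^\lambda\,\E[\exp(\lambda Y_t\xi_t - \tfrac12\lambda^2 R^2 Y_t^2)\mid\mathcal{F}_{t-1}] \le M_{t-1}^\lambda$. Next I would mix: put a Gaussian prior $\lambda\sim\mathcal{N}(0,\sigma^2)$ restricted to $\lambda\ge0$ and set $M_t = \int M_t^\lambda\,d\mu(\lambda)$; by Tonelli this is again a nonnegative supermartingale with $\E[M_0]=1$. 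Completing the square in the Gaussian integral evaluates the unrestricted mixture in closed form as $(1+\sigma^2 R^2 V_t)^{-1/2}\exp\!\big(S_t^2/(2(R^2 V_t+\sigma^{-2}))\big)$, and the restriction to $\lambda\ge0$ only increases the integral when $S_t>0$, so the restricted mixture dominates this expression from below.

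I would then apply Ville's maximal inequality, $\Pr(\sup_t M_t \ge 1/\delta')\le \delta'\,\E[M_0] = \delta'$. On the complementary event, combining $M_t\le 1/\delta'$ with the closed-form lower bound, taking logarithms, and rearranging gives $S_t^2 \le 2(R^2 V_t+\sigma^{-2})\log\!\big(\sqrt{1+\sigma^2 R^2 V_t}/\delta'\big)$ for all $t$ at once. Choosing $\sigma^2 = 1/R^2$ turns the prefactor into $R^2(V_t+1)\le 2R^2\max\{1,V_t\}$, and invoking the a priori bound $V_t\le T$ (valid whenever $Y_q\in[0,1]$, in particular the $\{0,1\}$-valued case) together with $\tfrac12\log(1+T)\le\log T$ bounds the logarithmic factor by $\log(T/\delta')$. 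This yields $S_t^2 \le 4R^2\max\{1,V_t\}\log(T/\delta')$, which is exactly the stated inequality with constant $2R$. The special case \eqref{eq:norm_special} is then immediate: when $Y_q\in\{0,1\}$ one has $Y_q^2 = Y_q$, so $V_t = \sum_{q=1}^t Y_q$ and the two denominators coincide.

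The main obstacle is the self-normalization itself: the normalizer $\max\{1,V_t\}$ is random and $\mathcal{F}_{t-1}$-adapted, so one cannot legitimately fix the optimal tilt $\lambda\approx S_t/(R^2 V_t)$ inside a Chernoff bound, as that choice would be anticipatory. The mixture device is precisely what allows integrating over all tilts simultaneously and extracting the data-dependent optimum, while Ville's inequality upgrades fixed-$t$ control to a uniform-in-$t$ statement \emph{without} a union bound over time. The only remaining subtlety, and the one I would present most carefully, is converting the determinant-type factor $\sqrt{1+\sigma^2 R^2 V_t}$ produced by the mixture into the clean $\log(T/\delta')$ of the statement; this is where the boundedness of $Y_q$ (hence $V_t\le T$) enters, and it is the reason $T$ appears in the logarithm even though no union over $t$ is taken.
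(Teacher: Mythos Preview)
The paper does not prove this proposition; it is simply cited as ``extracted from Theorem~1 in \citep{AYPS11}, restricted to the case when $d=1$,'' and only the $\{0,1\}$-valued special case \eqref{eq:norm_special} is ever invoked (in the proof of Lemma~\ref{lemma:estimation}). Your method-of-mixtures argument is exactly the technique of \citep{AYPS11}, so in that sense it matches the source the paper defers to.

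The argument itself is sound for the special case, including the half-Gaussian device: with prior density $2\phi_\sigma(\lambda)\mathbf{1}_{\lambda\ge 0}$ one has $\E[M_0]=1$, and for $S_t>0$ the symmetry $M_t^\lambda\ge M_t^{-\lambda}$ indeed gives $M_t\ge \tilde M_t$, the closed-form full-Gaussian mixture, so your lower bound is legitimate. One point worth flagging, which you already note parenthetically: the step $V_t\le T$ used to convert $\tfrac12\log(1+V_t)$ into $\log T$ requires $Y_q\in[0,1]$, not merely $Y_q\ge 0$. The paper's first display is stated for arbitrary nonnegative $Y_q$, and under that hypothesis alone $V_t=\sum_q Y_q^2$ is unbounded, so the mixture route only yields a bound with $\log(\sqrt{1+V_t}/\delta')$ in place of $\log(T/\delta')$; the form written in the paper does not follow without an additional assumption on the size of $Y_q$. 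This is a minor imprecision in the paper's restatement rather than a flaw in your argument, and it is harmless here because every application in the paper takes $Y_q$ to be an indicator.
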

The Lemma is proved by applying (\ref{eq:norm_special}) in  Proposition \ref{prop:self-norm} with suitable choices of ${\cal F}^T_{q=1}, \{\xi_q\}^T_{q=1}, \{Y_q\}^T_{q=1}, \delta$. We divide the proof into two parts.
\subsection{Proving $\Pr[{\cal E}_r] \geq 1 - \delta / 2$}
It suffices to prove that, for any fixed $s\in \SSS, a\in \AAA_s, t\in [T]$, it holds that
\begin{align}
&\Pr\left(\left| \hat{r}_(s, a) - \bar{r}_t(s, a)  \right|\leq \rad_{r,t}(s, a)\right) \nonumber\\
= & \Pr\left( \left|\frac{1}{N^+_t(s, a)} \sum^{t-1}_{q = (\tau(m) - W)\vee 1} \left[ R_q(s, a) - r_q(s, a)\right]\cdot \mathbf{1}(s_q = s, a_q = a) \right| \leq 2\sqrt{\frac{\log(2 SAT^2 / \delta)}{N^+_t(s, a)}} \right)\nonumber\\
\geq & 1- \frac{\delta}{2SAT}.\label{eq:to_prove_E_r_each}
\end{align}
since then $\Pr[{\cal E}_r]\geq 1 - \delta / 2$ follows from the union bound over all $s\in \SSS, a\in \AAA_s, t\in [T]$. Now, the trajectory of the online algorithm is expressed as $\{s_q, a_q, R_q\}^T_{q=1}$. Inequality (\ref{eq:to_prove_E_r_each}) directly follows from Proposition \ref{prop:self-norm}, with $\{{\cal F}_q\}^T_{q=1}, \{\xi_q\}^T_{q=1}, \{Y_q\}^T_{q=1}, \delta $ defined as
\begin{align*}
{\cal F}_q &= \{(s_\ell, a_\ell, R_\ell)\}^q_{\ell=1} \cup \{(s_{q+1}, a_{q+1})\},  \\
\xi_q &= R_q(s, a) - r_q(s, a),\\
Y_q &= \mathbf{1}\left(s_q=  s, a_q = a, ((t - W)\vee 1 )\leq q\leq t-1\right), \\
\delta' &= \frac{\delta}{2SAT}.
\end{align*}
Each $\xi_q$ is conditionally 2-sub-Gaussian, since $-1\leq \xi_q\leq 1$ with certainty. Altogether, the required inequality is shown.

\subsection{Proving $\Pr[{\cal E}_p ] \geq 1 - \delta / 2$}
We start by noting that, for two probability distributions $p, \{p(s)\}_{s\in \SSS}, p' = \{p'(s)\}_{s\in \SSS}$ on $\SSS$, it holds that
$$
\left\| p - p'\right\|_1 = \max_{\theta\in \{-1, 1\}^\SSS} \theta(s)\cdot (p(s) - p'(s)).
$$
Consequently, to show $\Pr[{\cal E}_p]\geq 1 - \delta / 2$, it suffices to show that, for any fixed $s\in \SSS, a\in \AAA_s, t\in [T], \theta\in \{-1, 1\}^\SSS$, it holds that
\begin{align}
&\Pr\left( \sum_{s'\in \SSS} \theta(s) \cdot \left( \hat{p}_t(s' | s, a) - \bar{p}_t(s' | s, a)  \right) \leq \rad_{p,t}(s, a)\right) \nonumber\\
\leq & \Pr\left( \frac{1}{N^+_t(s, a)} \sum^{t-1}_{q = (\tau(m) - W)\vee 1} \left[\sum_{s'\in \SSS} \theta(s') \mathbf{1}(s_q = s, a_q = a, s_{q+1} = s' )\right] \right. \nonumber\\
&\qquad \left.  - \left[ \sum_{s'\in \SSS} \theta(s') p_q(s' | s, a) \cdot \mathbf{1}(s_q = s, a_q = a) \right]
\leq 2\sqrt{\frac{\log(2SAT^2 2^S / \delta)}{N^+_t(s, a)}} \right)\nonumber\\
\geq & 1- \frac{\delta}{2SAT 2^S},\label{eq:to_prove_E_p_each}
\end{align}
since then the required inequality follows from a union bound over all $s\in \SSS, a\in \AAA_s, t\in [T], \theta\in \{-1 .1\}^\SSS$. Similar to the casea of  ${\cal E}_r$, (\ref{eq:to_prove_E_p_each}) follows from Proposition \ref{prop:self-norm}, with $\{{\cal F}_q\}^T_{q=1}, \{\xi_q\}^T_{q=1}, \{Y_q\}^T_{q=1}, \delta $ defined as
\begin{align*}
{\cal F}_q &= \{(s_\ell, a_\ell)\}^{q+1}_{\ell=1},  \\
\xi_q &= \left[\sum_{s'\in \SSS} \theta(s') \mathbf{1}(s_q = s, a_q = a, s_{q+1} = s' )\right] - \left[ \sum_{s'\in \SSS} \theta(s') p_q(s' | s, a) \right],\\
Y_q &= \mathbf{1}\left(s_q=  s, a_q = a, ((t - W)\vee 1 )\leq q\leq t-1\right), \\
\delta' &= \frac{\delta}{2SAT 2^S}.
\end{align*}
Each $\xi_q$ is conditionally 2-sub-Gaussian, since $-1\leq \xi_q\leq 1$ with certainty. Altogether, the required inequality is shown.

\section{Proof of Proposition \ref{prop:error}}\label{app:aux_prove_prop_error}
In this section, we prove Proposition \ref{prop:error}. Throughout the section, we impose the assumptions stated by the Proposition. That is, the events ${\cal E}_r, {\cal E}_p$ hold, and there exists $p$ with (1) $p \in H_{p, \tau(m)}(\eta)$, (2) $(\SSS, \AAA, p)$ has diameter at most $D$. We begin by recalling the following notations:
\begin{align*}
B_{r, t} =\max_{s\in \SSS, a\in \AAA_s}\left| r_{t + 1}(s, a) - r_t(s, a)  \right| &,\quad B_{p, t}=\max_{s\in \SSS, a\in \AAA_s }\left\| p_{t + 1}(\cdot | s, a) - p_t ( \cdot  |s, a)  \right\|_1, \nonumber\\
\dev_{r, t} = \sum^{t-1}_{q = \tau(m) - W}B_{r, q} &,\quad \dev_{p, t} =  \sum^{t-1}_{q = \tau(m) - W} B_{p, q}.
\end{align*}
We then need the following auxiliary lemmas
\begin{lemma}\label{lemma:dev_r_p}
	Let $t$ be in episode $m$. For every state-action pair $(s, a)$, we have $$\left| r_t(s, a) - \bar{r}_{\tau(m)} (s, a)\right| \leq \dev_{r, t}, \quad \left\| p_t(\cdot | s, a) - \bar{p}_{ \tau(m)} (\cdot | s, a)\right\|_1 \leq \dev_{p, t}$$
\end{lemma}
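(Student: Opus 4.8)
The plan is to exploit the fact that, for every round $t$ in episode $m$, the quantity $\bar{r}_{\tau(m)}(s,a)$ is a convex combination of the mean rewards $\{r_q(s,a)\}$ over the time steps $q$ in the window $[(\tau(m)-W)\vee 1,\ \tau(m)-1]$ at which $(s,a)$ was played, and likewise $\bar{p}_{\tau(m)}(\cdot|s,a)$ is a convex combination of the corresponding distributions $\{p_q(\cdot|s,a)\}$. So first I fix a state-action pair $(s,a)$ and a round $t\in\{\tau(m),\ldots,\tau(m+1)-1\}$, write $N=N_{\tau(m)}(s,a)$ and $q_0=(\tau(m)-W)\vee 1$ for the lower window limit, and assume $N\geq 1$ (when $N=0$ both averages equal $0$ and the inequality must be read with the understanding that the associated confidence radius, not $\dev$, is what dominates — this degenerate case only needs a one-line side remark, since it is irrelevant to every later application where the radius swamps everything).

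The heart of the argument is a telescoping estimate. For any $q$ with $q_0\le q\le \tau(m)-1$, since $t\ge \tau(m)$ we have $q\le t-1$, hence
$$|r_t(s,a)-r_q(s,a)|\le \sum_{\ell=q}^{t-1}|r_{\ell+1}(s,a)-r_\ell(s,a)|\le \sum_{\ell=q}^{t-1}B_{r,\ell}\le \sum_{\ell=\tau(m)-W}^{t-1}B_{r,\ell}=\dev_{r,t},$$
where the second inequality is just the definition of $B_{r,\ell}$ and the third uses $q\ge q_0\ge \tau(m)-W$ together with $B_{r,\ell}\ge 0$. Running the identical computation with $\|\cdot\|_1$ in place of $|\cdot|$ and $B_{p,\ell}$ in place of $B_{r,\ell}$ gives $\|p_t(\cdot|s,a)-p_q(\cdot|s,a)\|_1\le \dev_{p,t}$ for every such $q$.

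To finish, I substitute these pointwise bounds into the convex-combination representations. The weights $\mathbf{1}(s_q=s,a_q=a)/N$ are nonnegative and sum to $1$ over $q\in\{q_0,\ldots,\tau(m)-1\}$, so
$$|r_t(s,a)-\bar{r}_{\tau(m)}(s,a)|=\Bigl|\sum_{q=q_0}^{\tau(m)-1}\tfrac{\mathbf{1}(s_q=s,a_q=a)}{N}\bigl(r_t(s,a)-r_q(s,a)\bigr)\Bigr|\le \dev_{r,t},$$
and, using the triangle inequality for $\|\cdot\|_1$ (convexity of the norm),
$$\|p_t(\cdot|s,a)-\bar{p}_{\tau(m)}(\cdot|s,a)\|_1\le \sum_{q=q_0}^{\tau(m)-1}\tfrac{\mathbf{1}(s_q=s,a_q=a)}{N}\,\|p_t(\cdot|s,a)-p_q(\cdot|s,a)\|_1\le \dev_{p,t}.$$
That is essentially the whole proof; it is routine. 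The only place that genuinely requires care is the index bookkeeping — making sure every $q$ entering the average lies in $[\tau(m)-W,\ t-1]$ so that its telescoped variation is subsumed by $\dev_{r,t}$ (resp. $\dev_{p,t}$), and disposing of the empty-window ($N=0$) case separately — so I do not expect a real obstacle here.
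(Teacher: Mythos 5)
Your proof is correct and follows essentially the same route as the paper's: both arguments reduce to telescoping the per-step variation budgets over the window $[\tau(m)-W,\,t-1]$ and then using that $\bar{r}_{\tau(m)}(s,a)$ (resp.\ $\bar{p}_{\tau(m)}(\cdot|s,a)$) is a convex combination of the $r_q(s,a)$ (resp.\ $p_q(\cdot|s,a)$) over visited rounds in the window, the only cosmetic difference being that the paper splits the triangle inequality at $r_{\tau(m)}$ while you telescope directly from each $q$ to $t$. Your explicit handling of the degenerate case $N_{\tau(m)}(s,a)=0$ (where $\bar{r}_{\tau(m)}(s,a)=0$ and the stated inequality can literally fail) is a point the paper silently glosses over, and your remark that it is harmless because the confidence radius then covers all of $[0,1]$ is the right disposition.
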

\begin{lemma}\label{lemma:dev_rho}
	Let $t$ be in episode $m$. We have 	
	$$\tilde{\rho}_{\tau(m)} \geq \rho^*_t - \dev_{r, t} - 2D\cdot \dev_{p, t}.$$ 
\end{lemma}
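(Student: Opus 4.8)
The plan is to relate $\tilde\rho_{\tau(m)}$, the optimistic average gain returned by EVI at the start of episode $m$, to the true optimal gain $\rho^*_t$ at a later time $t$ in the same episode, by chaining together two comparisons: first from $\rho^*_t$ to the optimal gain of the \emph{time-averaged} MDP $(\bar r_{\tau(m)}, \bar p_{\tau(m)})$, and then from that to $\tilde\rho_{\tau(m)}$. The first comparison is exactly what Lemma \ref{lemma:dev_r_p} is set up to deliver: it bounds $|r_t - \bar r_{\tau(m)}|$ by $\dev_{r,t}$ and $\|p_t - \bar p_{\tau(m)}\|_1$ by $\dev_{p,t}$ pointwise over state-action pairs. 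The second comparison should follow from the optimism of EVI together with the fact (from ${\cal E}_r,{\cal E}_p$) that $\bar r_{\tau(m)}\in H_{r,\tau(m)}$ and $\bar p_{\tau(m)}\in H_{p,\tau(m)}(\eta)$, so that the averaged MDP is a feasible candidate inside the confidence regions and hence its optimal gain cannot exceed the optimistic $\tilde\rho_{\tau(m)}$.

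First I would make the second comparison precise. Let $(\rho^{\mathrm{avg}},\gamma^{\mathrm{avg}})$ be an optimal dual solution of $\textsf{D}(\bar r_{\tau(m)},\bar p_{\tau(m)})$, so $\rho^{\mathrm{avg}}$ is the optimal gain of the averaged MDP. Since on ${\cal E}_r\cap{\cal E}_p$ we have $\bar r_{\tau(m)}(s,a)\in H_{r,\tau(m)}(s,a)$ and $\bar p_{\tau(m)}(\cdot|s,a)\in H_{p,\tau(m)}(s,a;0)\subseteq H_{p,\tau(m)}(s,a;\eta)$, the pair $(\bar r_{\tau(m)},\bar p_{\tau(m)})$ lies in the confidence regions fed to EVI. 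Because EVI optimizes the average gain over all reward/kernel pairs in the confidence regions and returns the maximal such gain $\tilde\rho_{\tau(m)}$ (up to the EVI error, which the algorithm absorbs appropriately), we get $\tilde\rho_{\tau(m)}\ge\rho^{\mathrm{avg}}$. I would cite the EVI properties (Property \ref{property:1}, Property \ref{property:2}) and the characterization of $\rho^*$ via the LP $\textsf{P}(\cdot,\cdot)$ in Section \ref{sec:mdp_lp} to justify this step cleanly.

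Next I would handle the first comparison, bounding $\rho^{\mathrm{avg}}\ge\rho^*_t-\dev_{r,t}-2D\dev_{p,t}$. The standard trick is to take an optimal occupancy measure $x^*_t$ for $\textsf{P}(r_t,p_t)$ — which is also feasible for $\textsf{P}(r_t,p_t)$ by construction — and perturb it into a feasible solution for the averaged MDP, or, dually, to take an optimal dual solution $(\rho^*_t,\gamma^*_t)$ of $\textsf{D}(r_t,p_t)$ with $\|\gamma^*_t\|_\infty\le 2D_{\max}$ (Lemma \ref{lemma:mc}) and show that $(\rho^*_t+\dev_{r,t}+2D\dev_{p,t},\gamma^*_t)$ is dual-feasible for the averaged MDP. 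Checking dual feasibility for each $(s,a)$ amounts to the estimate
\[
\Bigl|\textstyle\sum_{s'}\bigl(\bar p_{\tau(m)}(s'|s,a)-p_t(s'|s,a)\bigr)\gamma^*_t(s')\Bigr|\le\|\bar p_{\tau(m)}(\cdot|s,a)-p_t(\cdot|s,a)\|_1\,\|\gamma^*_t\|_\infty\le 2D\,\dev_{p,t},
\]
together with $|r_t(s,a)-\bar r_{\tau(m)}(s,a)|\le\dev_{r,t}$, both from Lemma \ref{lemma:dev_r_p}; this mirrors exactly the computation already carried out in the proof of Proposition \ref{prop:benchmark} (inequalities \eqref{eq:benchmark_ineq_1}, \eqref{eq:benchmark_for_r}, \eqref{eq:benchmark_for_p}). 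One subtlety is whether to use $D$ or $D_{\max}$ as the bound on $\|\gamma^*_t\|_\infty$; since $(\SSS,\AAA,p_t)$ has diameter at most $D_t\le D_{\max}$, Lemma \ref{lemma:mc} gives $\|\gamma^*_t\|_\infty\le 2D_{\max}$, but the stated bound uses $D$, so I would double-check that the diameter $D$ assumed in Proposition \ref{prop:error} is the relevant one here (equivalently, that the averaged kernel or the perturbed kernel has diameter $D$) and align the constants accordingly. Combining the two comparisons, $\tilde\rho_{\tau(m)}\ge\rho^{\mathrm{avg}}\ge\rho^*_t-\dev_{r,t}-2D\dev_{p,t}$, which is the claim.

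The main obstacle I anticipate is the EVI-optimality step: being careful that the quantity $\tilde\rho_{\tau(m)}$ returned by the finite-precision EVI really does dominate the optimal gain of \emph{every} MDP in the confidence region (the $1/\sqrt{\tau(m)}$ precision is handled separately in Proposition \ref{prop:error}, so here I expect the clean inequality $\tilde\rho_{\tau(m)}\ge\rho^{\mathrm{avg}}$ to hold exactly, but I would want to confirm the bookkeeping matches how the $1/\sqrt{\tau(m)}$ term is accounted for elsewhere). The LP-duality perturbation argument, by contrast, is essentially a transcription of the Proposition \ref{prop:benchmark} computation and should be routine.
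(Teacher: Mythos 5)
There is a genuine gap in your first comparison, and it is precisely the obstruction this paper is built around. You propose to chain $\tilde{\rho}_{\tau(m)} \geq \rho^{\mathrm{avg}} \geq \rho^*_t - \dev_{r,t} - 2D\,\dev_{p,t}$, where $\rho^{\mathrm{avg}}$ is the optimal gain of the averaged MDP $(\bar{r}_{\tau(m)},\bar{p}_{\tau(m)})$. Two problems. First, the dual-feasibility step you write down goes the wrong way: showing that $(\rho^*_t+\dev_{r,t}+2D\dev_{p,t},\gamma^*_t)$ is feasible for $\textsf{D}(\bar{r}_{\tau(m)},\bar{p}_{\tau(m)})$ only yields $\rho^*_t+\dev_{r,t}+2D\dev_{p,t}\geq\rho^{\mathrm{avg}}$ (feasible dual values dominate the minimum), i.e.\ an \emph{upper} bound on $\rho^{\mathrm{avg}}$, not the lower bound you need. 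Second, the correct-direction version — perturbing an optimal dual solution $(\rho^{\mathrm{avg}},\gamma^{\mathrm{avg}})$ of $\textsf{D}(\bar{r}_{\tau(m)},\bar{p}_{\tau(m)})$ into a feasible solution of $\textsf{D}(r_t,p_t)$ — requires $\|\gamma^{\mathrm{avg}}\|_\infty\leq 2D$ via Lemma \ref{lemma:mc}, and that in turn requires the \emph{averaged} MDP $(\SSS,\AAA,\bar{p}_{\tau(m)})$ to have diameter at most $D$. Proposition \ref{lemma:peril1} shows this can fail badly: the averaged kernel can have diameter $\Theta(W)$ even when every $p_q$ has diameter $1$. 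So inserting $\rho^{\mathrm{avg}}$ as an intermediate quantity discards exactly the leverage that makes the lemma provable.

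The paper's proof never touches $\rho^{\mathrm{avg}}$ or any dual solution of the averaged MDP. It works with the EVI output $(\tilde{\rho}_{\tau(m)},\tilde{\gamma}_{\tau(m)})$ throughout: Property \ref{property:1} together with the hypothesis that the \emph{widened} region $H_{p,\tau(m)}(\eta)$ contains a kernel $p$ of diameter at most $D$ makes $(\tilde{\rho}_{\tau(m)},\tilde{\gamma}_{\tau(m)})$ feasible for $\textsf{D}(\dot{r},p)$, so Lemma \ref{lemma:mc} gives $0\leq\tilde{\gamma}_{\tau(m)}(s)\leq 2D$ — a bound on the EVI bias vector, not on any bias vector of $\bar{p}_{\tau(m)}$. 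Property \ref{property:1} applied again with $\bar{r}_{\tau(m)}\in H_{r,\tau(m)}$ and $\bar{p}_{\tau(m)}\in H_{p,\tau(m)}(0)\subseteq H_{p,\tau(m)}(\eta)$ (events ${\cal E}_r,{\cal E}_p$) gives $\tilde{\rho}_{\tau(m)}+\tilde{\gamma}_{\tau(m)}(s)\geq\bar{r}_{\tau(m)}(s,a)+\sum_{s'}\tilde{\gamma}_{\tau(m)}(s')\bar{p}_{\tau(m)}(s'|s,a)$, and then the Lemma \ref{lemma:dev_r_p} perturbation (the part of your plan that is correct) shows $(\tilde{\rho}_{\tau(m)}+\dev_{r,t}+2D\dev_{p,t},\tilde{\gamma}_{\tau(m)})$ is feasible for $\textsf{D}(r_t,p_t)$, whence $\tilde{\rho}_{\tau(m)}+\dev_{r,t}+2D\dev_{p,t}\geq\rho^*_t$. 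Your EVI-precision worry is a non-issue (Property \ref{property:1} is exact; the $1/\sqrt{\tau(m)}$ loss lives only in Property \ref{property:2}), but the decomposition through the averaged MDP's optimal gain must be abandoned.
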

\begin{lemma}\label{lemma:cross_p_error}
	Let $t$ be in episode $m$. For every state-action pair $(s, a)$, we have
	\begin{equation*}
	\left|\sum_{s' \in \SSS} \tilde{p}_{\tau(m)}(s' | s, a) \tilde{\gamma}_{\tau(m)}(s') -\sum_{s' \in \SSS} p_t(s' | s, a) \tilde{\gamma}_{\tau(m)}(s') \right|\leq2 D\left[\dev_{p, t} + 2\rad_{p,\tau (m)}(s, a) + \eta\right].
	\end{equation*}
\end{lemma}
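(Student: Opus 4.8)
The plan is to split the left-hand side through H\"older's inequality as $\|\tilde p_{\tau(m)}(\cdot\,|s,a) - p_t(\cdot\,|s,a)\|_1 \cdot \|\tilde\gamma_{\tau(m)}\|_\infty$ and bound the two factors separately, so the argument reduces to (i) showing the optimistic bias vector satisfies $\|\tilde\gamma_{\tau(m)}\|_\infty \le 2D$, and (ii) showing $\|\tilde p_{\tau(m)}(\cdot\,|s,a) - p_t(\cdot\,|s,a)\|_1 \le \dev_{p,t} + 2\rad_{p,\tau(m)}(s,a) + \eta$. The H\"older step is legitimate because $\tilde p_{\tau(m)}(\cdot\,|s,a)$ and $p_t(\cdot\,|s,a)$ are probability distributions, so their difference is a zero-sum vector and pairing it against $\tilde\gamma_{\tau(m)}$ (or any constant shift of it) is controlled by the $\ell_1$--$\ell_\infty$ duality.

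For (i), I would use that EVI (Algorithm \ref{alg:evi}) outputs $\tilde\gamma_{\tau(m)} \ge 0$ with $\min_s \tilde\gamma_{\tau(m)}(s) = 0$, so $\|\tilde\gamma_{\tau(m)}\|_\infty$ is exactly the span of $\tilde\gamma_{\tau(m)}$. The crux is to exhibit a communicating MDP with diameter $\le D$ for which $(\tilde\rho_{\tau(m)}, \tilde\gamma_{\tau(m)})$ is dual-feasible: I take the kernel $p$ supplied by the hypothesis of Proposition \ref{prop:error}, for which $p(\cdot\,|s,a) \in H_{p,\tau(m)}(s,a;\eta)$ for every $(s,a)$ and $(\SSS,\AAA,p)$ has diameter at most $D$. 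Then Property \ref{property:1}, combined with $\tilde\gamma_{\tau(m)} \ge 0$ and the fact that each coordinate $p(s'|s,a)$ is dominated by the coordinatewise maximum over $H_{p,\tau(m)}(s,a;\eta)$ appearing in Property \ref{property:1}, gives $\tilde\rho_{\tau(m)} + \tilde\gamma_{\tau(m)}(s) \ge \tilde r_{\tau(m)}(s,a) + \sum_{s'} p(s'|s,a)\tilde\gamma_{\tau(m)}(s')$ for all $(s,a)$, i.e.\ $(\tilde\rho_{\tau(m)}, \tilde\gamma_{\tau(m)})$ is feasible for $\textsf{D}(\tilde r_{\tau(m)}, p)$; Lemma \ref{lemma:mc} then delivers span $\le 2D$. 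I expect this to be the main obstacle, because it is precisely where the confidence-widening hypothesis of Proposition \ref{prop:error} bites: Proposition \ref{lemma:peril1} shows that without a guaranteed low-diameter kernel in the (widened) confidence region, the span of $\tilde\gamma_{\tau(m)}$ --- hence this whole error term --- cannot be controlled.

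For (ii), I would insert the empirical kernel $\hat p_{\tau(m)}$ and the time-averaged kernel $\bar p_{\tau(m)}$ and apply the triangle inequality: $\|\tilde p_{\tau(m)}(\cdot\,|s,a) - \hat p_{\tau(m)}(\cdot\,|s,a)\|_1 \le \rad_{p,\tau(m)}(s,a) + \eta$ since EVI returns $\tilde p_{\tau(m)}(\cdot\,|s,a) \in H_{p,\tau(m)}(s,a;\eta)$; $\|\hat p_{\tau(m)}(\cdot\,|s,a) - \bar p_{\tau(m)}(\cdot\,|s,a)\|_1 \le \rad_{p,\tau(m)}(s,a)$ on the event ${\cal E}_p$; and $\|\bar p_{\tau(m)}(\cdot\,|s,a) - p_t(\cdot\,|s,a)\|_1 \le \dev_{p,t}$ by Lemma \ref{lemma:dev_r_p}. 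Summing these three bounds and multiplying by the bound $\|\tilde\gamma_{\tau(m)}\|_\infty \le 2D$ from (i) yields the claimed $2D(\dev_{p,t} + 2\rad_{p,\tau(m)}(s,a) + \eta)$. The only bookkeeping care here is keeping straight which confidence region each containment refers to (widened vs.\ un-widened, anchored at $\tau(m)$ vs.\ at $t$).
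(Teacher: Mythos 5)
Your proposal is correct and follows essentially the same route as the paper: an $\ell_1$--$\ell_\infty$ H\"older split, with $\|\tilde{\gamma}_{\tau(m)}\|_\infty \leq 2D$ obtained exactly as in the paper (Property \ref{property:1} plus the hypothesized low-diameter kernel $p\in H_{p,\tau(m)}(\eta)$, dual feasibility, and Lemma \ref{lemma:mc}, together with $\tilde{\gamma}_{\tau(m)}\geq 0$ and $\min_s\tilde{\gamma}_{\tau(m)}(s)=0$), and the $\ell_1$ factor bounded by the triangle inequality through $\hat{p}_{\tau(m)}$ and $\bar{p}_{\tau(m)}$ using membership in the widened and un-widened confidence regions, the event ${\cal E}_p$, and Lemma \ref{lemma:dev_r_p}. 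The only cosmetic difference is that the paper routes the triangle inequality through $\bar{p}_{\tau(m)}$ alone and absorbs the detour via $\hat{p}_{\tau(m)}$ into the single bound $\|\tilde{p}_{\tau(m)}-\bar{p}_{\tau(m)}\|_1\leq 2\rad_{p,\tau(m)}(s,a)+\eta$, which is the same computation you spell out explicitly.
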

Lemmas \ref{lemma:dev_r_p}, \ref{lemma:dev_rho}, \ref{lemma:cross_p_error} are proved in  Sections \ref{app:pf_claim_dev_r_p},  \ref{app:pf_lemma_dev_rho}, and \ref{app:pf_claim_cross_p_error}, respectively. 
\subsection{Proof of Lemma \ref{lemma:dev_r_p}}\label{app:pf_claim_dev_r_p}
We first provide the bound for rewards:
\begin{align}
\left| r_t(s, a) - \bar{r}_{\tau(m)} (s, a)\right| &\leq \left| r_t(s, a) - r_{\tau (m) } (s, a)\right| + \left| r_{\tau(m)} (s, a) - \bar{r}_{\tau(m)} (s, a)\right| \nonumber\\
& \leq \sum^{t-1}_{q = \tau(m)} \left| r_{q+1}(s, a) - r_q (s, a)\right| + \frac{1}{W}\sum^W_{w = 1}  \left| r_{\tau(m)}(s, a) - r_{\tau(m) - w} (s, a) \right|\nonumber.
\end{align}
By the definition of $B_{r, q}$, we have
\begin{equation*}
\sum^{t-1}_{q = \tau(m)} \left| r_{q+1}(s, a) - r_q (s, a)\right| \leq \sum^{t-1}_{q = \tau(m)} B_{r, q},
\end{equation*}
and 
\begin{align*}
\frac{1}{W}\sum^W_{w = 1} \left| r_{\tau(m)}(s, a) - r_{\tau(m) - w}(s, a) \right| & \leq \frac{1}{W}\sum^W_{w = 1} \sum^w_{i=1} \left| r_{\tau(m)- i + 1}(s, a) - r_{\tau(m) - i}(s, a)\right| \nonumber\\
& \leq \frac{1}{W}\sum^W_{w = 1} \sum^W_{i=1}\left| r_{\tau(m)- i + 1}(s, a) - r_{\tau(m) - i}(s, a)\right| \nonumber\\
& = \sum^W_{i = 1} \left| r_{\tau(m)- i + 1}(s, a) - r_{\tau(m) - i}(s, a)\right| \leq  \sum^W_{i = 1}  B_{r, \tau(m) - i} \nonumber.
\end{align*}
Next, we provide a similar analysis on the transition kernel.
\begin{align}
\left\| p_t(s, a) - \bar{p}_{\tau(m)} (s, a)\right\|_1 &\leq \left\| p_t(s, a) - p_{\tau (m) } (s, a)\right\|_1 + \left\| p_{\tau(m)} (s, a) - \bar{p}_{\tau(m)} (s, a)\right\|_1 \nonumber\\
& \leq \sum^{t-1}_{q = \tau(m)} \left\| p_{q+1}(s, a) - p_q (s, a)\right\|_1 + \frac{1}{W}\sum^W_{w = 1}  \left\| p_{\tau(m)}(s, a) - p_{\tau(m) - w} (s, a) \right\|_1 \nonumber.
\end{align}
By the definition of $B_{p, q}$, we have
\begin{equation*}
\sum^{t-1}_{q = \tau(m)} \left\| p_{q+1}(s, a) - p_q (s, a)\right\|_1 \leq \sum^{t-1}_{q = \tau(m)} B_{p, q},
\end{equation*}
and 
\begin{align*}
\frac{1}{W}\sum^W_{w = 1} \left\| p_{\tau(m)}(s, a) - p_{\tau(m) - w}(s, a) \right\|_1 & \leq \frac{1}{W}\sum^W_{w = 1} \sum^w_{i=1} \left\| p_{\tau(m)- i + 1}(s, a) - p_{\tau(m) - i}(s, a)\right\|_1 \nonumber\\
& \leq \frac{1}{W}\sum^W_{w = 1} \sum^W_{i=1}\left\| p_{\tau(m)- i + 1}(s, a) - p_{\tau(m) - i}(s, a)\right\|_1 \nonumber\\
& = \sum^W_{i = 1} \left\| p_{\tau(m)- i + 1}(s, a) - p_{\tau(m) - i}(s, a)\right\|_1 \leq  \sum^W_{i = 1}  B_{p, \tau(m) - i} \nonumber.
\end{align*}
Altogether, the lemma is shown.	



\subsection{Proof of Lemma \ref{lemma:dev_rho}}\label{app:pf_lemma_dev_rho}
We first demonstrate two immediate consequences about the dual solution $(\tilde{\rho}_{\tau(m)}, \tilde{\gamma}_{\tau(m)})$ by the Proposition's assumptions:
\begin{align}
& 0  \leq \tilde{\gamma}_{\tau(m)}(s)\leq 2 D &\text{ for all $s\in \SSS$},\label{eq:lemma_dev_rho_0}\\
&\tilde{\rho}_{\tau(m)} + \tilde{\gamma}_{\tau(m)}(s) \geq \bar{r}_{\tau(m)}(s, a) + \sum_{s'\in \SSS} \tilde{\gamma}_{\tau(m)}(s') \bar{p}_{\tau(m)}(s' | s, a)&\text{ for all $s\in \SSS, a\in \AAA_s$}.\label{eq:lemma_dev_rho_1}
\end{align}

To see inequality (\ref{eq:lemma_dev_rho_0}), first observe that 
\begin{align}
\tilde{\rho}_{\tau(m)} + \tilde{\gamma}_{\tau(m)}(s) & \geq \max_{\dot{r}(s, a) \in H_{r, \tau(m)}(s, a)} \{\dot{r}(s, a)\} + \sum_{s'\in \SSS} \tilde{\gamma}_{\tau(m)}(s') \max_{\dot{p} \in H_{p, \tau(m)}(s, a;\eta)} \{\dot{p}(s' | s, a) \}\label{eq:lemma_dev_rho_00}\\
& \geq \max_{\dot{r}(s, a) \in H_{r, \tau(m)}(s, a)} \{\dot{r}(s, a)\} + \sum_{s'\in \SSS} \tilde{\gamma}_{\tau(m)}(s') p(s' | s, a)  \label{eq:lemma_dev_rho_01}.
\end{align}
Step (\ref{eq:lemma_dev_rho_00}) is by Property 1 of the output from EVI, which is applied with confidence regions $H_{r, \tau(m)}, H_{p, \tau(m)}(\eta)$. Step (\ref{eq:lemma_dev_rho_01}) is because of the assumption that $p\in H_{p, \tau(m)}(\eta)$. Altogether, the solution $(\tilde{\rho}_{\tau(m)}, \tilde{\gamma}_{\tau(m)})$ is feasible to $\textsf{D}(\dot{r}, p)$ for any $\dot{r}\in H_{r, \tau(m)}$. Now, by Lemma \ref{lemma:mc}, we have $\max_{s, s'\in \SSS} | \tilde{\gamma}_{\tau(m)}(s) - \tilde{\gamma}_{\tau(m)}(s') | \leq 2 D$. Finally, inequality (\ref{eq:lemma_dev_rho_0}) follows from the fact that the bias vector $\tilde{\gamma}_{\tau(m)}$ returned by EVI is component-wise non-negative, and there exists $s\in \SSS$ such that $\tilde{\gamma}_{\tau(m)} = 0$.

To see inequality (\ref{eq:lemma_dev_rho_1}), observe that 
\begin{align}
\tilde{\rho}_{\tau(m)} + \tilde{\gamma}_{\tau(m)}(s) & \geq \max_{\dot{r}(s, a) \in H_{r, \tau(m)}(s, a)} \{\dot{r}(s, a)\} + \sum_{s'\in \SSS} \tilde{\gamma}_{\tau(m)}(s') \max_{\dot{p} \in H_{p, \tau(m)}(s, a;\eta)} \{\dot{p}(s' | s, a) \}\label{eq:lemma_dev_rho_10}\\
& \geq \bar{r}_{\tau(m)}(s, a) + \sum_{s'\in \SSS} \tilde{\gamma}_{\tau(m)}(s') \bar{p}_{\tau(m)}(s' | s, a)  \label{eq:lemma_dev_rho_11}.
\end{align}
Step (\ref{eq:lemma_dev_rho_10}) is again by Property 1 of the output from EVI, and step (\ref{eq:lemma_dev_rho_11}) is by the assumptions that $\bar{r}_{\tau(m)}\in H_{r, \tau(m)}$, and $\bar{p}_{\tau(m)}\in H_{p, \tau(m)} (0)\subset H_{p, \tau(m)} (\eta)$.

Now, we claim that $(\tilde{\rho}_{\tau(m)} + \dev_{r, t} + 2D \cdot \dev_{p, t}, \tilde{\gamma}_{\tau(m)})$ is a feasible solution to the $t$th period dual problem $\mathsf{D}(r_t, p_t)$, which immediately implies the Lemma. To demonstrate the claim, for every state action pair $(s, a)$ we have
\begin{align}
\bar{r}_{\tau(m)}(s, a) & \geq r_t(s, a) - \dev_{r, t}\label{eq:pf_dev_rho_by_claim_dev_r}\\
\sum_{s'\in \SSS} \tilde{\gamma}_{\tau(m)}(s') p_{\tau(m)}(s' | s, a) & \geq \sum_{s'\in \SSS} \tilde{\gamma}_{\tau(m)}(s') p_t(s' | s, a) - \left\|\tilde{\gamma}_{\tau ( m )}\right\|_\infty \left\| p_t(\cdot | s, a) - \bar{p}_{ \tau(m)} (\cdot | s, a)\right\|_1  \nonumber\\ 
& \geq \sum_{s'\in \SSS} \tilde{\gamma}_{\tau(m)}(s') p_t(s' | s, a) - 2 D \cdot \dev_{p, t}\label{eq:pf_dev_rho_by_lemma_bounded_gamma},.\end{align}
Inequality (\ref{eq:pf_dev_rho_by_claim_dev_r}) is by Lemma \ref{lemma:dev_r_p} on the rewards. Step (\ref{eq:pf_dev_rho_by_lemma_bounded_gamma}) is by inequality (\ref{eq:lemma_dev_rho_0}), and by Lemma \ref{lemma:dev_r_p} which shows $\| p_t(\cdot | s, a) - \bar{p}_{ \tau(m)} (\cdot | s, a)\|_1 \leq \dev_{p, t}$. Altogether, putting (\ref{eq:pf_dev_rho_by_claim_dev_r}), (\ref{eq:pf_dev_rho_by_lemma_bounded_gamma})  to inequality (\ref{eq:lemma_dev_rho_1}), our claim is shown, \ie, for all $s\in\SSS$ and $a\in\AAA_s,$
\begin{align*}
\tilde{\rho}_{\tau(m)} + \dev_{r, t} + 2D \cdot \dev_{p, t}+\tilde{\gamma}_{\tau(m)}(s) \geq r_t(s, a)+\sum_{s'\in \SSS} \tilde{\gamma}_{\tau(m)}(s') p_t(s' | s, a).
\end{align*}
Hence, the lemma is proved. 
\subsection{Proof of Lemma \ref{lemma:cross_p_error}}\label{app:pf_claim_cross_p_error}
We have
\begin{align}
&\left|\sum_{s' \in \SSS} \left[\tilde{p}_{\tau(m)}(s' | s, a) - p_t(s' | s, a)\right] \tilde{\gamma}_{\tau(m)}(s') \right|\nonumber\\
\leq & \underbrace{\left\|\tilde{\gamma}_{\tau(m)}\right\|_\infty}_{(a)} \cdot \left[ \underbrace{\left\| \tilde{p}_{\tau(m)}(\cdot | s, a) - \bar{p}_{\tau(m)}(\cdot | s, a) \right\|_1}_{(b)} + \underbrace{\left\|\bar{p}_{\tau(m)}(\cdot | s, a) - p_t(\cdot | s, a)\right\|_1}_{ ( c )}  \right]  \label{eq:cross_p_by_var_bounds}.
\end{align}
In step (\ref{eq:cross_p_by_var_bounds}), we know that 
\begin{itemize}
	\item $(a) \leq 2 D$ by inequality (\ref{eq:lemma_dev_rho_0}), 
	\item $(b) \leq 2\rad_{p,\tau (m)}(s, a) + \eta$, by the facts that $\tilde{p}_{\tau(m)}(\cdot | s, a)\in H_{p, \tau (m)}(s, a; \eta)$ and  $\bar{p}_{\tau(m)}(\cdot | s, a)\in H_{p, \tau (m)}(s, a; 0)$, 
	\item  $(c)\leq \dev_{p, t}$ by Lemma \ref{lemma:dev_r_p} on the bound on $p$. 
\end{itemize}
Altogether, the Lemma is proved.
\subsection{Finalizing the Proof}
Now, we have
\begin{align}
& r_t(s_t, a_t)\nonumber\\
\geq & \bar{r}_{\tau(m)}(s_t, a_t) - \dev_{r, t}  \label{eq:by_claim_dev_r}\\
\geq & \tilde{r}_{\tau(m)}(s_t, a_t) - \dev_{r, t}  - 2\cdot \rad_{r,\tau(m)}(s_t, a_t)\label{eq:by_event_E_r}\\
\geq & \tilde{\rho}_{\tau(m)} + \tilde{\gamma}_{\tau(m)}(s_t) - \left[ \sum_{s' \in \SSS} \tilde{p}_{\tau(m)}(s' | s_t, a_t) \tilde{\gamma}_{\tau(m)}(s')\right] - \frac{1}{\sqrt{\tau (m)}} \nonumber\\
&\qquad \qquad \qquad - \dev_{r, t}  - 2\cdot \rad_{r,\tau(m)}(s_t, a_t)  \label{eq:by_property_2}\\
\geq & \rho^*_t + \tilde{\gamma}_{\tau(m)}(s_t) - \left[ \sum_{s' \in \SSS} p_t(s' | s_t, a_t) \tilde{\gamma}_{\tau(m)}(s') \right] - \frac{1}{\sqrt{\tau (m)}} \nonumber\\
& - 2 \left[ \dev_{r, t} + \rad_{r,\tau(m)}(s_t, a_t)\right]  - 2 D\left[2 \cdot \dev_{p, t} + 2\cdot \rad_{p,\tau (m)}(s, a) + \eta\right]  \label{eq:by_dev_rho_cross}.
\end{align}
Step (\ref{eq:by_claim_dev_r}) is by Lemma \ref{lemma:dev_r_p} on $t$. Step (\ref{eq:by_event_E_r}) is by conditioning that event ${\cal E}_r$ holds. Step (\ref{eq:by_property_2}) is by Property 2 for the output of EVI. In step (\ref{eq:by_dev_rho_cross}), we upper bound $\tilde{\rho}_{\tau(m)}$ by Lemma \ref{lemma:dev_rho} and we upper bound $\sum_{s' \in \SSS} \tilde{p}_{\tau(m)}(s' | s_t, a_t) \tilde{\gamma}_{\tau(m)}(s')$ by Lemma \ref{lemma:cross_p_error}. Rearranging gives the Proposition.
\section{Proof of Theorem \ref{thm:main1}}
\label{sec:thm:main1}
To facilitate the exposition, we denote $M(T)$ as the total number of episodes. By abusing the notation we, let $\tau (M(T)+ 1) - 1 = T$. Episode $M(T)$, containing the final round $T$, is interrupted and the algorithm is forced to terminate as the end of time $T$ is reached. We can now rewrite the dynamic regret of the \sw~as the sum of dynamic regret from each episode:
\begin{align}
\label{eq:main1}\text{Dyn-Reg}_T(\texttt{SWUCRL2-CW}) = \sum^T_{t=1}\left(\rho^*_t - r_t(s_t, a_t)\right) = \sum^{M(T)}_{m=1}\sum^{\tau(m+1) - 1}_{t = \tau(m)} \left(\rho^*_t - r_t(s_t, a_t)\right)
\end{align}

To proceed, we define the set $$ U = \{m\in [M(T)] : p_{\tau(m)}(\cdot|s,a)\in H_{p,\tau(m)}(s,a;\eta)~\forall (s,a)\in \SSS\times\AAA_s\}.$$ For each episode $m\in[M(T)]$, we distinguish two cases:
\begin{itemize}
	\item \textbf{Case 1.} $m\in U.$ Under this situation, we apply Proposition \ref{prop:error} to bound the dynamic regret during the episode, using the fact that $p_{\tau(m)}$ satisfies the assumptions of the proposition with $D = D_{\tau(m)}\leq D_\text{max}$.
	\item \textbf{Case 2.} $m\in [M(T)] \setminus U$. In this case, we trivially upper bound the dynamic regret of each round in episode $m$ by $1.$
\end{itemize}    

For case 1, we bound the dynamic regret during episode $m$ by summing the error terms in (\ref{eq:prop_error_1}, \ref{eq:prop_error_2}) across the rounds $t\in [\tau(m) , \tau(m+1) - 1]$ in the episode. The term (\ref{eq:prop_error_1}) accounts for the error by switching policies. In (\ref{eq:prop_error_2}), the terms $\rad_{r, \tau(m)}, \rad_{p, \tau(m)}$ accounts for the estimation errors due to stochastic variations, and the term $\dev_{r, t}, \dev_{p, t}$ accounts for the estimation error due to non-stationarity.

For case 2, we need an upper bound on $\sum_{m \in [M(T)] \setminus U}\sum^{\tau(m+1) - 1}_{t = \tau(m)} 1$, the total number of rounds that belong to an episode in $[M(T)] \setminus U$. 
The analysis is challenging, since the length of each episode may vary, and one can only guarantee that the length is $\leq W$. A first attempt could be to upper bound as $\sum_{m \in [M(T)] \setminus U}\sum^{\tau(m+1) - 1}_{t = \tau(m)} 1 \leq W \sum_{m \in [M(T)] \setminus U} 1$, but the resulting bound appears too loose to provide any meaningful regret bound. Indeed, there could be double counting, as the starting time steps for a pair of episodes in case 2 might not even be $W$ rounds apart!

To avoid the trap of double counting, we consider a set $Q_T\subseteq [M(T)] \setminus U$ where the start times of the episodes are sufficiently far apart, and relate the cardinality of $Q_T$ to $\sum_{m \in [M(T)] \setminus U} \sum^{\tau(m+1) - 1}_{t = \tau(m)} 1$. 
The set $Q_T\subseteq[M(T)]$ is constructed sequentially, by examining all episodes $m = 1 , \ldots, M(T)$ in the time order. At the start, we initialize $Q_T = \emptyset$. For each $m = 1, \ldots, M(T)$, we perform the following. If episode $m$ satisfies both criteria:
\begin{enumerate}
	\item There exists some $s\in\SSS$ and $a\in\AAA_s$ such that $p_{\tau(m)}(\cdot|s,a)\notin H_{p,\tau(m)}(s,a;\eta);$
	\item For every $m'\in Q_T,$ $\tau(m)-\tau(m')>W,$
\end{enumerate}
then we add $m$ into $Q_T.$ Afterwards, we move to the next episode index $m+1$. The process terminates once we arrive at episode $M(T)+1.$ The construction ensures that, for each episode $m\in[M(T)],$ if $\tau(m)-\tau(m')\notin[0,W]$ for all $m'\in Q_T,$ then $\forall s\in\SSS~\forall a\in\AAA_s~p_{\tau(m)}(\cdot|s,a)\in H_{p,\tau(m)}(s,a);$ otherwise, $m$ would have been added into $Q_T.$ 

We further construct a set $\tilde{Q}_T$ to include all elements in $Q_T$ and every episode index $m$ such that there exists $m'\in Q_T$ with $\tau(m)-\tau(m')\in[0,W].$ By doing so, we can prove that every episode $m\in[M(T)]\setminus \tilde{Q}_T$ satisfies $p_{\tau(m)}(\cdot|s,a)\in H_{p,\tau(m)}(s,a)~\forall s\in\SSS~\forall a\in\AAA_s.$ The procedures for building $\tilde{Q}_T$ (initialized to $Q_T$) are described as follows: for every episode index $m\in[M(T)],$ if there exists $m'\in Q_T,$ such that $\tau(m)-\tau(m')\in[0,W],$ then we add $m$ to $\tilde{Q}_T.$ Formally,
\begin{align*}
\tilde{Q}_T=Q_T\cup\left\{m\in [M(T)]:\exists m'\in Q_T~\tau(m)-\tau(m')\in[0,W]\right\}.
\end{align*}
\begin{figure}[t]
	\centering
	\includegraphics[width=13cm,height=1.8cm]{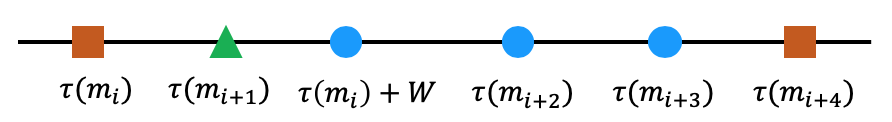}
	\caption{Both episodes $m_i$ and $m_{i+4}$ belong to $Q_T$ (and thus $\tilde{Q}_T$) because $p_{\tau(m_{i})}\notin H_{p,\tau(m_{i})}(\eta)$ and $p_{\tau(m_{i+4})}\notin H_{p,\tau(m_{i+4})}(\eta).$ $m_{i+1}$ is added to $\tilde{Q}_T$ (but not $Q_T$) because $\tau(m_{i+1})-\tau(m_i)\in[0,W].$ $m_{i+2}$ and $m_{i+3}$ belong to neither of $Q_T$ nor $\tilde{Q}_T$ as $p_{\tau(m_{i+2})}\in H_{p,\tau(m_{i+2})}(\eta)$ and $p_{\tau(m_{i+3})}\in H_{p,\tau(m_{i+3})}(\eta).$}
	\label{fig:Q}
\end{figure}
We can formalize the properties of $Q_T$ and $\tilde{Q}_T$ as follows.
\begin{lemma}
	\label{lemma:Q_size}
	Conditioned on $\mathcal{E}_p,$
	$|Q_T|\leq{B_p}/{\eta}.$
\end{lemma}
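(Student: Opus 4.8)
The plan is to show that, conditioned on $\mathcal{E}_p$, each episode placed into $Q_T$ must ``charge'' more than $\eta$ of the transition variation budget to a window of $W$ consecutive rounds ending just before its start time, and that these windows are pairwise disjoint across $Q_T$; summing then forces $\eta\,|Q_T| < B_p$.

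First I would fix an arbitrary $m \in Q_T$. By the first selection criterion there is a state-action pair $(s,a)$ with $p_{\tau(m)}(\cdot|s,a) \notin H_{p,\tau(m)}(s,a;\eta)$, i.e.\ $\| p_{\tau(m)}(\cdot|s,a) - \hat p_{\tau(m)}(\cdot|s,a) \|_1 > \rad_{p,\tau(m)}(s,a) + \eta$. On the event $\mathcal{E}_p$ we have $\bar p_{\tau(m)}(\cdot|s,a) \in H_{p,\tau(m)}(s,a;0)$, so $\| \bar p_{\tau(m)}(\cdot|s,a) - \hat p_{\tau(m)}(\cdot|s,a) \|_1 \le \rad_{p,\tau(m)}(s,a)$; the triangle inequality then gives $\| p_{\tau(m)}(\cdot|s,a) - \bar p_{\tau(m)}(\cdot|s,a) \|_1 > \eta$. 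Invoking Lemma~\ref{lemma:dev_r_p} at the round $t = \tau(m)$ (which lies in episode $m$) bounds the left-hand side by $\dev_{p,\tau(m)} = \sum_{q = \tau(m)-W}^{\tau(m)-1} B_{p,q}$ (interpreting the lower index as $(\tau(m)-W)\vee 1$), so every $m \in Q_T$ satisfies $\sum_{q = (\tau(m)-W)\vee 1}^{\tau(m)-1} B_{p,q} > \eta$.

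Next I would exploit the second selection criterion: for distinct $m, m' \in Q_T$ with $\tau(m') < \tau(m)$ one has $\tau(m) - \tau(m') > W$, hence $\tau(m') - 1 < \tau(m) - W$, so the look-back windows $\{ (\tau(m)-W)\vee 1, \ldots, \tau(m)-1 \}$ over $m \in Q_T$ are pairwise disjoint subsets of $\{1, \ldots, T-1\}$. Since each $B_{p,q} \ge 0$, summing the inequality of the previous paragraph over $m \in Q_T$ and using disjointness yields
\[
\eta\,|Q_T| \;<\; \sum_{m \in Q_T} \sum_{q = (\tau(m)-W)\vee 1}^{\tau(m)-1} B_{p,q} \;\le\; \sum_{q=1}^{T-1} B_{p,q} \;=\; B_p ,
\]
whence $|Q_T| < B_p/\eta$ and in particular $|Q_T| \le B_p/\eta$. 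I do not expect a real obstacle here: the only delicate points are orienting the triangle inequality so that a breach of the \emph{widened} region becomes a genuine $\ell_1$ gap between $p_{\tau(m)}$ and the time-average $\bar p_{\tau(m)}$ (which is precisely where conditioning on $\mathcal{E}_p$ is used), and checking that the ``$\tau(m)-\tau(m') > W$'' rule makes the $W$-length windows non-overlapping — this is exactly why the combinatorial object one counts is $Q_T$ rather than all of $[M(T)] \setminus U$. The remainder is just Lemma~\ref{lemma:dev_r_p} together with nonnegativity of the per-step variation budgets.
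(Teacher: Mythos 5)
Your proof is correct and follows essentially the same route as the paper's: a violation of the widened confidence region, combined with $\mathcal{E}_p$ and the triangle inequality through $\hat p_{\tau(m)}$, forces the transition variation over the $W$-step look-back window to exceed $\eta$, and the second selection criterion makes these windows pairwise disjoint so the charges sum to at most $B_p$. The only cosmetic difference is that you invoke Lemma \ref{lemma:dev_r_p} to bound $\|p_{\tau(m)}(\cdot|s,a)-\bar p_{\tau(m)}(\cdot|s,a)\|_1$ by $\dev_{p,\tau(m)}$, whereas the paper re-derives this bound via a witness time $t_m$ and a two-case analysis that explicitly treats $N_{\tau(m)}(s,a)=0$ — an edge case that is harmless in your argument as well, since an unvisited pair has confidence radius exceeding $1$ and therefore can never trigger the first selection criterion.
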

\begin{lemma}
	\label{lemma:Q_tilde}
	For any episode $m\notin\tilde{Q}_T,$ we have $p_{\tau(m)}(\cdot|s,a)\in H_{p,\tau(m)}(s,a;\eta)$ for all $s\in\SSS$ and $a\in\AAA_s.$
\end{lemma}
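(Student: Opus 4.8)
The plan is to prove the contrapositive by contradiction, tracing through the sequential construction of $Q_T$ and $\tilde{Q}_T$. Suppose there is an episode $m\notin\tilde{Q}_T$ for which the first criterion in the construction of $Q_T$ holds, that is, $p_{\tau(m)}(\cdot|s,a)\notin H_{p,\tau(m)}(s,a;\eta)$ for some state-action pair $(s,a)$. Since $\tilde{Q}_T\supseteq Q_T$, in particular $m\notin Q_T$, so $m$ was \emph{not} added to $Q_T$ at the time it was examined in the construction.

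Next I would pin down why $m$ failed to be added. The first criterion depends only on $p_{\tau(m)}$ and $H_{p,\tau(m)}(\cdot,\cdot;\eta)$, so it still held at the moment $m$ was processed; hence the only possibility is that the second criterion failed then, meaning there was some $m'$ already present in $Q_T$ with $\tau(m)-\tau(m')\leq W$. Since the construction only ever appends elements, this $m'$ remains in the final set $Q_T$. Moreover episodes are examined in increasing index order, so $m'<m$, and because $\tau(\cdot)$ is strictly increasing in the episode index (each episode occupies at least one time step), $\tau(m')<\tau(m)$, so $\tau(m)-\tau(m')\geq 1$. Thus there is $m'\in Q_T$ with $\tau(m)-\tau(m')\in[0,W]$, which is exactly the condition that forces $m$ into $\tilde{Q}_T$, contradicting $m\notin\tilde{Q}_T$. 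Therefore $p_{\tau(m)}(\cdot|s,a)\in H_{p,\tau(m)}(s,a;\eta)$ for every $(s,a)$.

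I do not expect a genuine obstacle here; the argument is pure bookkeeping about the construction. The two points that need care are the monotonicity of $Q_T$ along the construction (an element, once inserted, is never removed, so ``$m'\in Q_T$ when $m$ was processed'' upgrades to ``$m'\in Q_T$ at termination''), and the strict monotonicity of $\tau(\cdot)$, which turns $\tau(m)-\tau(m')\leq W$ into $\tau(m)-\tau(m')\in[0,W]$ by excluding a spurious zero gap, thereby matching the defining condition of $\tilde{Q}_T$.
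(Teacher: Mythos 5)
Your argument is correct and is essentially the paper's own proof: both proceed by contradiction from $m\notin\tilde{Q}_T$ together with a violated confidence region, and both observe that the two criteria in the construction of $Q_T$ and the defining condition of $\tilde{Q}_T$ cannot all fail simultaneously. The only cosmetic difference is that you derive the contradiction by landing $m$ in $\tilde{Q}_T\setminus Q_T$ (criterion 2 fails), while the paper lands $m$ in $Q_T$ (criterion 2 must hold); these are the two branches of the same dichotomy.
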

The proofs of Lemmas \ref{lemma:Q_size} and \ref{lemma:Q_tilde} are presented in Sections \ref{sec:lemma:Q_size} and \ref{sec:lemma:Q_tilde}, respectively.

Together with eq. \eqref{eq:main1}, we can further decompose the dynamic regret of the \sw~as
\begin{align}
\nonumber&\text{Dyn-Reg}_T(\texttt{SWUCRL2-CW})\\
\nonumber=&\sum_{m\in\tilde{Q}_T}\sum^{\tau(m+1) - 1}_{t = \tau(m)} \left(\rho^*_t - r_t(s_t, a_t)\right)+\sum_{m\in[M_T]\setminus\tilde{Q}_T}\sum^{\tau(m+1) - 1}_{t = \tau(m)} \left(\rho^*_t - r_t(s_t, a_t)\right)\\
\tag{$\spadesuit$}\label{eq:Q_error}	\leq &\sum_{m\in\tilde{Q}_T}\sum^{\tau(m+1) - 1}_{t = \tau(m)} \left(\rho^*_t - r_t(s_t, a_t)\right)\\
&\quad + \sum_{m\in[M_T]\setminus\tilde{Q}_T}\sum^{\tau(m+1) - 1}_{t = \tau(m)} \left\{ \left[ \sum_{s' \in \SSS} p_t(s' | s_t, a_t) \tilde{\gamma}_{\tau(m)}(s') - \tilde{\gamma}_{\tau(m)}(s_t)\right]   + \frac{1}{\sqrt{\tau (m)}}\right\}\tag{$\clubsuit$}\label{eq:episode_error}\\
& \quad + \sum_{m\in[M_T]\setminus\tilde{Q}_T}\sum^{\tau(m+1) - 1}_{t = \tau(m)}\left( 2  \dev_{r, t} + 4 D_{\text{max}} \cdot \dev_{p, t} + 2 D_{\text{max}}\eta \right)\tag{$\vardiamond$} \label{eq:adv_error}\\
& \quad  + \sum_{m\in[M_T]\setminus\tilde{Q}_T}\sum^{\tau(m+1) - 1}_{t = \tau(m)}\left[2 \rad_{r,\tau(m)}(s_t, a_t) + 4 D_{\text{max}}\cdot \rad_{p,\tau (m)}(s_t, a_t)  \right]\tag{$\varheart$} \label{eq:sto_error},
\end{align}
where the last step makes use of Lemma \ref{lemma:Q_tilde} and Proposition \ref{prop:error}.  We accomplish the promised dynamic regret bound by the following four Lemmas that bound the dynamic regret terms (\ref{eq:Q_error}, \ref{eq:episode_error}, \ref{eq:adv_error}, \ref{eq:sto_error}).
\begin{lemma}
	\label{lemma:Q_error}
	Conditioned on $\mathcal{E}_p,$ we have $$\eqref{eq:Q_error}=O\left(\frac{B_pW}{\eta}\right).$$
\end{lemma}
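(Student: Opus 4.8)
The plan is to bound the quantity $(\spadesuit\text{-term})=\sum_{m\in\tilde{Q}_T}\sum_{t=\tau(m)}^{\tau(m+1)-1}(\rho^*_t-r_t(s_t,a_t))$ by first using the trivial per-round bound $\rho^*_t-r_t(s_t,a_t)\le 1$ (recall $r_t\in[0,1]$ and $\rho^*_t\in[0,1]$), so that $\eqref{eq:Q_error}\le\sum_{m\in\tilde{Q}_T}(\tau(m+1)-\tau(m))$, i.e.\ the total number of time steps contained in episodes belonging to $\tilde{Q}_T$. The key structural facts are: (i) each episode lasts at most $W$ time steps (enforced by the "multiple of $W$" stopping criterion), so each episode contributes at most $W$; and (ii) by the construction of $\tilde{Q}_T$ from $Q_T$, every episode index in $\tilde{Q}_T$ has its start time $\tau(m)$ lying in an interval $[\tau(m'),\tau(m')+W]$ for some $m'\in Q_T$.

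First I would observe that, since episodes partition $[T]$ and each has length $\le W$, the episodes whose start time falls in a length-$W$ window $[\tau(m'),\tau(m')+W]$ together occupy at most $2W$ consecutive time steps (the window itself plus at most $W-1$ extra steps from the last such episode spilling over). Hence each $m'\in Q_T$ is "responsible for" at most $2W$ time steps among the episodes of $\tilde{Q}_T$. Summing over $m'\in Q_T$ gives $\sum_{m\in\tilde{Q}_T}(\tau(m+1)-\tau(m))\le 2W|Q_T|$. Then I would invoke Lemma \ref{lemma:Q_size}, which states $|Q_T|\le B_p/\eta$ conditioned on $\mathcal{E}_p$, to conclude $\eqref{eq:Q_error}\le 2W B_p/\eta=O(B_pW/\eta)$, as claimed.

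The main obstacle — really the only subtlety — is the double-counting issue flagged in the surrounding text: a naive bound of $W|\,[M(T)]\setminus U\,|$ fails because start times of bad episodes may be arbitrarily close together, so I must be careful to charge time steps only to the well-separated representatives in $Q_T$ (whose start times are $>W$ apart by construction) rather than to all bad episodes, and to account correctly for episodes whose start time lies in $[\tau(m'),\tau(m')+W]$ but which themselves extend beyond $\tau(m')+W$. Making the "at most $2W$ steps per representative" bookkeeping precise — using that consecutive episode start times differ by the length of the intervening episode, which is at most $W$ — is the crux; everything else is the trivial $\le 1$ per-round bound and a direct appeal to Lemma \ref{lemma:Q_size}.
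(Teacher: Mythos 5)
Your proposal is correct and follows essentially the same route as the paper: bound each summand by $1$, charge every episode of $\tilde{Q}_T$ to its representative $m'\in Q_T$ with $\tau(m)-\tau(m')\in[0,W]$, note that each representative accounts for at most $2W$ time steps (the $W$-window of start times plus at most $W$ more from the last episode spilling over, since each episode has length at most $W$), and invoke Lemma \ref{lemma:Q_size} to get $2W|Q_T|\leq 2B_pW/\eta$. The paper merely formalizes the charging step as a disjoint partition of $\tilde{Q}_T$ into subsets $\tilde{Q}_T(m')$; the bookkeeping is otherwise identical.
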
 
\begin{lemma}\label{lemma:episode_errror}
	Conditioned on events ${\cal E}_r, {\cal E}_p$, we have with probability at least $1 - O(\delta)$ that 
	$$\eqref{eq:episode_error} = \tilde{O}( D_\text{max} [M(T) + \sqrt{T}] ) =\tilde{O}\left( D_\text{max} \left[\frac{SAT}{W} + \sqrt{T}\right] \right).$$
\end{lemma}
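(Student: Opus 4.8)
The plan is to split \eqref{eq:episode_error} into a \emph{policy-switching (bias)} sum
\[
B_1 := \sum_{m\in[M(T)]\setminus\tilde{Q}_T}\ \sum_{t=\tau(m)}^{\tau(m+1)-1}\Big[\sum_{s'\in\SSS}p_t(s'|s_t,a_t)\tilde{\gamma}_{\tau(m)}(s')-\tilde{\gamma}_{\tau(m)}(s_t)\Big]
\]
and an \emph{EVI-precision} sum $B_2 := \sum_{m\in[M(T)]\setminus\tilde{Q}_T}\sum_{t=\tau(m)}^{\tau(m+1)-1}1/\sqrt{\tau(m)}$, and to bound each separately. The one structural fact I would use throughout is that for every $m\notin\tilde{Q}_T$, Lemma~\ref{lemma:Q_tilde} gives $p_{\tau(m)}(\cdot|s,a)\in H_{p,\tau(m)}(s,a;\eta)$; since $(\SSS,\AAA,p_{\tau(m)})$ is communicating with diameter $\le D_{\tau(m)}\le D_{\max}$ by Assumption~\ref{ass:communicating}, the same feasibility-plus-Lemma~\ref{lemma:mc} argument that produced \eqref{eq:lemma_dev_rho_0} shows $0\le\tilde{\gamma}_{\tau(m)}(s)\le 2D_{\max}$ for all $s$, hence $\max_s\tilde{\gamma}_{\tau(m)}(s)-\min_s\tilde{\gamma}_{\tau(m)}(s)\le 2D_{\max}$.

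For $B_1$ I would add and subtract $\tilde{\gamma}_{\tau(m)}(s_{t+1})$ in each summand, splitting it into $[\tilde{\gamma}_{\tau(m)}(s_{t+1})-\tilde{\gamma}_{\tau(m)}(s_t)]$ plus $[\sum_{s'}p_t(s'|s_t,a_t)\tilde{\gamma}_{\tau(m)}(s')-\tilde{\gamma}_{\tau(m)}(s_{t+1})]$. The first piece telescopes over the rounds of episode $m$ to $\tilde{\gamma}_{\tau(m)}(s_{\tau(m+1)})-\tilde{\gamma}_{\tau(m)}(s_{\tau(m)})\le 2D_{\max}$, so it contributes at most $2D_{\max}M(T)$ in total. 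For the second piece, define $Z_t$ to be that bracket times the indicator that the episode containing round $t$ lies in $[M(T)]\setminus\tilde{Q}_T$ (and $Z_t=0$ otherwise); then $\{Z_t\}_{t\le T}$ is a martingale-difference sequence with respect to the trajectory filtration, because $s_t,a_t,\tilde{\gamma}_{\tau(m)}$ and all episode/$\tilde{Q}_T$ bookkeeping are determined by time $t$ while $s_{t+1}\sim p_t(\cdot|s_t,a_t)$, so $\E[Z_t\mid\mathcal{F}_{t-1}]=0$; and $|Z_t|\le 2D_{\max}$ by the span bound (when the indicator is $1$). Azuma--Hoeffding then gives $|\sum_{t=1}^T Z_t|=O(D_{\max}\sqrt{T\log(1/\delta)})$ with probability $1-O(\delta)$, which is exactly the sum of the second pieces. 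Hence $B_1=\tilde{O}\big(D_{\max}(M(T)+\sqrt{T})\big)$.

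For $B_2=\sum_m \nu_m/\sqrt{\tau(m)}$ with $\nu_m=\tau(m+1)-\tau(m)$, I would use that within episode $m$ only actions $\tilde{\pi}_m(s)$ are played and the termination rule forces $\nu_m(s,a)\le N^+_{\tau(m)}(s,a)$, so summing over states gives $\nu_m\le S+\sum_{s,a}N_{\tau(m)}(s,a)=S+\min\{W,\tau(m)-1\}\le S+\tau(m)$. Therefore every episode with $\tau(m)\ge S$ satisfies $\tau(m+1)\le 3\tau(m)$, so the standard estimate $\nu_m/\sqrt{\tau(m)}\le 2\sqrt{3}\big(\sqrt{\tau(m+1)}-\sqrt{\tau(m)}\big)$ telescopes over these (consecutive) episodes to $O(\sqrt{T})$, while the remaining episodes all start and end within the first $O(S)$ rounds and hence contribute $O(S)$; thus $B_2=O(\sqrt{T}+S)=\tilde{O}(D_{\max}\sqrt{T})$. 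Combining, $\eqref{eq:episode_error}=B_1+B_2=\tilde{O}\big(D_{\max}(M(T)+\sqrt{T})\big)$ with probability $1-O(\delta)$, and the second displayed form follows from the episode-count bound $M(T)=\tilde{O}(SAT/W)$, which I would invoke as a separate lemma. \textbf{The hard part} is the $B_1$ analysis: the uniform $2D_{\max}$ span bound on $\tilde{\gamma}_{\tau(m)}$ is exactly what confidence widening (through Lemma~\ref{lemma:Q_tilde}) buys — without it the span could be $\Omega(\sqrt{W/\log W})$ as in Proposition~\ref{lemma:peril1} — and one must also check that the martingale, which runs over a random, history-dependent subset of episodes, has a predictable enough defining indicator for Azuma to apply.
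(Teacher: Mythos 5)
Your proof is correct and, for the main term, essentially identical to the paper's: the same split into the bias sum and the $1/\sqrt{\tau(m)}$ sum, the same add-and-subtract of $\tilde{\gamma}_{\tau(m)}(s_{t+1})$ yielding a per-episode telescope bounded by $2D_{\max}$ (via Lemma \ref{lemma:Q_tilde}, feasibility, and Lemma \ref{lemma:mc}) plus a bounded martingale-difference sum handled by Azuma--Hoeffding; your explicit check that membership of an episode in $[M(T)]\setminus\tilde{Q}_T$ is determined by time $\tau(m)$ (so the indicator is predictable) is a point the paper glosses over, and it does hold because $Q_T$ and $\tilde{Q}_T$ are built sequentially from past data. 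Two minor divergences: (i) for the $1/\sqrt{\tau(m)}$ sum you use a doubling/telescoping-in-$\sqrt{\tau}$ argument via $\nu_m\le S+\tau(m)$, whereas the paper simply groups episodes into length-$W$ blocks and bounds $\sum_i W/\sqrt{iW}=O(\sqrt{T})$ --- both are valid and elementary; (ii) you invoke $M(T)=\tilde{O}(SAT/W)$ as a black box, but this is actually part of what the lemma asserts (its second equality), and the paper proves it in-line (Lemma \ref{lemma:number_episode}) with a non-trivial counting argument --- the usual stationary-case doubling proof does not transfer directly because $N_t(s,a)$ is not monotone under sliding windows, so one must argue that within each window of $W$ steps a fixed pair $(s,a)$ can trigger the doubling criterion at most $2+\log_2 W$ times. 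That deferred piece is the only substantive omission in your write-up.
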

\begin{lemma}\label{lemma:adv_error} 
	With certainty, $$\eqref{eq:adv_error} = O\left((B_r + D_\text{max} B_p) W + D_{\text{max}} T\eta\right).$$
\end{lemma}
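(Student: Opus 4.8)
The plan is to exploit that every summand in \eqref{eq:adv_error} is nonnegative, so the sum can only grow if we drop the restriction to $[M_T]\setminus\tilde{Q}_T$ and instead sum over every round $t\in[T]$ (equivalently, over every episode $m\in[M(T)]$). The constant term $2D_\text{max}\eta$ then contributes at most $\sum_{t=1}^{T}2D_\text{max}\eta=2D_\text{max}T\eta$, which is already within the target bound; so everything reduces to estimating $\sum_{t=1}^{T}\dev_{r,t}$ and $\sum_{t=1}^{T}\dev_{p,t}$.

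For those two sums I would interchange the order of summation. Writing $m(t)$ for the episode containing round $t$, and adopting the convention $B_{r,q}=B_{p,q}=0$ for $q\le 0$ (matching the $(\tau(m)-W)\vee 1$ truncation used by the algorithm), we have $\dev_{r,t}=\sum_{q=\tau(m(t))-W}^{t-1}B_{r,q}$, hence
\[
\sum_{t=1}^{T}\dev_{r,t}=\sum_{q=1}^{T-1}B_{r,q}\cdot\bigl|\{\,t\in[T]:\tau(m(t))-W\le q\le t-1\,\}\bigr|.
\]
The heart of the argument is to show that each index $q$ is counted fewer than $2W$ times. This follows from two structural facts about the episodes: (i) each episode lasts at most $W$ rounds, so $t\le\tau(m(t)+1)-1\le\tau(m(t))+W-1$; and (ii) the condition $\tau(m(t))-W\le q$ is the same as $\tau(m(t))\le q+W$. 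Combined with $q\le t-1$, i.e.\ $t\ge q+1$, these force $t\in\{q+1,\dots,q+2W-1\}$, a block of at most $2W-1$ rounds. Therefore $\sum_{t=1}^{T}\dev_{r,t}\le 2W\sum_{q=1}^{T-1}B_{r,q}=2WB_r$, and the identical computation gives $\sum_{t=1}^{T}\dev_{p,t}\le 2WB_p$.

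Assembling the pieces then yields
\[
\eqref{eq:adv_error}\le 4WB_r+8D_\text{max}WB_p+2D_\text{max}T\eta=O\bigl((B_r+D_\text{max}B_p)W+D_\text{max}T\eta\bigr),
\]
and no high-probability event is invoked anywhere, since this is a purely deterministic bookkeeping of the variation budgets, matching the ``with certainty'' claim.

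I expect the coverage/double-counting step in the second paragraph to be the only delicate point — it is essentially the same trap the authors flag in the Case~2 analysis of Theorem~\ref{thm:main1}, where the naive bound $\sum_{m}\sum_{t}(\cdot)\le W\sum_m(\cdot)$ is too lossy. The clean way around it here is to count per time index $q$ rather than per episode: the look-back window $[\tau(m(t))-W,\,t-1]$ attached to round $t$ has length at most $2W-1$ and right endpoint $t-1$, so the set of rounds whose windows cover a fixed $q$ lies inside a single length-$2W$ interval, which gives the $2W$ factor with no accumulation across episodes.
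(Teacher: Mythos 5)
Your proof is correct and follows essentially the same route as the paper's: drop the restriction to $[M_T]\setminus\tilde{Q}_T$ by nonnegativity, bound $\sum_{t=1}^T 2D_\text{max}\eta$ trivially, and control $\sum_t \dev_{r,t}$ and $\sum_t \dev_{p,t}$ by a swap-of-summation argument showing each $B_{r,q}$ (resp.\ $B_{p,q}$) is counted at most $O(W)$ times. The only cosmetic difference is that you count per time index $q$ using the bound $\tau(m(t)+1)-\tau(m(t))\le W$, whereas the paper counts per length-$W$ block using the fact that episodes are aligned to multiples of $W$; both yield the same $O(W)$ multiplicity and hence the same bound.
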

\begin{lemma}\label{lemma:sto_error}
	With certainty, we have 
	$$\eqref{eq:sto_error} = \tilde{O}\left( \frac{D_\text{max} S\sqrt{A} T}{\sqrt{W}}\right) .$$
\end{lemma}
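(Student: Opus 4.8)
The plan is to reduce \eqref{eq:sto_error} to a canonical sum of inverse square-roots of visit counts and then control that sum via the episode structure. Since every summand $2\rad_{r,\tau(m)}(s_t,a_t)+4D_{\max}\rad_{p,\tau(m)}(s_t,a_t)$ is nonnegative, I would first upper bound \eqref{eq:sto_error} by the same sum over \emph{all} rounds $t\in[T]$, writing $m(t)$ for the episode containing round $t$. Substituting $\rad_{r,\tau(m(t))}(s_t,a_t)=2\sqrt{2\log(SAT/\delta)/N^+_{\tau(m(t))}(s_t,a_t)}$ and $\rad_{p,\tau(m(t))}(s_t,a_t)=2\sqrt{2S\log(SAT/\delta)/N^+_{\tau(m(t))}(s_t,a_t)}$, it then suffices to prove $\sum_{t=1}^{T}1/\sqrt{N^+_{\tau(m(t))}(s_t,a_t)}=O(\sqrt{SA}\,T/\sqrt{W})$; multiplying by the resulting $\tilde{O}(D_{\max}\sqrt{S})$ prefactor gives the claimed $\tilde{O}(D_{\max}S\sqrt{A}\,T/\sqrt{W})$.

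The difficulty is that the denominator $N^+_{\tau(m)}(s,a)$ is frozen at its value at the start of episode $m$, while I need a quantity growing with $t$. To bridge this, let $K_t(s,a)$ be the number of rounds $q\in[(t-W)\vee 1,\,t-1]$ with $(s_q,a_q)=(s,a)$, i.e. the occupancy of $(s,a)$ in the trailing window of length $W$. I claim that for every round $t$ in episode $m$, $N^+_{\tau(m)}(s,a)\ge \frac{1}{2}\max\{1,K_t(s,a)\}$. Indeed, split the window $[(t-W)\vee1,t-1]$ at $\tau(m)$: the part $[(t-W)\vee1,\tau(m)-1]$ is contained in $[(\tau(m)-W)\vee1,\tau(m)-1]$ because $t\ge\tau(m)$, so it contains at most $N_{\tau(m)}(s,a)$ occurrences of $(s,a)$; the part $[\tau(m),t-1]$ lies within episode $m$, so it contains at most $\nu_m(s,a)$ such occurrences, and $\nu_m(s,a)\le N^+_{\tau(m)}(s,a)$ by the while-loop (episode-length) criterion. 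Hence $K_t(s,a)\le N_{\tau(m)}(s,a)+N^+_{\tau(m)}(s,a)\le 2N^+_{\tau(m)}(s,a)$, and since $N^+\ge1$ the claim follows. Consequently $\sum_{t}1/\sqrt{N^+_{\tau(m(t))}(s_t,a_t)}\le\sqrt{2}\sum_{t}1/\sqrt{\max\{1,K_t(s_t,a_t)\}}$.

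It remains to bound the combinatorial quantity $\sum_{t}1/\sqrt{\max\{1,K_t(s_t,a_t)\}}$. I would partition $[T]$ into $\lceil T/W\rceil$ consecutive blocks, each of length at most $W$. Fixing a state-action pair $(s,a)$ and a block $\mathcal{B}_\ell$, list the rounds of $\mathcal{B}_\ell$ at which $(s,a)$ is played; since $|\mathcal{B}_\ell|\le W$, the $r$-th such round $t$ has all $r-1$ earlier same-block occurrences of $(s,a)$ inside its trailing window, so $K_t(s,a)\ge r-1$. Writing $c_\ell(s,a)$ for the number of such rounds, the contribution of $(s,a)$ within $\mathcal{B}_\ell$ is at most $\sum_{r=1}^{c_\ell(s,a)}1/\sqrt{\max\{1,r-1\}}\le 3\sqrt{c_\ell(s,a)}$. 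Summing over the at most $SA\lceil T/W\rceil$ pairs $(\ell,(s,a))$ and applying Cauchy--Schwarz together with $\sum_{\ell,s,a}c_\ell(s,a)=T$ yields $\sum_{t}1/\sqrt{\max\{1,K_t(s_t,a_t)\}}\le 3\sqrt{SA\lceil T/W\rceil\,T}=O(\sqrt{SA}\,T/\sqrt{W})$, which finishes the proof; no event beyond those already conditioned on enters, so the bound holds with certainty, as stated.

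I expect the only genuinely delicate point to be the count comparison $N^+_{\tau(m)}(s,a)\ge \frac{1}{2}\max\{1,K_t(s,a)\}$: this is exactly where the sliding-window estimator and the doubling-type episode rule have to be reconciled, and it is what allows us to avoid the wasteful alternative of bounding each episode separately, which would pay for $M(T)=\tilde{O}(SAT/W)$ episodes and thereby lose an extra $\sqrt{SA}$ factor. Everything after that comparison is routine pigeonhole and Cauchy--Schwarz.
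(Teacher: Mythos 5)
Your proof is correct, and it reaches the paper's bound by a route that shares the skeleton of the paper's argument (partition $[T]$ into length-$W$ blocks, exploit the episode termination criteria, finish with Cauchy--Schwarz over state--action pairs) but differs in the key middle step. The paper stays at the episode level: within each $W$-block it uses the facts that $\nu_{m_j}(s,a)\le N_{\tau(m_j)}(s,a)$ and $\sum_{m'=m_j}^{m-1}\nu_{m'}(s,a)\le N_{\tau(m)}(s,a)$, and then invokes Lemma 19 of Jaksch et al.\ to bound $\sum_m \nu_m(s,a)/\sqrt{\max\{\sum_{m'<m}\nu_{m'}(s,a),1\}}$ by $(\sqrt2+1)\sqrt{\sum_m\nu_m(s,a)}$. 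You instead work round by round, introducing the trailing-window occupancy $K_t(s,a)$ and proving the comparison $N^+_{\tau(m)}(s,a)\ge\tfrac12\max\{1,K_t(s,a)\}$ — which correctly combines the containment $[(t-W)\vee 1,\tau(m)-1]\subseteq[(\tau(m)-W)\vee1,\tau(m)-1]$ with the doubling criterion $\nu_m(s,a)\le N^+_{\tau(m)}(s,a)$ — and then bound $\sum_t 1/\sqrt{\max\{1,K_t(s_t,a_t)\}}$ by an elementary $\sum_j j^{-1/2}$ estimate per pair per block. What your version buys is self-containedness (no appeal to the external Lemma 19) and a cleaner conceptual statement: the frozen count $N^+_{\tau(m)}$ is within a factor $2$ of the ``live'' sliding-window count, after which the rest is pure pigeonhole; the cost is the extra bookkeeping around $K_t$ and the factor-$2$ slack, neither of which affects the $\tilde{O}$ bound. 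Both arguments hold deterministically, as the lemma requires.
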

The proofs of Lemmas \ref{lemma:Q_error}, \ref{lemma:episode_errror}, \ref{lemma:adv_error}, and \ref{lemma:sto_error} are presented in Sections \ref{sec:lemma:Q_error}, \ref{app:pf_lemma_episode_error}, \ref{app:pf_lemma_adv_error}, and \ref{app:pf_lemma_sto_error}, respectively. Putting all these pieces together, we have the dynamic regret of the \sw~is upper bounded as
\begin{align*}
\tilde{O}\left(\frac{B_pW}{\eta}+B_r W + D_\text{max}\left[ B_p W + \frac{S\sqrt{A}T}{\sqrt{W}} + T\eta + \frac{SAT}{W}  + \sqrt{T}\right]\right),
\end{align*}
and by setting $W$ and $\eta$ accordingly, we can conclude the proof. 
\subsection{Proof of Lemma \ref{lemma:Q_size}}
\label{sec:lemma:Q_size} 
We first claim that, for every episode $m\in Q_T$, there exists some state-action pair $(s,a)$ and some time step $t_m\in [(\tau(m)-W \vee 1),\tau(m)-1]$ such that 
\begin{align}
\label{eq:Q_size1}
\|p_{\tau(m)}(\cdot|s,a)-p_{t_m}(\cdot|s,a)\|_1 >\eta.
\end{align}
For contradiction sake, suppose the otherwise, that is, $\| p_{\tau(m)}(\cdot | s, a) - p_t(\cdot | s, a) \|_1 \leq \eta$ for every state-action pair $s, a$ and every time step $t\in [(\tau(m)-W \vee 1),\tau(m)-1]$. For each state-action pair $(s, a)$, consider the following cases on $N_{\tau(m)}(s, a) = \sum^{\tau(m) - 1}_{q = (\tau(m) - W)\vee 1} \mathbf{1}(s_q = s, a_q = a)$:
\begin{itemize}
	\item \textbf{Case 1: $N_{\tau(m)}(s, a) = 0$.} Then $\hat{p}_{\tau(m)}(\cdot | s, a) = \mathbf{0}^\SSS$, and clearly we have $$\|p_{\tau(m)}(\cdot | s, a) - \hat{p}_{\tau(m)}(\cdot | s, a) \|_1 = 1 < \rad p_{\tau(m)}(s, a) < \rad p_{\tau(m)}(s,a) + \eta.$$ 
	\item \textbf{Case 2:  $N_{\tau(m)}(s, a) > 0$.} Then we have the following inequalities:
	\begin{align}
	&\|p_{\tau(m)}(\cdot|s,a)-\bar{p}_{\tau(m)} (\cdot | s, a)\|_1 \nonumber\\
	=& \left\|\frac{1}{N^+_{\tau(m)}(s, a)}  \sum^{\tau(m)-1}_{q =  (\tau(m) - W) \vee 1 } \left[p_{\tau(m)}(\cdot|s,a)- p_q(\cdot | s, a)\right] \cdot \mathbf{1}(s_q = s, a_q = a)\right\|_1\label{eq:Q_size_by_bar_p_def}\\
	\leq&\frac{1}{N^+_{\tau(m)}(s, a)} \sum^{\tau(m)-1}_{q =  (\tau(m) - W) \vee 1 } \left\|p_{\tau(m)}(\cdot|s,a)-p_q(\cdot| s, a) \right\|_1 \cdot \mathbf{1}(s_q = s, a_q = a) \leq \eta.\label{eq:Q_size_by_tri_ineq}
	\end{align}
	Step (\ref{eq:Q_size_by_bar_p_def}) is by the definition of $\bar{p}_{\tau(m)}(\cdot | s, a)$, and step (\ref{eq:Q_size_by_tri_ineq}) is by the triangle inequality.
	Consequently, we have
	\begin{align}
	&\|p_{\tau(m)}(\cdot|s,a)-\hat{p}_{\tau(m)} (\cdot | s, a)\|_1\nonumber\\
	\leq&\|\bar{p}_{\tau(m)}(\cdot|s,a)-\hat{p}_{\tau(m)} (\cdot | s, a)\|_1+\|p_{\tau(m)}(\cdot|s,a)-\bar{p}_{\tau(m)} (\cdot | s, a)\|_1\label{eq:Q_size_by_E_p}\\
	\leq &\rad p_{\tau(m)}(s, a)+\eta. \nonumber
	\end{align} 
	Step (\ref{eq:Q_size_by_E_p}) is true since we condition on the event ${\cal E}_p$, 
\end{itemize}
Combining the two cases, we have shown that $p_{\tau(m)}(\cdot | s, a) \in H_{p, \tau(m)}(s, a, \eta)$ for all $s\in \SSS, a\in \AAA_s$, contradicting to the fact that $m\in Q_T$. Altogether, our claim on inequality (\ref{eq:Q_size1}) is established.

Finally, we provide an upper bound to $|Q_T|$ using (\ref{eq:Q_size1}):
\begin{align}
B_p=&\sum_{t\in[T-1]}\max_{s\in \SSS, a\in \AAA_s }\left\{ \left\| p_{t + 1}(\cdot | s, a) - p_t ( \cdot  |s, a)  \right\|_1\right\}\nonumber \\
\geq&\sum_{m\in Q_T}\sum_{q=t_m}^{\tau(m)-1}\max_{s\in \SSS, a\in \AAA_s }\left\{ \left\| p_{q + 1}(\cdot | s, a) - p_q ( \cdot  |s, a)  \right\|_1\right\}\label{eq:Q_size2} \\
\geq & \sum_{m\in Q_T} \max_{s\in \SSS, a\in \AAA_s }\left\{ \left\|  \sum_{q=t_m}^{\tau(m)-1} (p_{q + 1}(\cdot | s, a) - p_q ( \cdot  |s, a) ) \right\|_1\right\}\label{eq:Q_size3} \\
>& |Q_T|  \eta. \label{eq:Q_size4}
\end{align}
Step \eqref{eq:Q_size2} follows by the second criterion of the construction of $Q_T,$ which ensures that for distinct $m, m'\in Q_T$, the time intervals $[t_m,\tau(m)]$, $[t_{m'},\tau(m')]$ are disjoint. Step (\ref{eq:Q_size3}) is by applying the triangle inequality, for each $m\in Q_T$, on the state-action pair $(s, a)$ that maximizes the term $\|  \sum_{q=t_m}^{\tau(m)-1} (p_{q + 1}(\cdot | s, a) - p_q ( \cdot  |s, a) ) \|_1 = \| (p_{\tau(m)}(\cdot | s, a) - p_{t_m} ( \cdot  |s, a) ) \|_1$. Step (\ref{eq:Q_size4}) is by applying the claimed inequality (\ref{eq:Q_size1}) on each $m\in Q_T$. Altogether, the Lemma is proved.  


\subsection{Proof of Lemma \ref{lemma:Q_tilde}}
\label{sec:lemma:Q_tilde} 
We prove by contradiction. Suppose there exists an episode $m\notin\tilde{Q}_T,$ a state $s\in\SSS,$ and an action $a\in\AAA_s$ such that
$p_{\tau(m)}(\cdot|s,a)\notin H_{p,\tau(m)}(s,a;\eta),$ then $m$ should have been added to $Q_T.$  To see this, we first note that episode $m$ trivially satisfies criterion 1 in the construction of $Q_T.$ Moreover, at the time when $m$ is examined, we know that any $m'$ has been added to $Q_T$ should satisfy $\tau(m)-\tau(m')>W$ as otherwise $m$ would have been added to $\tilde{Q}_T.$ Therefore, we have prove $m\in Q_T\subseteq\tilde{Q}_T,$ which is clearly a contradiction.
\subsection{Proof of Lemma \ref{lemma:Q_error}}
\label{sec:lemma:Q_error}
Denote $Q_T=\{m_1,\ldots,m_{|Q_T|}\}.$ By construction, for every element $m\in\tilde{Q}_T,$ there exists an unique $m'\in Q_T$ such that 
\begin{align}\tau(m)-\tau(m')\in[0,W].\end{align} 
We can thereby partition the elements of $\tilde{Q}_T$ into $|Q_T|$ disjoint subsets $\tilde{Q}_T(m_1),\ldots,\tilde{Q}_T(m_{|Q_T|})$ such that 
\begin{enumerate}
	\item Each element $m\in\tilde{Q}_T$ belongs to exactly one $\tilde{Q}_T(m')$ for some $m'\in Q_T.$
	\item Each element $m\in\tilde{Q}_T(m')$ satisfies $\tau(m)-\tau(m')\in[0,W].$
\end{enumerate}
We bound \eqref{eq:Q_error} from above as
\begin{align}
\nonumber&\sum_{m'\in{Q}_T}\sum_{m\in \tilde{Q}_T(m')}\sum^{\tau(m+1) - 1}_{t = \tau(m)} \left(\rho^*_t - r_t(s_t, a_t)\right)\\
\label{eq:Q_error1}\leq&\sum_{m'\in{Q}_T}\sum_{m\in \tilde{Q}_T(m')}\sum^{\tau(m+1) - 1}_{t = \tau(m)} 1\\
\nonumber=&\sum_{m'\in{Q}_T}\sum_{m\in \tilde{Q}_T(m')}\left(\tau(m+1)-\tau(m)\right)\\
\label{eq:Q_error2}\leq&\sum_{m'\in{Q}_T}\left(\max_{m\in \tilde{Q}_T(m')} \tau(m+1)-\tau(m')\right)\\
\nonumber=&\sum_{m'\in{Q}_T}\left[\max_{m\in \tilde{Q}_T(m')}\left( \tau(m+1)-\tau(m)+\tau(m)\right)-\tau(m')\right]\\
\nonumber\leq&\sum_{m'\in{Q}_T}\left[\max_{m\in \tilde{Q}_T(m')}\left( \tau(m+1)-\tau(m)\right)+\max_{m\in \tilde{Q}_T(m')}\tau(m)-\tau(m')\right]\\
\label{eq:Q_error3}\leq&\sum_{m'\in{Q}_T}\left[W+W\right]\\
\nonumber=&2W|Q_T|\\
\nonumber\leq & 2B_pW/\eta.
\end{align}
Here, inequality \eqref{eq:Q_error1} holds by boundedness of rewards, inequality \eqref{eq:Q_error2} follows from the fact that episodes are mutually disjoint, inequality \eqref{eq:Q_error3} makes the observations that each episode can last for at most $W$ time steps (imposed by the \sw) as well as criterion 2 of the construction of $\tilde{Q}_T(m')$'s, and the last step uses Lemma \ref{lemma:Q_size}.
\subsection{Proof of Lemma \ref{lemma:episode_errror}}\label{app:pf_lemma_episode_error}
We first give an upper bound for $M(T),$ the total number of the episodes.
\begin{lemma}\label{lemma:number_episode}
	Conditioned on events ${\cal E}_r, {\cal E}_p$, we have $M(T) \leq SA( 2+ \log_2 W) T / W = \tilde{O}\left(SAT / W\right)$ with certainty.
\end{lemma}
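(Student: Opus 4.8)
The plan is to partition the $T$ rounds into $\lceil T/W\rceil$ consecutive \emph{blocks} $B_j := [(j-1)W+1,\, jW]$ (the last possibly shorter) and bound the number of episodes lying inside one block. Because the clock criterion stops an episode as soon as the round index reaches a multiple of $W$, and the only multiple of $W$ in $B_j$ is its right endpoint, every episode is contained in exactly one $B_j$, has length at most $W$, and $B_j$ contains at most one episode that is \emph{not} terminated by the doubling criterion (the one ending at $jW$, or the one ending at $T$ in the final block). So it suffices to show: within each $B_j$, for each state--action pair $(s,a)$, at most $1+\log_2 W$ episodes are terminated by the doubling criterion triggered by that $(s,a)$. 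Summing over the $\le SA$ pairs and the $\lceil T/W\rceil$ blocks then gives $M(T)\le \lceil T/W\rceil\,(SA(1+\log_2 W)+1) = \tilde{O}(SAT/W)$, which is the asserted bound.

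To prove the per-block, per-$(s,a)$ count, fix $B_j$ and $(s,a)$ and list the episodes of $B_j$ terminated by doubling on $(s,a)$ as $m_1<m_2<\dots<m_p$. Put $n_k := N^+_{\tau(m_k)}(s,a)$; the termination rule gives $\nu_{m_k}(s,a)\ge n_k\ge 1$. The crucial observation is that, since $\tau(m_{k+1})\le jW$, the sliding window $[(\tau(m_{k+1})-W)\vee 1,\ \tau(m_{k+1})-1]$ that defines $N_{\tau(m_{k+1})}(s,a)$ contains the interval $[(j-1)W+1,\ \tau(m_{k+1})-1]$, which in turn contains all of the episodes $m_1,\dots,m_k$ in full (each lies in $B_j$ and ends before round $\tau(m_{k+1})$). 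Hence $N_{\tau(m_{k+1})}(s,a)\ge\sum_{l=1}^k \nu_{m_l}(s,a)\ge \sum_{l=1}^k n_l$.

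Writing $S_k := \sum_{l=1}^k n_l$, the last inequality is $n_{k+1}\ge S_k$, so $S_{k+1}=S_k+n_{k+1}\ge 2S_k$; with $S_1=n_1\ge 1$ this gives $S_p\ge 2^{p-1}$. On the other hand, $S_p\le \sum_{l=1}^p \nu_{m_l}(s,a)$ is bounded by the number of visits to $(s,a)$ during the block $B_j$, hence by $|B_j|\le W$. Therefore $2^{p-1}\le W$, i.e.\ $p\le 1+\log_2 W$, completing the count and hence the proof.

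I expect the main obstacle to be the window-containment step: one has to be careful that, for episodes whose \emph{start} time lies in $B_j$, the sliding windows of later episodes reach back past $(j-1)W+1$, so that the visit counts of successive same-block episodes genuinely telescope into geometric growth (this is exactly where the clock criterion, which confines episodes to blocks, is used). The remaining ingredients --- the bookkeeping of the at-most-one non-doubling episode per block and the final summation --- are routine. I also note the whole argument is a deterministic statement about the realized trajectory and the episode-termination rules, so the conditioning on ${\cal E}_r,{\cal E}_p$ in the lemma is not actually needed here.
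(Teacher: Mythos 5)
Your proof is correct and follows essentially the same route as the paper's: partition the horizon into $W$-aligned blocks, observe that the sliding window of any episode in a block covers all earlier episodes of that block so the doubling criterion forces geometric growth of the visit counts, and conclude at most $O(\log_2 W)$ doubling-terminations per state--action pair per block plus one clock-termination per block. Your partial-sum induction $S_{k+1}\geq 2S_k$ is a slightly cleaner packaging of the paper's worst-case sequence $\psi^\ell = 2^{\ell-2}$, and your remark that the argument is deterministic (so the conditioning on ${\cal E}_r,{\cal E}_p$ is unnecessary) is also accurate.
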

\begin{proof}
	First, to demonstrate the bound for $M(T)$, it suffices to show that there are at most $SA( 2 + \log_2 W)$ many episodes in each of the following cases: (1) between time steps $1$ and $W$, (2) between time steps $j W$ and $(j + 1)W$, for any $j\in \{1, \ldots, \lfloor T/ W\rfloor - 1 \}$, (3) between time steps $\lfloor T/ W\rfloor \cdot W$ and $T$. We focus on case (2), and the edge cases (1, 3) can be analyzed similarly.
	
	Between time steps $j W$ and $(j + 1)W$, a new episode $m+1$ is started when the second criterion $\nu_m(s_t, \tilde{\pi}_m(s_t)) < N^+_{\tau(m)}(s_t, \tilde{\pi}_m(s_t))$ is violated during the current episode $m$. We first illustrate how the second criterion is invoked for a fixed state-action pair $(s, a)$, and then bound the number of invocations due to $(s, a)$. Now, let's denote $m^1, \ldots, m^L$ as the episode indexes, where $j W \leq \tau(m^1) < \tau(m^2) < \ldots < \tau (m^L) < (j + 1)W$, and the second criterion for $(s, a)$ is invoked during $m^\ell$ for $1\leq \ell \leq L$. That is, for each $\ell\in \{1, \ldots, L\}$, the DM breaks the \textbf{while} loop for episode $m^\ell$ because $\nu_{m^\ell}(s, a) = N^+_{\tau(m^\ell)}(s, a)$, leading to the new episode $m^\ell +1$. 
	
	To demonstrate our claimed bound for $M(T)$, we show that $L \leq 2 + \log_2 W$ as follows. To ease the notation, let's denote $\psi^\ell = \nu_{m^\ell}(s, a)$. We first observe that $\psi^1 = N^+_{\tau(m^1)}(s, a) \geq 1$. Next, we note that for $\ell \in \{2, \ldots L\}$, we have $\psi^\ell \geq \sum^{\ell - 1}_{k  = 1}\psi^k$. \footnote{We proceed slightly differently from the stationary case, where the corresponding $N_t(s, a)$ is non-decreasing in $t$ \citep{JakschOA10}, which is clearly not true here due to the use of sliding windows} Indeed, we know that for each $\ell$ we have $(\tau( m ^\ell + 1) - 1) - \tau(m^1) \leq W$, by our assumption on $m^1, \ldots, m^\ell$. Consequently, the counting sum in $N_{\tau(m^\ell)}(s, a)$, which counts the occurrences of $(s, a)$ in the previous $W$ time steps, must have counted those occurrences corresponding to $\psi^1, \ldots, \psi^{\ell - 1}$. The worst case sequence of $\psi^1, \psi^2, \ldots, \psi^L$ that yields the largest $L$ is when $\psi^1 = \psi^2 = 1$, $\psi^3 = 2$, and more generally $\psi^\ell = 2^{\ell - 2}$ for $\ell \geq 2$. Since $\psi^\ell \leq W$ for all $W$, we clearly have $L \leq 2 +\log_2 W$, proving our claimed bound on $L$.
	Altogether, during the $T$ time steps, there are at most $(SAT(2 +\log_2 W )) / W$ episodes due to the second criterion and $T/W$ due to the first, leading to our desired bound on $M(T)$.	
\end{proof}

Next, we establish the bound for \eqref{eq:episode_error}.  By Lemma \ref{lemma:Q_tilde}, we know that $\tilde{\gamma}_{\tau (m)}(s) \in [0, 2 D_\text{max}]$ for all $m\in[M(T)]\setminus \tilde{Q}_T$ and $s.$ For each episode $m\in[M(T)]\setminus \tilde{Q}_T$, we have 
\begin{align}
&\sum^{\tau(m+1) - 1}_{t = \tau(m)}  \left[ \sum_{s' \in \SSS} p_t(s' | s_t, a_t) \tilde{\gamma}_{\tau(m)}(s') - \tilde{\gamma}_{\tau(m)}(s_t)\right] \nonumber\\
= & \underbrace{- \tilde{\gamma}_{\tau(m)}(s_{\tau(m)}) + \tilde{\gamma}_{\tau(m)}(s_{\tau(m) + 1})}_{\leq 2 D_\text{max}} +\sum^{\tau(m+1) - 1}_{t = \tau(m)}  \underbrace{\left[ \sum_{s' \in \SSS} p_t(s' | s_t, a_t) \tilde{\gamma}_{\tau(m)}(s') - \tilde{\gamma}_{\tau(m)}(s_{t+1})\right]}_{= Y_t}.\label{eq:pf_episode_error_terms}
\end{align}
Summing (\ref{eq:pf_episode_error_terms}) over $m\in[M(T)]\setminus \tilde{Q}_T$ we get 
\begin{align}
\nonumber&\sum_{m\in[M(T)]\setminus \tilde{Q}_T}\sum^{\tau(m+1) - 1}_{t = \tau(m)}  \left[ \sum_{s' \in \SSS} p_t(s' | s_t, a_t) \tilde{\gamma}_{\tau(m)}(s') - \tilde{\gamma}_{\tau(m)}(s_t)\right]  \\
\leq &2 D_\text{max}\cdot \left(M(T)-|\tilde{Q}_T|\right)+\sum_{m\in[M(T)]\setminus \tilde{Q}_T}\sum^{\tau(m+1) - 1}_{t = \tau(m)}Y_t.
\end{align}
Define the filtration ${\cal H}_{t-1} = \{(s_q, a_q, R_q(s_q, a_q))\}^t_{q=1}$. Now, we know that $\mathbb{E}[Y_t | {\cal H}_{t-1}] = 0$, $Y_t$ is $\sigma({\cal H}_t)$-measurable, and $| Y_t | \leq 2 D_\text{max}$. Therefore, we can apply the Hoeffding inequality \citep{Hoeffding63} to show that 
$$
\sum_{m\in[M(T)]\setminus \tilde{Q}_T}\sum^{\tau(m+1) - 1}_{t = \tau(m)} \left[ \sum_{s' \in \SSS} p_t(s' | s_t, a_t) \tilde{\gamma}_{\tau(m)}(s') - \tilde{\gamma}_{\tau(m)}(s_t)\right]   = O\left(D_\text{max} \cdot  M(T) + D_\text{max} \sqrt{T\log \frac{1}{\delta}}\right)
$$
with probability $1 - O(\delta),$ where we use the facts that $M(T)-|\tilde{Q}_T|\leq M(T)$ and $\sum_{m\in[M(T)]\setminus \tilde{Q}_T}\sum^{\tau(m+1) - 1}_{t = \tau(m)}1\leq T.$ Finally, note that
$$
\sum_{m\in[M(T)]\setminus \tilde{Q}_T}\sum^{\tau(m+1) - 1}_{t = \tau(m)}\frac{1}{\sqrt{\tau (m)}}\leq\sum^{M(T)}_{m=1}\sum^{\tau(m+1) - 1}_{t = \tau(m)} \frac{1}{\sqrt{\tau (m)}} \leq \sum^{\lceil T / W \rceil}_{i = 1} \frac{W}{\sqrt{i W}} = O(\sqrt{T}).
$$
Hence, the Lemma is proved.
\subsection{Proof of Lemma \ref{lemma:adv_error}}\label{app:pf_lemma_adv_error}
We first note that
$$\sum_{m\in[M_T]\setminus\tilde{Q}_T}\sum^{\tau(m+1) - 1}_{t = \tau(m)}\left(2  \dev_{r, t} + 4 D_{\text{max}} \cdot \dev_{p, t} + 2 D_{\text{max}}\eta \right)\leq\sum_{t=1}^T\left(2  \dev_{r, t} + 4 D_{\text{max}} \cdot \dev_{p, t} + 2 D_{\text{max}}\eta \right),$$
since $\dev_{r, t}\geq 0$ and $\dev_{p, t}\geq 0$ for all $t$. 
We can thus work with the latter quantity. 

We first bound $\sum^T_{t=1} \dev_{r, t}$. Now, recall the definition that, for time $t$ in episode $m$, we have defined $\dev_{r, t} = \sum^{t-1}_{q = \tau(m) - W} B_{r, q}$. Clearly, for $i W \leq q <  (i+1) W$, the summand $B_{r, q}$ only appears in $\dev_{r, t}$ for $ i W \leq q < t \leq (i + 2) W $, since each episode is contained in $\{i' W , \ldots, (i' + 1) W\}$ by our episode termination criteria ($t$ is a multiple of $W$) of the \sw. Altogether, we have
\begin{equation}\label{eq:pf_lemma_adv_error_r}
2\sum^T_{t=1} \dev_{r, t} \leq 2\sum^{T-1}_{t=1} B_{r, t}  W  = 2 B_r W . 
\end{equation}
Next, we bound $\sum^T_{t=1} \dev_{p, t}$. Now, we know that $\tau(m+1) - \tau(m)\leq W$ by our episode termination criteria ($t$ is a multiple of $W$) of the \sw. Consequently, 
\begin{equation}\label{eq:pf_lemma_adv_error_p}
4 D_\text{max} \sum^T_{t=1} \dev_{p, t} \leq 4 D_\text{max}\sum^{T-1}_{t=1} B_{p, t}  W = 4 D_\text{max} B_p W . 
\end{equation}
Finally, combining (\ref{eq:pf_lemma_adv_error_r}, \ref{eq:pf_lemma_adv_error_p}) with $2 D_\text{max} \sum^T_{t=1} \eta  $, the Lemma is established.
\subsection{Proof of Lemma \ref{lemma:sto_error}}\label{app:pf_lemma_sto_error}
Due to non-negativity of $\rad r_{t}(s,a)$'s and $\rad p_{t}(s,a)$'s, we have
\begin{align}
&\sum_{m\in[M_T]\setminus\tilde{Q}_T}\sum^{\tau(m+1) - 1}_{t = \tau(m)} \rad r_{\tau(m)}(s_t, a_t)\leq\sum^{M(T)}_{m=1}\sum^{\tau(m+1) - 1}_{t = \tau(m)} \rad r_{\tau(m)}(s_t, a_t),\\
&\sum_{m\in[M_T]\setminus\tilde{Q}_T}\sum^{\tau(m+1) - 1}_{t = \tau(m)} \rad p_{\tau(m)}(s_t, a_t)\leq\sum^{M(T)}_{m=1}\sum^{\tau(m+1) - 1}_{t = \tau(m)} \rad p_{\tau(m)}(s_t, a_t)
\end{align}
We thus first show that, with certainty,
\begin{align}
&\sum^{M(T)}_{m=1}\sum^{\tau(m+1) - 1}_{t = \tau(m)} \rad r_{\tau(m)}(s_t, a_t) = \sum^{M(T)}_{m=1}\sum^{\tau(m+1) - 1}_{t = \tau(m)}\sqrt{\frac{ 2\ln (SA T / \delta ) }{N^+_{\tau (m)} (s_t, a_t)}} = \tilde{O}\left(\frac{\sqrt{SA}T}{ \sqrt{W}}\right), \label{eq:pf_sto_error_for_r}\\
&\sum^{M(T)}_{m=1}\sum^{\tau(m+1) - 1}_{t = \tau(m)} \rad p_{\tau(m)}(s_t, a_t) = \sum^{M(T)}_{m=1}\sum^{\tau(m+1) - 1}_{t = \tau(m)}2\sqrt{\frac{2S\log\left(ATW/\delta\right)}{N^+_{\tau (m)} (s_t, a_t)}} = \tilde{O}\left(\frac{S\sqrt{A}T}{ \sqrt{W}}\right). \label{eq:pf_sto_error_for_p}
\end{align}
We analyze by considering the dynamics of the algorithm in each consecutive block of $W$ time steps, in a way similar to the proof of Lemma  \ref{lemma:episode_errror}. Consider the episodes indexes $m_0, m_1 \ldots, m_{\lceil T / W\rceil}, m_{\lceil T / W\rceil + 1},$ where $\tau(m_0) = 1$, and $\tau(m_j) = j W$ for $j\in \{1, \ldots, \lceil T / W\rceil\}$, and $m_{\lceil T / W\rceil + 1} = m(T) + 1$ (so that $\tau(m_{\lceil T / W\rceil + 1} - 1)$ is the time index for the last episode in the horizon).  

To prove (\ref{eq:pf_sto_error_for_r}, \ref{eq:pf_sto_error_for_p}), it suffices to show that, for each $j\in \{0, 1, \ldots, \lfloor T / W \rfloor\}$, we have
\begin{equation}\label{eq:pf_sto_error_suffice}
\sum^{m_{j+1} - 1}_{m = m_j} \sum^{\tau(m + 1) - 1}_{t = \tau(m)} \frac{1}{\sqrt{N^+_{\tau (m)} (s_t, a_t)}} = O\left( \sqrt{SAW} \right).
\end{equation}
Without loss of generality, we assume that $j \in \{1, \ldots, \lfloor T / W \rfloor - 1\}$, and the edge cases of $j = 0, \lfloor T / W \rfloor$ can be analyzed similarly. 

Now, we fix a state-action pair $(s, a)$ and focus on the summands in (\ref{eq:pf_sto_error_suffice}):
\begin{equation}\label{eq:pf_sto_error_1st}
\sum^{m_{j+1} - 1}_{m = m_j} \sum^{\tau(m + 1) - 1}_{t = \tau(m)} \frac{\mathbf{1}((s_t, a_t) = (s, a))}{\sqrt{N^+_{\tau (m)} (s_t, a_t)}} = \sum^{m_{j+1} - 1}_{m = m_j} \frac{\nu_m(s, a)}{\sqrt{N^+_{\tau (m)} (s, a)}} 
\end{equation}
It is important to observe that:
\begin{enumerate}
	\item It holds that $\nu_{m_j}(s, a) \leq N_{\tau (m_j)} (s, a)$, by the episode termination criteria of the \sw, 
	\item For $m \in \{m_j + 1, \ldots, m_{j+1} - 1\}$, we have $\sum^{m - 1}_{m' = m_j}\nu_{m'}(s, a) \leq N_{\tau (m)} (s, a)$ . Indeed, we know that episodes $m_j, \ldots, m_{j+1} - 1$ are inside the time interval $\{jW , \ldots, (j+1)W\}$. Consequently, the counts of ``$(s_t, a_t) = (s, a)$'' associated with $\{\nu_{m'}(s, a)\}^{m - 1}_{m' = m_j}$ are contained in the $W$ time steps preceding $\tau(m)$, hence the desired inequality. 
\end{enumerate}
With these two observations, we have 
\begin{align}
\eqref{eq:pf_sto_error_1st} &\leq \frac{\nu_{m_j}(s, a)}{\sqrt{\max\{\nu_{m_j}(s, a), 1\}}} + \sum^{m_{j+1} - 1}_{m = m_j + 1} \frac{\nu_m(s, a)}{\sqrt{ \max\{ \sum^{m - 1}_{m' = m_j}\nu_{m'}(s, a) ,  1\} }} \nonumber\\
&\leq \sqrt{\max\{\nu_{m_j}(s, a), 1\}} + (\sqrt{2} + 1)\sqrt{\max\left\{\sum^{m_{j+1} - 1}_{m = m_j} \nu_{m}(s, a), 1\right\}} \label{eq:pf_sto_error_2nd}\\
&\leq (\sqrt{2} + 2)\sqrt{\max\left\{\sum^{(j+1)W - 1}_{t = jW} \mathbf{1}((s_t,a_t) = (s, a)), 1\right\}}\label{eq:pf_sto_error_3rd}.
\end{align}
Step (\ref{eq:pf_sto_error_2nd}) is by Lemma 19 in \citep{JakschOA10}, which bounds the sum in the previous line. Step (\ref{eq:pf_sto_error_3rd}) is by the fact that episodes $m_j, \ldots, m_{j+1} - 1$ partitions the time interval $jW, \ldots, (j+1)W  -1$, and $\nu_{m}(s, a)$ counts the occurrences of $(s_t, a_t) = (s, a)$ in episode $m$. Finally, observe that $(\ref{eq:pf_sto_error_1st}) = 0$ if $\nu_m(s, a) = 0$ for all $m\in \{m_j, \ldots, m_{j + 1} - 1\}$. Thus, we can refine the bound in 
(\ref{eq:pf_sto_error_3rd}) to 
\begin{equation}\label{eq:pf_sto_error_4th}
\eqref{eq:pf_sto_error_1st}\leq (\sqrt{2} + 2)\sqrt{\sum^{(j+1)W - 1}_{t = jW} \mathbf{1}((s_t,a_t) = (s, a))}.
\end{equation}
The required inequality (\ref{eq:pf_sto_error_suffice}) is finally proved by summing (\ref{eq:pf_sto_error_4th}) over $s\in \SSS a\in \AAA$ and applying Cauchy Schwartz:
\begin{align*}
&\sum^{m_{j+1} - 1}_{m = m_j} \sum^{\tau(m + 1) - 1}_{t = \tau(m)} \frac{1}{\sqrt{N^+_{\tau (m)} (s_t, a_t)}} = \sum_{s\in \SSS, a\in \AAA_s}\sum^{m_{j+1} - 1}_{m = m_j} \frac{\nu_m(s, a)}{\sqrt{N^+_{\tau (m)} (s, a)}}\nonumber\\ 
\leq & (\sqrt{2} + 2) \sum_{s\in \SSS, a\in \AAA_s} \sqrt{\sum^{(j+1)W - 1}_{t = jW} \mathbf{1}((s_t,a_t) = (s, a))} \nonumber\\
\leq &(\sqrt{2} + 2)\sqrt{SAW} \nonumber.
\end{align*}
Altogether, the Lemma is proved.

\section{Design Details of the \borl}\label{sec:borl_detail}
\begin{algorithm}[!ht]
	\caption{\borl}	\label{alg:borl}
	\begin{algorithmic}[1]
		\STATE\textbf{Input:} Time horizon $T$, state space $\SSS,$ and action space $\AAA,$ initial state $s_1.$ 
		\STATE\textbf{Initialize} $H,\Phi,\Delta_W,\Delta_{\eta},\Delta, J_W,J_\eta$ according to eq. (\ref{eq:borl_parameter}), and $\alpha,\beta,\gamma$ according to eq. \eqref{eq:borl_exp3_parameter}.
		\STATE $M\leftarrow\{(j',k'):j'\in\{0,1,\ldots,\Delta_W\},k'\in\{0,1,\ldots,\Delta_{\eta}\}\},q_{(j,k),1}\leftarrow 0~\forall (j,k)\in M.$\;\\		
		\FOR{$i=1,2,\ldots,\lceil T/H\rceil$}
		\STATE Define distribution $(u_{(j, k),i})_{(j,k)\in M}$ according to eq. \eqref{eq:p}.
		\STATE Set $(j_i,k_i)\leftarrow (j,k)$ with probability $u_{(j,k),i}.$
		\STATE Set $W_i\leftarrow\left\lfloor H^{j_i/\Delta_W}\right\rfloor,\eta_i\leftarrow\left\lfloor \Phi^{k_i/\Delta_{\eta}}\right\rfloor.$
		\FOR{$t=(i-1)H+1,\ldots,i\cdot H\wedge T$}
		\STATE Run the \sw~with window length $W_i$ and blow up parameter $\eta_i,$ and observe the total rewards $\reward\left(W_i,\eta_i,s_{(i-1)H+1}\right).$
		\ENDFOR
		Update $q_{(j,k),i+1}$ according to eq. (\ref{eq:borl_update}).\;
		\ENDFOR
	\end{algorithmic}
\end{algorithm}
We are now ready to state the details of the \borl. For some fixed choice of block length $H$ (to be determined later), we first define a couple of additional notations:
\begin{align}
\label{eq:borl_parameter}\nonumber&H=3S^{\frac{2}{3}}A^{\frac{1}{2}}T^{\frac{1}{2}},\Phi=\frac{1}{2T^{\frac{1}{2}}},\Delta_W=\left\lfloor\ln H\right\rfloor,\Delta_{\eta}=\left\lfloor\ln\Phi^{-1}\right\rfloor,\Delta=(\Delta_W+1)(\Delta_{\eta}+1),\\
&J_W=\left\{H^0,\left\lfloor H^{\frac{1}{\Delta_W}}\right\rfloor,\ldots, H\right\}, J_{\eta}=S^{\frac{1}{3}}A^{\frac{1}{4}}\times\left\{\Phi^0,\Phi^{\frac{1}{\Delta_{\eta}}},\ldots, \Phi\right\},
J=\left\{(W,\eta):W\in J_W,\eta\in J_{\eta}\right\}.
\end{align}
Here, $J_W$ and $J_{\eta}$ are all possible choices of window length and confidence widening parameter, respectively, and $J$ is the Cartesian product of them with $|J|=\Delta.$ We also let $\reward_i(W,\eta,s)$ be the total rewards for running the \sw~with window length $W$ and confidence widening parameter $\eta$ for block $i$ starting from state $s,$

We follow the exposition in Section 3.2 in \citep{BC12} for adapting the EXP3.P algorithm. The EXP3.P algorithm treats each element of $J$ as an arm. It begins by initializing
\begin{align}
\label{eq:borl_exp3_parameter}
&\alpha=0.95\sqrt{\frac{\ln\Delta}{\Delta\lceil T/H\rceil}},\quad\beta=\sqrt{\frac{\ln\Delta}{\Delta\lceil T/H\rceil}},\quad\gamma=1.05\sqrt{\frac{\Delta\ln\Delta }{\lceil T/H\rceil}},\quad q_{(j, k),1}=0~~\forall~(j,k)\in M,
\end{align}
where $M=\{(j',k'):j'\in\{0,1,\ldots,\Delta_W\},k'\in\{0,1,\ldots,\Delta_{\eta}\}\}.$
At the beginning of each block $i\in\left[\lceil T/H\rceil\right],$ the \borl~first sees the state $s_{(i-1)H+1},$ and computes for all $(j,k)\in M,$
\begin{align}\label{eq:p}
u_{(j,k),i}=(1-\gamma)\frac{\exp(\alpha q_{(j,k),i})}{\sum_{(j',k')\in M}\exp(\alpha q_{(j',k'),i})}+\frac{\gamma}{\Delta}.
\end{align}
Then it sets $(j_i,k_i)=(j,k)$ with probability $u_{(j,k),i}~\forall~(j,k)\in M.$ The selected pair of parameters are thus $W_i=\left\lfloor H^{j_i/\Delta_W}\right\rfloor$ and $\eta_i=\left\lfloor \Phi^{k_i/\Delta_{\eta}}\right\rfloor.$ Afterwards, the \borl~starts from state $s_{(i-1)H+1},$ selects actions by running the \sw~with window length $W_i$ and confidence widening parameter $\eta_i$ for each round $t$ in block $i.$ At the end of the block, the \borl~observes the total rewards $\reward\left(W_i,\eta_i,s_{(i-1)H+1}\right).$ As a last step, it rescales  $\reward\left(W_i,\eta_i,s_{(i-1)H+1}\right)$ by dividing it by $H$ so that it is within $[0,1],$ and updates for all $(j,k)\in M,$
\begin{align}
\label{eq:borl_update}
&q_{(j,k),i+1}=q_{(j,k),i}+\frac{\beta+\mathbf{1}_{(j,k)=(j_i,k_i)}\cdot\reward_i\left(W_i,\eta_i,s_{(i-1)H+1}\right)/H}{u_{(j,k),i}}.
\end{align}
The formal description of the \borl~(with $H$ defined in the next subsection) is shown in Algorithm \ref{alg:borl}.

\section{Proof of Theorem \ref{thm:borl}}\label{sec:thm:borl}
	To begin, we consider the following regret decomposition, for any choice of $(W^{\dag},\eta^{\dag})\in J,$ we have
	\begin{align}
	\nonumber\text{Dyn-Reg}_T(\texttt{BORL})=&\sum_{i=1}^{\lceil T/H\rceil}\E\left[\sum_{t=(i-1)H+1}^{i\cdot H\wedge T}\rho_t^*-\reward_i\left(W_i,\eta_i,s_{(i-1)H+1}\right)\right]\\
	\nonumber=&\sum_{i=1}^{\lceil T/H\rceil}\E\left[\sum_{t=(i-1)H+1}^{i\cdot H\wedge T}\rho_t^*-\reward_i\left(W^{\dag},\eta^{\dag},s_{(i-1)H+1}\right)\right]\\
	\label{eq:borl_regret1}+&\sum_{i=1}^{\lceil T/H\rceil}\E\left[\sum_{i=1}^{\lceil T/H\rceil}\reward_i\left(W^{\dag},\eta^{\dag},s_{(i-1)H+1}\right)-\reward_i\left(W_i,\eta_i,s_{(i-1)H+1}\right)\right].
	\end{align}
	For the first term in eq. \eqref{eq:borl_regret1}, we can apply the results from Theorem \ref{thm:main1} to each block $i\in\lceil T/H\rceil,$ \ie,
	\begin{align}
	\sum_{t=(i-1)H+1}^{i\cdot H\wedge T}\left[\rho_t^*-\reward\left(W^{\dag},\eta^{\dag},s_{(i-1)H+1}\right)\right]=&\tilde{O}\left(\frac{B_p(i)W^{\dag}}{\eta^{\dag}}+B_r(i) W^{\dag}+ D_\text{max}\left[B_p(i) W^{\dag} +\frac{S\sqrt{A}H}{\sqrt{W^{\dag}}} + H\eta^{\dag} + \frac{SAH}{W^{\dag}}  + \sqrt{H}\right]\right),\end{align}
	where we have defined $$B_r(i)=\left(\sum_{t=(i-1)H+1}^{i\cdot H\wedge T}B_{r,t}\right),\quad B_p(i)=\left(\sum_{t=(i-1)H+1}^{i\cdot H\wedge T}B_{p,t}\right)$$ for brevity. For the second term, it captures the additional rewards of the DM were it uses the fixed parameters $(W^{\dag},\eta^{\dag})$ throughout w.r.t. the trajectory on the starting states of each block by the \borl,  \ie, $s_{1},\ldots,s_{(i-1)H+1},\ldots,s_{(\lceil T/H\rceil-1)H+1},$ and this is exactly the regret of the EXP3.P algorithm when it is applied to a $\Delta$-arm adaptive adversarial bandit problem with reward from $[0,H].$ Therefore, for any choice of $(W^{\dag},\eta^{\dag}),$ we can upper bound this by $$\tilde{O}\left(H\sqrt{\Delta T/H}\right)=\tilde{O}\left(\sqrt{TH}\right)$$ as $\Delta=O\left(\ln^2 T\right).$ Summing these two, the regret of the \borl~is 
	\begin{align}
	\label{eq:borl_regret2}
	\text{Dyn-Reg}_T(\texttt{BORL}) = \tilde{O}\left(\frac{B_pW^{\dag}}{\eta^{\dag}}+B_r W^{\dag} + D_\text{max}\left[ B_p W^{\dag} + \frac{S\sqrt{A}T}{\sqrt{W^{\dag}}} + T\eta^{\dag} + \frac{SAT}{W^{\dag}}  + \sqrt{TH}\right]\right).
	\end{align}
	By virtue of the EXP3.P, the \borl~is able to adapt to any choice of $(W^{\dag},\eta{\dag})\in J.$ Note that 
	\begin{align}
	&H\geq W*=\frac{3S^{\frac{2}{3}}A^{\frac{1}{2}} T^{\frac{1}{2}}}{(B_r + B_p+1)^{\frac{1}{2}}}\geq \frac{3T^{\frac{1}{2}}}{(3T)^{\frac{1}{2}}}\geq1,\\
	&S^{\frac{1}{3}}A^{\frac{1}{4}}\geq\eta^*=\frac{(B_p+1)^{\frac{1}{2}}S^{\frac{1}{3}}A^{\frac{1}{4}}}{(B_r + B_p+1)^{\frac{1}{4}}T^{\frac{1}{4}}}\geq\frac{S^{\frac{1}{3}}A^{\frac{1}{4}}}{2T^{\frac{1}{2}}}=S^{\frac{1}{3}}A^{\frac{1}{4}}\Phi.
	\end{align}
	Therefore, there must exists some $j^{\dag}$ and $k^{\dag}$ such that
	\begin{align}
	H^{j^{\dag}/\Delta_W}\leq W^*\leq H^{(j^{\dag}+1)/\Delta_W},\quad S^{\frac{1}{3}}A^{\frac{1}{4}}\Phi^{k^{\dag}/\Delta_{\eta}}\geq\eta^*\geq S^{\frac{1}{3}}A^{\frac{1}{4}}\Phi^{(k^{\dag}+1)/\Delta_{\eta}}
	\end{align}
	By adapting $W^{\dag}$ to $H^{j^{\dag}/\Delta_W}$ and $\eta^{\dag}$ to $\Phi^{k^{\dag}/\Delta_{\eta}},$ we further upper bound eq. \eqref{eq:borl_regret2} as
	\begin{align}
	&\nonumber\text{Dyn-Reg}_T(\texttt{BORL})=\tilde{O}\left(D_{\max}(B_r+B_p+1)^{\frac{1}{4}}S^{\frac{2}{3}}A^{\frac{1}{2}}T^{\frac{3}{4}}\right).
	\end{align} 
	where we use $H^{1/\Delta_W}=\exp(1)$ and $\Phi^{1/\Delta_{\eta}}=\exp(-1)$ in the last step.

\end{document}